\documentclass[pmlr,11pt]{jmlr}

\usepackage{amsmath,amssymb}
\usepackage{booktabs}
\hypersetup{
  pdftitle={Exact Risk Ratios for Weighted Data Selection in Linear Regression},
  pdfauthor={Guangjian Zhang}
}

\newcommand{\R}{\mathbb{R}}
\newcommand{\pos}{\operatorname{pos}}
\newcommand{\spn}{\operatorname{span}}
\newcommand{\supp}{\operatorname{supp}}
\newcommand{\rk}{\operatorname{rank}}
\newcommand{\ip}[2]{\langle #1,#2\rangle}
\newcommand{\norm}[1]{\lVert #1\rVert}
\newcommand{\Fw}{F_{\mathrm{w}}}
\newcommand{\Astar}{A^{\star}}
\newcommand{\Ls}{L_D^{\star}}
\newcommand{\Gam}{\Gamma}

\jmlrproceedings{arXiv preprint}{arXiv preprint}
\jmlryear{2026}

\title[Exact Risk Ratios for Weighted Data Selection]{Exact Risk Ratios for Weighted Data Selection in Linear Regression}

\author{\Name{Guangjian Zhang} \Email{zgj1226029469@outlook.com}\\
\addr University of New South Wales}

\begin{document}

\maketitle

\begin{abstract}%
Hanneke, Moran, Shlimovich and Yehudayoff (COLT 2025) posed the following open
problem. A selector sees a finite dataset $D\subseteq\R^d\times\R$, picks at
most $n$ examples together with nonnegative weights, and hands the weighted
least squares objective to the minimum-norm ERM. Writing $\Fw(d,n)$ for the
worst-case ratio between the loss of the returned predictor on all of $D$ and
the optimal loss, they proved $\Fw(d,n)=\infty$ for $n<d$, $\Fw(d,d)=d+1$ and
$\Fw(d,n)=1$ for $n\ge 2d$, and asked for the value in the open regime
$d<n<2d$. We determine this value in several cases. For every $d$ we prove
$\Fw(d,2d-1)=1+1/d$, which confirms a claim stated without proof in the
original note. We further prove $\Fw(3,4)=5/3$ and $\Fw(4,5)=2$, the two
smallest cells not covered by the endpoint formula. For every intermediate
budget $n=d+k$ we prove the lower bound $\Fw(d,d+k)\ge 1+\Gamma_{d,k}$, where
$\Gamma_{d,k}$ is an explicit harmonic quantity over balanced partitions, and
we show that this bound is the exact minimax value over the class of
datasets whose whitened gradient systems carry an orthogonal circuit-block
structure. All three exact values match
$1+\Gamma_{d,k}$, and we conjecture that equality holds throughout the open
regime. The upper bound proofs run on a common geometric spine: a rigidity
theorem for positive spanning configurations of loss gradients,
classifications and structural reductions of small positive bases in
$\R^3$ and $\R^4$, and a
dimension-free extremal-basis argument that converts sign-cone geometry into
five-point selections. We also give explicit counterexamples showing that
several shorter routes fail, and constructive polynomial-time selection
algorithms for all proved cases.
\end{abstract}

\begin{keywords}%
data selection, linear regression, empirical risk minimization, coresets,
positive bases, volume sampling
\end{keywords}

\section{Introduction}\label{sec:intro}

How well can a fixed learning rule perform on a full dataset when it is
trained on only a few examples chosen from it? \citet{HMSYerm} initiated a
systematic study of this data-selection question for empirical risk
minimizers, and \citet{HMSYopen} distilled the open cases into a set of
concrete problems. This paper addresses their Question~2, weighted data
selection for linear regression.

The setting is simple to state. A dataset $D$ is a finite multiset in
$\R^d\times\R$ with squared losses
$\ell_{(x,y)}(w)=(\ip{w}{x}-y)^2$. A selector, who sees all of $D$, picks at
most $n$ examples and a convex combination $F$ of their losses, and hands
$F$ to the min-norm ERM $\Astar$, which returns the minimizer of $F$ of
smallest Euclidean norm. The quantity of interest is the worst-case ratio
$\Fw(d,n)$ between the full-data loss of the returned predictor and the
optimal full-data loss; Section~\ref{sec:prelim} has the formal definitions.
\citet{HMSYerm} proved a sharp trichotomy: $\Fw(d,n)$ equals $\infty$ below
budget $d$, equals $d+1$ at budget $d$, and equals $1$ from budget $2d$
onward, where selection loses nothing at all. The two endpoint arguments
have different geometries, volume sampling at $n=d$ and Steinitz's
interior-point theorem at $n=2d$, and the whole regime $d<n<2d$ between
them was left open, with the value known only to lie in $(1,d+1]$ and to be
$O(d)$.

\subsection{Results}

We determine the value at the top of the open regime for every dimension,
and at the two smallest cells not covered by that formula.

\begin{itemize}
\item \textbf{Budget $2d-1$} (Section~\ref{sec:endpoint}):
$\Fw(d,2d-1)=1+\frac1d$ for all $d\ge1$. This includes
$\Fw(2,3)=\frac32$, the smallest open cell, and proves the value that
\citet{HMSYopen} state they believe but do not prove.
\item \textbf{Cell $(3,4)$} (Section~\ref{sec:fw34}):
$\Fw(3,4)=\frac53$. This refutes the natural interpolation guess
$1+(2d-n)^2/d$, which fits all previously known values and predicts
$\frac73$ here.
\item \textbf{Cell $(4,5)$} (Section~\ref{sec:fw45}): $\Fw(4,5)=2$.
\item \textbf{A lower bound everywhere} (Section~\ref{sec:lb}): for all
$1\le k\le d-1$,
\[
\Fw(d,d+k)\;\ge\;1+\Gam_{d,k},
\qquad
\Gam_{d,k}=\max_{k+1\le s\le d}\ \frac{s-k}{\min\sum_{j=1}^s 1/r_j},
\]
the inner minimum over positive integer partitions $r_1+\dots+r_s=d$,
attained balanced. All three exact values above equal $1+\Gam_{d,k}$.
\item \textbf{An exact model class} (Section~\ref{sec:block}): on datasets
whose whitened gradients split into orthogonal circuit blocks, the
worst-case excess at budget $d+k$ is exactly $\Gam_{d,k}$, as the value of
an explicit budget-allocation program. We conjecture
$\Fw(d,d+k)=1+\Gam_{d,k}$ throughout the open regime
(Section~\ref{sec:discussion}).
\end{itemize}

Along the way we prove several structural results that may have independent
interest: a rigidity theorem for gradient configurations one point below the
Steinitz budget (Lemma~\ref{lem:rig}), a classification of five-vector
positive bases of $\R^3$ (Theorem~\ref{thm:class5}), structural reductions
of the minimal positive bases of $\R^4$ of sizes $6$ and $7$ sufficient for
the risk bounds (Propositions~\ref{prop:B6} and~\ref{prop:B7}), and a
dimension-free extremal-basis principle converting sign-cone geometry into
selections (Lemma~\ref{lem:extremal}). Explicit counterexamples
(Section~\ref{sec:obstructions}) show that three shorter routes fail, which
we hope clarifies where the difficulty of the general case actually lives.
All upper bounds come with polynomial-time selection algorithms
(Section~\ref{sec:algo}).

\subsection{Techniques}

The endpoint theorem illustrates the method. After whitening, the full-data
optimum sits at $q=0$, the risk ratio is $1+\norm{q}^2$, and the loss
gradients at the optimum sum to zero. If some $2d-1$ gradients positively
span $\R^d$, a strictly positive zero-sum weighting turns them into a
strictly convex objective whose exact minimizer is $q=0$: selection is free.
The whole difficulty is the opposite case, and the rigidity lemma shows it
is extremely brittle: if no $2d-1$ gradients positively span, all gradients
lie on $d$ lines, the covariance identity forces the lines orthogonal with
unit energy each, and the problem collapses to $d$ one-dimensional
regressions, where leaving out one point on the lightest line costs exactly
the harmonic factor $\frac1d$. The matching lower bound is a translated
cross-polytope whose shift punishes the min-norm tie rule for dropping any
coordinate.

The two middle cells need more structure because the failure of positive
spanning no longer forces an axis configuration. The classification of
small positive bases replaces it: minimal positively spanning sets of
gradients decompose into few circuits, extra gradients are controlled by
low-rank circuits through minimal cone representations, and each resulting
geometry either yields an exact certificate, or splits into orthogonal
blocks, or is finished by a completion argument through quotient volume
sampling. The last and hardest configuration in $\R^4$, two coupled
triangles with two independent cross ears, which we call the rectangle,
resists all partition arguments; there the extremal-basis lemma calibrates
an interpolation basis on the four side rays and converts the maximal
calibrated evaluation into a five-point selection through sign-cone
geometry. The counterexamples of Section~\ref{sec:obstructions} show that
this detour is not an artifact of our proof: plane-energy bounds, anchor
disjunctions, and fixed-anchor generation all fail on explicit rectangle
instances.

\subsection{Related work}

The problem sits in the data-selection program of \citet{HMSYerm}, and is
the weighted relaxation studied there and in the coreset literature; see
the discussions and references in \citet{HMSYopen,HMSYerm}. The budget-$d$
endpoint rests on volume sampling \citep{DW17}, and the budget-$2d$
endpoint on Steinitz's theorem \citep{Steinitz1916}; our intermediate
results can be read as interpolating between these two geometries, which
the original authors anticipated would require new ideas. The theory of
positive spanning sets and positive bases goes back to
\citet{Davis1954,McKinney1962}; see \citet{Regis2016} for a modern survey,
\citet{Audet2011} for a short proof of the maximal-size theorem, and
\citet{Schoch2020} for general decomposition theorems. Our low-dimensional
structure results are tailored to the risk bounds and complement this
material. A recent unrefereed preprint on data selection has also appeared
\citep{Dewa2026}.

\section{Preliminaries}\label{sec:prelim}

\subsection{Setting and conventions}

A dataset $D=\{z_i=(x_i,y_i)\}_{i=1}^{N}$ is a finite multiset in
$\R^d\times\R$. For a linear predictor $w\in\R^d$ define
\[
\ell_z(w)=(\ip{w}{x}-y)^2,\qquad
L_D(w)=\frac1N\sum_{i=1}^{N}\ell_{z_i}(w),\qquad
\Ls=\min_{w}L_D(w).
\]
The learner $\Astar$ is the \emph{min-norm ERM}. Given a nonnegatively
weighted objective $F$, the set $\arg\min F$ is a translate of a linear
subspace, and $\Astar(F)$ is its unique element of smallest Euclidean norm.
Weighted selection with budget $n$ means choosing $z_1,\dots,z_n\in D$ and
$F\in\mathrm{conv}(\ell_{z_1},\dots,\ell_{z_n})$. Define
\[
\Ls(n;\Astar)=\inf_{z_1,\dots,z_n,\;F}L_D\bigl(\Astar(F)\bigr),
\qquad
\Fw(d,n)=\sup_{D}\frac{\Ls(n;\Astar)}{\Ls},
\]
with the conventions $0/0=1$ and $c/0=\infty$ for $c>0$; these conventions
appear in Theorem~8 of \citet{HMSYerm}. Repetitions are allowed and zero
weights are permitted, so we count selections by \emph{effective support}: a
repeated point merges its weights, a zero-weight point counts as unselected,
and an objective supported on fewer than $n$ points fills its remaining slots
with repetitions. Throughout, ``a selection of at most $n$ points'' refers to
effective support size.

\citet{HMSYopen,HMSYerm} determined three regimes. $\Fw(d,n)=\infty$ for
$n<d$, and the lower bound holds for every weakly continuous ERM.
$\Fw(d,d)=d+1$, with the upper bound via size-$d$ volume sampling
\citep{DW17}. $\Fw(d,n)=1$ for $n\ge 2d$, via Steinitz's interior version of
Carath\'eodory's theorem \citep{Steinitz1916}. For $d<n<2d$ the value is
known to lie in $(1,d+1]$ and to be $O(d)$ \citep[Section~4]{HMSYerm}, and
the regime is posed as Question~2 of \citet{HMSYopen}. The note adds:
``In the case $n=2d-1$, we believe we have an argument showing that the value
is exactly $1+\frac1d$, though the argument is somewhat ad hoc and does not
clearly extend to other values of $n$'' \citep[p.~3]{HMSYopen}.

We use the following form of Steinitz's theorem
\citep{Steinitz1916}. If a point lies in the interior of the convex hull of
a finite set $S\subseteq\R^m$, then it lies in the interior of the convex
hull of some subset of $S$ of size at most $2m$. The interior is taken in
$\R^m$; when we apply the theorem inside a subspace $V$, we first check that
the affine hull of $S$ is all of $V$, so that relative interior means
interior in the ambient space $V$, and the conclusion returns a subset whose
convex hull still has the point as a $V$-interior point. In particular the
affine hull of the returned subset is again $V$.

\subsection{Whitening}

\begin{lemma}[whitening]\label{lem:whiten}
Let $H=\frac1N\sum_i x_ix_i^{\top}\succ0$ and $\Ls>0$. Let $w^{\star}$ be the
unique full-data optimum and $r_i=y_i-\ip{w^{\star}}{x_i}$. Set
\[
\xi_i=H^{-1/2}x_i,\qquad
\eta_i=\frac{r_i}{\sqrt{\Ls}},\qquad
q=\frac{H^{1/2}(w-w^{\star})}{\sqrt{\Ls}}.
\]
Then
\begin{equation}\label{eq:threeid}
\frac1N\sum_i\xi_i\xi_i^{\top}=I_d,\qquad
\frac1N\sum_i\eta_i\xi_i=0,\qquad
\frac1N\sum_i\eta_i^2=1,
\end{equation}
and for every $w$,
\begin{equation}\label{eq:ratio}
\frac{L_D(w)}{\Ls}=1+\norm{q}^2 .
\end{equation}
\end{lemma}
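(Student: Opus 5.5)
The plan is to verify the three identities in \eqref{eq:threeid} directly from the normal equations, and then obtain \eqref{eq:ratio} by expanding the loss around $w^{\star}$, with the cross term killed by the second identity.

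First the setup. Since $H\succ0$, the objective $L_D$ is strictly convex, so the optimum $w^{\star}$ is unique. It is characterized by the normal equations $\nabla L_D(w^{\star})=0$, that is,
\[
\frac1N\sum_i x_i\bigl(\ip{w^{\star}}{x_i}-y_i\bigr)=0,
\qquad\text{equivalently}\qquad
\frac1N\sum_i r_ix_i=0 .
\]
At the optimum the loss is $\Ls=L_D(w^{\star})=\frac1N\sum_i r_i^2$, and this is positive by hypothesis, so dividing by $\sqrt{\Ls}$ is legitimate.

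Next the three identities.
\begin{itemize}
\item The first follows by conjugation: $\frac1N\sum_i\xi_i\xi_i^{\top}=H^{-1/2}\bigl(\frac1N\sum_i x_ix_i^{\top}\bigr)H^{-1/2}=H^{-1/2}HH^{-1/2}=I_d$.
\item The second is $\frac{1}{\sqrt{\Ls}}\,H^{-1/2}\bigl(\frac1N\sum_i r_ix_i\bigr)$, which vanishes by the normal equations.
\item The third is $\frac{1}{\Ls}\cdot\frac1N\sum_i r_i^2=1$.
\end{itemize}

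For the ratio, fix $w$ and write $\ip{w}{x_i}-y_i=\ip{w-w^{\star}}{x_i}-r_i$. Since $\ip{w-w^{\star}}{x_i}=\ip{H^{1/2}(w-w^{\star})}{H^{-1/2}x_i}=\sqrt{\Ls}\,\ip{q}{\xi_i}$, and $r_i=\sqrt{\Ls}\,\eta_i$, we get
\[
\frac{L_D(w)}{\Ls}=\frac1N\sum_i\bigl(\ip{q}{\xi_i}-\eta_i\bigr)^2
= q^{\top}\Bigl(\frac1N\sum_i\xi_i\xi_i^{\top}\Bigr)q-2\Bigl\langle q,\frac1N\sum_i\eta_i\xi_i\Bigr\rangle+\frac1N\sum_i\eta_i^2 .
\]
The three identities reduce the right-hand side to $\norm{q}^2-0+1$, which is \eqref{eq:ratio}.

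The argument is routine. The only step needing care is the use of the normal equations, since they are what make the cross term vanish. The symmetry of $H^{1/2}$ is also used when moving it across the inner product.
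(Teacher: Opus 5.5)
Your proof is correct and follows the paper's argument exactly. You get the covariance identity by conjugation, the cross-moment identity from the normal equation, and the energy identity from the definition of $\Ls$. You then rewrite each residual as $\sqrt{\Ls}\,(\ip{q}{\xi_i}-\eta_i)$ and expand the square, with the cross term vanishing by the second identity.
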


\begin{proof}
The first identity is $H^{-1/2}HH^{-1/2}=I_d$. The second is the normal
equation $\frac1N\sum_ir_ix_i=0$. The third is the definition of $\Ls$.
For \eqref{eq:ratio}, note $\ip{w}{x_i}-y_i=\sqrt{\Ls}\,(\ip{q}{\xi_i}-\eta_i)$
and expand the square; the cross term vanishes by the second identity.
\end{proof}

We call a finite family $\{(\xi_i,\eta_i)\}_{i=1}^{N}\subseteq\R^d\times\R$
satisfying \eqref{eq:threeid} a \emph{whitened system}. Write
$g_i=\eta_i\xi_i$; up to a common positive factor this is the gradient of
$\ell_i$ at $q=0$, and $\sum_ig_i=0$. Whitening is not an orthogonal map, so
it does not commute with the min-norm tie rule. We therefore use whitened
coordinates only to compute the risk ratio \eqref{eq:ratio} and to build
selections whose objectives are strictly convex; such objectives have a
unique minimizer and the tie rule never acts. All branches that could produce
rank-deficient objectives are handled separately, either in the original
coordinates or through finite penalty approximations.

A whitened system is itself a legal dataset: take $x_i=\xi_i$, $y_i=\eta_i$.
Then $H=I_d$, $w^{\star}=0$ and $\Ls=1$, and the Euclidean min-norm rule in
the original coordinates of this dataset is the min-norm rule in whitened
coordinates. We call this the \emph{self-realization} of the system, and use
it to transfer dataset-level statements to whitened systems without any
change of coordinates.

\begin{lemma}[exact certificate]\label{lem:cert}
Let $S$ be a set of indices with weights $\alpha_i>0$ such that
$\sum_{i\in S}\alpha_ig_i=0$ and $\spn\{\xi_i:i\in S\}=\R^d$. Then $q=0$ is
the unique minimizer of $\sum_{i\in S}\alpha_i(\ip{q}{\xi_i}-\eta_i)^2$, and
the corresponding selection has risk ratio $1$.
\end{lemma}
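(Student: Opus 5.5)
The plan is a direct convexity computation in whitened coordinates, followed by a transfer back to the original dataset. Set
\[
G(q)=\sum_{i\in S}\alpha_i(\ip{q}{\xi_i}-\eta_i)^2 ,
\]
which is a convex quadratic with Hessian $2M$, where $M=\sum_{i\in S}\alpha_i\xi_i\xi_i^{\top}$. First we show that $G$ is strictly convex. Each summand of $M$ is positive semidefinite and each $\alpha_i>0$, so $v^{\top}Mv=\sum_{i\in S}\alpha_i\ip{v}{\xi_i}^2=0$ forces $\ip{v}{\xi_i}=0$ for all $i\in S$. Since $\spn\{\xi_i:i\in S\}=\R^d$, this gives $v=0$. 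Hence $M\succ0$ and $G$ has a unique minimizer.

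Next we identify that minimizer. The gradient at the origin is
\[
\nabla G(0)=-2\sum_{i\in S}\alpha_i\eta_i\xi_i=-2\sum_{i\in S}\alpha_ig_i=0
\]
by hypothesis. So $q=0$ is a stationary point of a strictly convex function, and therefore it is the unique minimizer.

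The remaining step, which is the only point needing care, is to translate this into a statement about the selection in the original coordinates, where $\Astar$ acts. Whitening is not orthogonal, but that does not matter here. By the proof of Lemma~\ref{lem:whiten}, $\ip{w}{x_i}-y_i=\sqrt{\Ls}\,(\ip{q}{\xi_i}-\eta_i)$ under the affine bijection $w\mapsto q$. Consider the selection $F=\sum_{i\in S}\beta_i\ell_{z_i}$ with $\beta_i=\alpha_i/\sum_{j\in S}\alpha_j$, which lies in the convex hull of the selected losses. It satisfies $F(w)=c\,G(q(w))$ for a positive constant $c$.

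It follows that $\arg\min F$ is the single point $w^{\star}$, the preimage of $q=0$. Because the minimizer is unique, the min-norm tie rule is vacuous and $\Astar(F)=w^{\star}$. Then \eqref{eq:ratio} gives risk ratio $1+\norm{0}^2=1$. The hypotheses of Lemma~\ref{lem:whiten} ($H\succ0$, $\Ls>0$) are the standing assumptions under which whitened systems are defined. For a whitened system used as a dataset, the self-realization makes this transfer the identity.
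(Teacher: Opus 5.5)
Your proof is correct and follows the paper's argument: the gradient at $q=0$ is $-2\sum_{i\in S}\alpha_i g_i=0$, and the Hessian $2\sum_{i\in S}\alpha_i\xi_i\xi_i^{\top}$ is positive definite because the weights are positive and the features span. Your explicit transfer through the affine bijection $w\mapsto q$ to get $\Astar(F)=w^{\star}$ spells out what the paper covers with its standing remark that strictly convex whitened objectives make the min-norm tie rule irrelevant.
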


\begin{proof}
The gradient of the objective at $q=0$ is $-2\sum_{i\in S}\alpha_ig_i=0$.
The Hessian is $2\sum_{i\in S}\alpha_i\xi_i\xi_i^{\top}$, positive definite
because the weights are positive and the features span. Zero-gradient points
may be added to $S$ with any positive weights: their $g_i=0$ keeps the
stationarity, and extra features only help the Hessian.
\end{proof}

\begin{lemma}[volume sampling interface]\label{lem:L2}
Let $\{(\zeta_i,\eta_i)\}_{i=1}^{M}\subseteq\R^r\times\R$ have features
spanning $\R^r$, and let
$\rho(u)=\frac1M\sum_i(\ip{u}{\zeta_i}-\eta_i)^2$ with minimum $\rho^{\star}$.
Then there are $r$ indices with linearly independent features whose
interpolation point $\hat u$ (the unique solution of
$\ip{\hat u}{\zeta_i}=\eta_i$ on those indices) satisfies
$\rho(\hat u)\le(r+1)\rho^{\star}$.
\end{lemma}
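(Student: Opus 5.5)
The plan is the standard volume sampling argument of \citet{DW17}, run with sampling probabilities proportional to squared volume, followed by an averaging step. Write $Z\in\R^{M\times r}$ for the matrix with rows $\zeta_i^{\top}$ and $\eta\in\R^M$ for the label vector. For an $r$-subset $S$ with $\det Z_S\neq0$, let $\hat u_S=Z_S^{-1}\eta_S$ be the interpolation point. Consider the distribution $P(S)\propto\det(Z_S)^2$ on $r$-subsets. This is a genuine distribution: by Cauchy--Binet, $\sum_S\det(Z_S)^2=\det(Z^{\top}Z)>0$, because the features span $\R^r$. The goal is the identity
\[
\mathbb{E}_{S\sim P}\bigl[\,M\rho(\hat u_S)\,\bigr]=(r+1)\,M\rho^{\star}.
\]
Once this holds, some $S$ in the support of $P$ has $\rho(\hat u_S)\le(r+1)\rho^{\star}$. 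Every $S$ in the support has $\det Z_S\neq0$, so its features are linearly independent, which is exactly the conclusion of the lemma.

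The key steps, in order, are as follows.

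\emph{Step 1.} Express $M\rho(u)=\norm{Zu-\eta}^2$ as a determinant. For the augmented matrix $\tilde Z=[Z\mid\eta]\in\R^{M\times(r+1)}$, the Schur complement gives
\[
\det(\tilde Z^{\top}\tilde Z)=\det(Z^{\top}Z)\cdot M\rho^{\star}.
\]

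\emph{Step 2.} Apply Cauchy--Binet to the left side. This turns it into $\sum_T\det(\tilde Z_T)^2$, summed over $(r+1)$-subsets $T$ of rows.

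\emph{Step 3.} For a fixed $r$-subset $S$ with $\det Z_S\neq0$ and a row $j\notin S$, compute $\det\tilde Z_{S\cup\{j\}}$ by column operations. Subtracting $Z\hat u_S$ from the last column zeroes the last column on the rows in $S$, and leaves $\ip{\hat u_S}{\zeta_j}-\eta_j$ in row $j$. Hence
\[
\det(\tilde Z_{S\cup\{j\}})^2=\det(Z_S)^2\,(\ip{\hat u_S}{\zeta_j}-\eta_j)^2.
\]

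\emph{Step 4.} Sum over $S$ and $j$. Since $\hat u_S$ interpolates on $S$, the rows $j\in S$ contribute zero, so
\[
\sum_S\det(Z_S)^2\,M\rho(\hat u_S)=\sum_S\sum_{j\notin S}\det(\tilde Z_{S\cup\{j\}})^2.
\]
Each $(r+1)$-subset $T$ arises from exactly $r+1$ pairs $(S,j)$, namely one for each choice of $j\in T$. Pairs with $\det Z_S=0$ also contribute zero, by the same column expansion along the last column of $\tilde Z_T$. The double sum therefore equals $(r+1)\sum_T\det(\tilde Z_T)^2$. Dividing by $\det(Z^{\top}Z)$ yields the expectation identity.

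The main obstacle is Step 3 together with the bookkeeping for singular $Z_S$. When $\det Z_S=0$, expanding $\det\tilde Z_T$ along its last column gives $\sum_{j\in T}\pm\eta_j\det Z_{T\setminus j}$, and after subtracting $Z u$ in the last column, for any $u$, this becomes $\sum_{j\in T}\pm(\eta_j-\ip{u}{\zeta_j})\det Z_{T\setminus j}$. To verify the factor $r+1$ cleanly, I would not use the per-pair identity. Instead I would prove directly that
\[
\sum_{j\in T}\det(Z_{T\setminus j})^2\,(\ip{\hat u_{T\setminus j}}{\zeta_j}-\eta_j)^2=(r+1)\det(\tilde Z_T)^2,
\]
summing only over $j$ with $Z_{T\setminus j}$ nonsingular. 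Each such summand equals $\det(\tilde Z_T)^2$ by Step 3, and terms with singular $Z_{T\setminus j}$ have $\det(Z_{T\setminus j})^2=0$ and are dropped. One detail remains. If only some of the $j\in T$ give nonsingular $Z_{T\setminus j}$, the count is smaller than $r+1$, so the exact identity becomes the inequality $\le(r+1)\sum_T\det(\tilde Z_T)^2$. That inequality still suffices for the conclusion.
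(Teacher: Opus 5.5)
Your argument is correct and is essentially the paper's route: size-$r$ volume sampling charges only independent subsets, the expected loss is at most $(r+1)\rho^{\star}$, and some subset does at least as well as the expectation. The only difference is that the paper cites Theorem~5 of \citet{DW17}, while you reprove that theorem via the Schur complement and Cauchy--Binet, which is the argument of \citet{DW17} itself.

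One correction is needed. The exact identity $\mathbb{E}[M\rho(\hat u_S)]=(r+1)M\rho^{\star}$ that you set as the goal is false outside general position. So is the claim in Step~4 that pairs with singular $Z_S$ are harmless, so that the double sum equals $(r+1)\sum_T\det(\tilde Z_T)^2$. Take $r=1$, $\zeta=(0,1)$, $\eta=(1,0)$: then $\det\tilde Z_{\{1,2\}}=-1$ although $Z_{\{1\}}=0$, and the expectation is $M\rho^{\star}$ rather than $2M\rho^{\star}$. State only the inequality from your last paragraph. It suffices for the lemma and matches the paper's ``at most $(r+1)\rho^{\star}$, with equality in general position.''
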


\begin{proof}
The family is a legal dataset with features $\zeta_i$ and labels $\eta_i$.
By Theorem~5 of \citet{DW17}, under size-$r$ volume sampling, which
assigns to a subset $S$ probability proportional to $\det(Z_SZ_S^{\top})$,
the expected full-data loss of the least squares solution on $S$ is at most
$(r+1)\rho^{\star}$, with equality in general position. The distribution charges only subsets with independent
features, and on such subsets the least squares solution is the
interpolation point, independent of any weights. Some subset attains the
expectation or better. If $\rho^{\star}=0$, any independent subset
interpolates the realizable labels exactly.
\end{proof}

Note that in Lemma~\ref{lem:L2} the interpolation point does not depend on
weights, so the min-norm rule is irrelevant on these full-rank supports.

\subsection{Positive spanning sets}

A finite set $B\subseteq\R^m\setminus\{0\}$ \emph{positively spans} $\R^m$ if
$\pos(B)=\R^m$, where $\pos$ denotes the set of nonnegative combinations.
$B$ is a \emph{positive basis} if in addition no proper subset positively
spans. A \emph{positive circuit} is a finite set carrying a linear dependence
with all coefficients strictly positive, minimal with respect to support; a
positive circuit of size $m+1$ has rank $m$. The next lemma is classical:
the size bounds and the antipodal structure of maximal positive bases are
due to \citet{Davis1954} and \citet{McKinney1962}; see \citet{Audet2011}
for a short proof and \citet{Regis2016,Schoch2020} for general structure
theory.

\begin{lemma}\label{lem:pb}
Every positive basis $B$ of $\R^m$ satisfies $m+1\le|B|\le 2m$. If
$|B|=2m$ then there is a linear basis $a_1,\dots,a_m$ and positive scalars
$\lambda_j$ with $B=\{a_1,-\lambda_1a_1,\dots,a_m,-\lambda_ma_m\}$.
\end{lemma}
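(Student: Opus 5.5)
The plan is to prove the lower bound through a lifting argument, the upper bound by induction on $m$ using Steinitz's theorem, and the maximal case by the same induction tracking where the bound is tight.

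\textbf{Lower bound.} Fix $v\neq0$. Since $\pos(B)=\R^m$, we can write $-v=\sum_b\mu_bb$ with $\mu_b\ge0$. If all elements of $B$ were spanned by a proper subspace, $\pos(B)$ would lie in that subspace; hence $B$ spans $\R^m$. Now suppose $|B|=m$. Then $B$ is a linear basis. Write $-b_1=\sum_j c_jb_j$ with $c_j\ge0$. Comparing coefficients gives $(1+c_1)b_1+\sum_{j\ge2}c_jb_j=0$. Because $B$ is independent, every coefficient must vanish, but $1+c_1>0$, a contradiction. So $|B|\ge m+1$.

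\textbf{Upper bound.} I will use the standard reduction by induction on $m$. The case $m=1$ is immediate, since a positive basis of $\R$ is one positive and one negative number. For general $m$, pick $b\in B$ and set $B'=B\setminus\{b\}$. Minimality of $B$ gives $\pos(B')\neq\R^m$. Let $L$ be the lineality space $\pos(B')\cap-\pos(B')$.

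I will follow the Audet-style argument. Since $-b\in\pos(B)$, the relation $-b=\sum\mu_ib_i$ involves $b$'s coefficient $\mu_b<1$, which yields $-b\in\pos(B')$. Next, $\pos(B')$ is a closed convex cone that contains $-b$ and whose sum with $\pos(b)$ is all of $\R^m$.

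I then project $B'$ onto the quotient $\R^m/\spn(b)$, of dimension $m-1$. The projection positively spans the quotient. I extract a positive basis there, which has at most $2(m-1)$ elements by induction, and lift it back. Its lift, together with $b$ and one element representing $-b$, positively spans $\R^m$. Minimality of $B$ then forces $|B|\le 2(m-1)+2=2m$.

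This is the delicate step. One must check that the lifted family plus $b$ and a single representative of $-b$ positively spans, and that minimality passes to the count. I would try first to verify that any $v$ decomposes as a quotient combination plus a multiple of $b$ of either sign.

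\textbf{Maximal case.} The same induction should show that equality forces each step to be tight. That is, $-b$ must be a positive multiple of a single element $a'\in B$, and the quotient basis must be maximal, hence antipodal by induction. Lifting the antipodal pairs and using minimality again, I expect each lifted pair to be exactly antipodal; otherwise a redundant element appears. This gives $B=\{a_j,-\lambda_ja_j\}$.

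I expect this rigidity lifting to be the main obstacle. Since the statement is classical, the proof may instead simply cite \citet{Davis1954,McKinney1962,Audet2011}, and that is likely what the paper does.
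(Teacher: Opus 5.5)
The lower bound is correct. The upper bound and the equality case, however, rest on two claims that you explicitly leave unverified, and these carry all the content. The paper itself gives no proof; it only cites Davis, McKinney and Audet.

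\textbf{Upper bound.} The check you propose is to write every $v$ as $\sum_{s\in S}\alpha_s s+tb$ with $\alpha\ge0$ and $t\in\R$. That decomposition is trivial and says nothing about the case $t<0$, which is the whole issue. Moreover, ``an element representing $-b$'' need not exist literally: in $\{e_1,e_2,-e_1-e_2\}$ no element is a negative multiple of another. The missing idea is to lift relations and use minimality.
\begin{itemize}
\item A nonnegative relation $\sum_s\beta_s\pi(s)=0$ lifts to $\sum_s\beta_s s=t_0(\beta)\,b$. Minimality forbids $t_0(\beta)>0$, since then $b\in\pos(S)\subseteq\pos(B\setminus\{b\})$.
\item Take $\beta^{\ast}$ strictly positive (Lemma~\ref{lem:l345}(1)). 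If $t_0(\beta^{\ast})<0$, then $-b\in\pos(S)$ and $S\cup\{b\}$ already positively spans.
\item If $t_0(\beta^{\ast})=0$, perturb $\beta^{\ast}$ by an arbitrary relation $\gamma$ of $\pi(S)$. This gives $\pm t_0(\gamma)\le0$, so every relation of $\pi(S)$ lifts to a relation of $S$. Hence $\rk S=m-1$, the hyperplane $H=\spn S$ does not contain $b$, and $\pi|_H$ is an isomorphism. So $\pos(S)=H$, and $\pos(S\cup\{b\})$ is a closed half-space. Any $c\in B$ strictly beyond $H$ completes it.
\end{itemize}
Minimality then gives $B=S\cup\{b\}$ or $B=S\cup\{b,c\}$, so $|B|\le 2m$.

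You also announce Steinitz's theorem but never use it. In the form stated in the preliminaries it gives the bound at once: $\pos(B)=\R^m$ iff $0\in\operatorname{int}\operatorname{conv}(B)$, so some subset of size at most $2m$ positively spans, and minimality forces that subset to be $B$.

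\textbf{Equality case.} Here your dependencies are reversed. If $|B|=2m$, then $|B|\le|S|+2\le 2m$ forces the half-space alternative with $|S|=2m-2$. By induction $\pi(S)$ is antipodal, and since $\pi$ is injective on $H$, the pairs of $S$ are antipodal automatically, with no appeal to minimality.

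What does require proof is precisely what you list as a consequence of tightness: that the last element $c$ is a negative multiple of $b$.
\begin{itemize}
\item Write $c=-\mu b+\sum_j\gamma_ja_j$, with $\mu>0$ and the $a_j$ the basis of $H$ from the antipodal form of $S$.
\item Suppose some $\gamma_{j_0}\ne0$, and put $o_j=\mathrm{sgn}(\gamma_j)a_j$. Then $|\gamma_{j_0}|\,o_{j_0}=c+\mu b+\sum_{j\ne j_0}|\gamma_j|(-o_j)$.
\item Each $-o_j$ is a positive multiple of an element of $S$. So the element of $S$ on the ray of $o_{j_0}$ lies in the positive hull of the others, contradicting minimality.
\end{itemize}
This is exactly the trick of Lemma~\ref{lem:rig}.

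With these two arguments supplied, your induction becomes a complete self-contained proof. As written, it only asserts them.
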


\begin{lemma}\label{lem:l345}
Let $V=\{v_1,\dots,v_p\}$ positively span $\R^m$.
\begin{enumerate}
\item There are coefficients $\lambda_i>0$ with $\sum_i\lambda_iv_i=0$.
\item If $p=m+1$ then $V$ spans linearly, its kernel is one-dimensional, and
the kernel is generated by a strictly positive vector; so $V$ is a positive
circuit of rank $m$.
\item Let $K\subseteq\R^p$ be a linear subspace and let $\lambda$ generate an
extreme ray of $K\cap\R^p_{+}$. Then the columns indexed by
$S=\supp(\lambda)$ carry a one-dimensional dependence space; if $K$ is the
kernel of a matrix with columns $v_i$, the support $S$ is a positive circuit.
\end{enumerate}
\end{lemma}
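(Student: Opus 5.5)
I prove the three items in order, using only linear algebra and Farkas-type duality.

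\emph{Item 1.} Since $\pos(V)=\R^m$, the vector $-\sum_i v_i$ lies in $\pos(V)$. Write $-\sum_i v_i=\sum_i\mu_iv_i$ with $\mu_i\ge0$. Then $\sum_i(1+\mu_i)v_i=0$, and the coefficients $\lambda_i=1+\mu_i\ge1$ are strictly positive.

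\emph{Item 2.} Let $p=m+1$. Positive spanning implies linear spanning, so $V$ has rank $m$ and the kernel of the $m\times(m+1)$ matrix $[v_1\cdots v_{m+1}]$ is one-dimensional. By item 1 the kernel contains a strictly positive vector $\lambda$, so it is generated by $\lambda$. Minimality of support follows because every nonzero kernel element is a multiple of $\lambda$ and so has full support. Hence no proper subset carries a positive dependence, and $V$ is a positive circuit of rank $m$, matching the convention recalled before Lemma~\ref{lem:pb}.

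\emph{Item 3.} This is the step that needs care. Let $\lambda$ generate an extreme ray of the polyhedral cone $C=K\cap\R^p_{+}$, and set $S=\supp(\lambda)$. Consider the face $F=\{\mu\in C:\mu_i=0 \text{ for } i\notin S\}$ and the subspace $K_S=\{\mu\in K:\mu_i=0\ \forall i\notin S\}$.

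\begin{itemize}
\item Suppose $\dim K_S\ge2$. Pick $\nu\in K_S$ not proportional to $\lambda$.
\item Since $\lambda_i>0$ on $S$, both $\lambda\pm\varepsilon\nu$ lie in $C$ for small $\varepsilon>0$.
\item Then $\lambda=\tfrac12\bigl((\lambda+\varepsilon\nu)+(\lambda-\varepsilon\nu)\bigr)$ writes $\lambda$ as a sum of two elements of $C$ that are not on its ray. This contradicts extremality.
\end{itemize}

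So $K_S=\spn\{\lambda\}$. When $K=\ker[v_1\cdots v_p]$, the space $K_S$ is exactly the dependence space of the columns $\{v_i:i\in S\}$, so these columns carry a one-dimensional dependence space.

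Positivity and support-minimality then follow. The coefficients $\lambda_i$, $i\in S$, are strictly positive. Any dependence supported on a proper subset $T\subsetneq S$ would lie in $K_S=\spn\{\lambda\}$, but $\lambda$ has support exactly $S$, which is impossible. So $S$ is a positive circuit.

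The only subtle point is the perturbation argument. It uses that $\lambda$ is strictly positive on its own support, which holds by definition of $S$. No other obstacle is expected.
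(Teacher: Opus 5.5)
Your proof is correct and follows essentially the same route as the paper: item 2 combines the one-dimensional kernel with item 1, and item 3 uses the midpoint perturbation $\lambda\pm\varepsilon\nu$ to contradict extremality. The differences are cosmetic. In item 1 you represent the single vector $-\sum_i v_i$ instead of representing each $-v_i$ and summing. In item 3 you write out the support-minimality deduction that the paper leaves implicit.
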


\begin{proof}
(1) For each $i$ write $-v_i=\sum_j\beta_{ij}v_j$ with $\beta_{ij}\ge0$ and
sum over $i$; the combined identity $\sum_j(1+\sum_i\beta_{ij})v_j=0$ has all
coefficients at least $1$.
(2) The positive cone is contained in the linear span, so $V$ spans and the
kernel has dimension one; part (1) puts a strictly positive vector in it, so
the unique dependence is strictly positive and supported everywhere.
(3) If the dependence space on $S$ had dimension two, take $\mu$ in $K$
supported in $S$ and not proportional to $\lambda$. Since $\lambda$ is
strictly positive on $S$, both $\lambda\pm\varepsilon\mu$ lie in the cone for
small $\varepsilon>0$, and $\lambda$ is their midpoint, contradicting
extremality.
\end{proof}

We use one nonclassical rigidity statement. Its content is that failing to
positively span with one point below the Steinitz budget forces an axis
structure.

\begin{lemma}[rigidity]\label{lem:rig}
Let $G\subseteq\R^m\setminus\{0\}$ positively span $\R^m$, and suppose no
subset of $G$ of size at most $2m-1$ positively spans $\R^m$. Then there is a
linear basis $a_1,\dots,a_m$ such that every element of $G$ lies on one of
the lines $\R a_j$.
\end{lemma}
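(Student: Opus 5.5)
We would first extract a positive basis from $G$ and pin down its shape using Lemma~\ref{lem:pb}. Since $G$ is finite and positively spans $\R^m$, it contains an inclusion-minimal positively spanning subset $B\subseteq G$, and $B$ is a positive basis. Lemma~\ref{lem:pb} gives $|B|\le 2m$. The hypothesis rules out $|B|\le 2m-1$, so $|B|=2m$. The second half of Lemma~\ref{lem:pb} then gives a linear basis $a_1,\dots,a_m$ and scalars $\lambda_j>0$ with
\[
B=\{a_1,-\lambda_1a_1,\dots,a_m,-\lambda_ma_m\}.
\]
So the elements of $B$ already lie on the lines $\R a_j$. It remains to show that every other $g\in G$ lies on one of these lines.

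Take $g\in G\setminus B$. It is nonzero, so write $g=\sum_j c_ja_j$ and let $J=\{j:c_j\neq0\}\neq\emptyset$. We argue by contradiction, assuming $|J|\ge2$. For each $j\in J$, let $b_j\in B$ be the element on $\R a_j$ pointing opposite to $c_ja_j$: $b_j=a_j$ if $c_j<0$, and $b_j=-\lambda_ja_j$ if $c_j>0$. Then $-g$ is a strictly positive combination of $\{b_j\}_{j\in J}$, say $g+\sum_{j\in J}\mu_jb_j=0$ with all $\mu_j>0$. Now define
\[
S=\{g\}\cup\{b_j:j\in J\}\cup\{a_j,-\lambda_ja_j: j\notin J\}.
\]
Its size is $1+|J|+2(m-|J|)=2m+1-|J|\le 2m-1$.

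Next we check that $S$ positively spans $\R^m$. The set $\{g\}\cup\{b_j\}_{j\in J}$ carries a strictly positive dependence. The $b_j$ are linearly independent and span $U=\spn\{a_j:j\in J\}$. So by Lemma~\ref{lem:l345}(2), applied inside $U$ (it has $|J|+1=\dim U+1$ vectors), or directly by adding a large multiple of the positive dependence, its positive cone is all of $U$. The pairs $\pm$ along $a_j$ for $j\notin J$ give the complementary coordinate subspace. Hence $\pos(S)=\R^m$, which contradicts the hypothesis. So $|J|=1$, and $g$ lies on some line $\R a_j$.

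The argument is essentially routine once Lemma~\ref{lem:pb} is available. The only step needing care is the sign bookkeeping in choosing $b_j$, so that $-g$ lies in the open cone of the chosen $b_j$. This is exactly where the antipodal structure of the maximal positive basis is used: both orientations of each line are available in $B$. The step that carries the real content is the reduction to $|B|=2m$, where the hypothesis feeds into the classical characterization.
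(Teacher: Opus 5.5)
Your proof is correct and follows the paper's argument. You extract a minimal positive basis, use Lemma~\ref{lem:pb} to force $|B|=2m$ with antipodal structure, and, for a gradient $g$ with support $|J|\ge2$, take $g$ together with the opposite-pointing basis vectors on $J$ and the full pairs off $J$; this set has $2m+1-|J|$ elements and positively spans. One slip: Lemma~\ref{lem:l345}(2) runs in the wrong direction, since it assumes positive spanning and concludes the circuit property. Your alternative justification, adding a large multiple of the strictly positive relation, is valid and is equivalent to the paper's explicit identity $|c_j|o_j=g+\sum_{h\ne j}|c_h|r_h$.
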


\begin{proof}
Extract from $G$ an inclusion-minimal positively spanning subset $B$. By
hypothesis $|B|\ge 2m$ and by Lemma~\ref{lem:pb} $|B|\le 2m$, so $|B|=2m$ and
after positive rescaling $B=\{\pm a_1,\dots,\pm a_m\}$ for a linear basis
$(a_j)$. Take any $g\in G$ and write $g=\sum_{j\in S}c_ja_j$ with all
$c_j\ne0$. Suppose $|S|\ge2$. Put $o_j=\mathrm{sgn}(c_j)\,a_j$ and
$r_j=-o_j$ for $j\in S$, and set
\[
T=\{g\}\cup\{r_j:j\in S\}\cup\{\pm a_j:j\notin S\},
\qquad
|T|=1+|S|+2(m-|S|)=2m-|S|+1\le 2m-1 .
\]
For each $j\in S$,
$|c_j|\,o_j=g+\sum_{h\in S,\,h\ne j}|c_h|\,r_h\in\pos(T)$,
so $\pos(T)$ contains all $\pm a_j$ and hence equals $\R^m$. All elements of
$T$ lie in $G$ up to positive scaling, so this contradicts the hypothesis.
Hence $|S|=1$.
\end{proof}

Finally we record a quotient lemma used when passing from a positive basis to
a smaller one.

\begin{lemma}[quotient minimality]\label{lem:quot}
Let $B$ be a minimum-cardinality positively spanning subset of a set of
vectors in $\R^m$, let $C\subseteq B$ be a positive circuit with
$V=\spn(C)=\pos(C)$, and let $\pi:\R^m\to\R^m/V$ be the quotient map. Then
$\pi(b)\ne0$ for every $b\in B\setminus C$, and $\pi(B\setminus C)$ is a
positive basis of $\R^m/V$.
\end{lemma}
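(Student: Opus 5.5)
The plan is to prove both claims by the same exchange argument: any failure in the quotient would produce a strictly smaller positively spanning subset of $B$, contradicting the choice of $B$. The one fact about the circuit that we use is $V=\spn(C)=\pos(C)$. It lets us lift a statement modulo $V$ back to $\R^m$ at no cost, because every vector of $V$ is a nonnegative combination of elements of $C$. The argument in fact only needs $B$ to be inclusion-minimal, which is weaker than minimum cardinality.

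First, $\pi(B\setminus C)$ positively spans $\R^m/V$. Given $\bar x\in\R^m/V$, lift it to some $x$ and write $x=\sum_{b\in B}\mu_b b$ with $\mu_b\ge0$. Applying $\pi$ and using $\pi(C)=0$ gives $\bar x=\sum_{b\in B\setminus C}\mu_b\pi(b)$.

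Next, nonvanishing. Suppose $\pi(b)=0$ for some $b\in B\setminus C$. Then $b\in V=\pos(C)$, so $b$ is a nonnegative combination of elements of $C\subseteq B\setminus\{b\}$. Substituting this combination wherever $b$ occurs shows that $B\setminus\{b\}$ still positively spans $\R^m$. This contradicts the minimality of $B$.

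Finally, minimality in the quotient. Suppose a proper subset $\pi(B')$, with $B'\subsetneq B\setminus C$, positively spans $\R^m/V$. Take any $x\in\R^m$ and write $\pi(x)=\sum_{b\in B'}\lambda_b\pi(b)$ with $\lambda_b\ge0$. Then $x-\sum_{b\in B'}\lambda_b b\in V=\pos(C)$, so $x\in\pos(B'\cup C)$. Hence $B'\cup C$ is a positively spanning set strictly smaller than $B$, again a contradiction. Together these three steps show that $\pi(B\setminus C)$ is a positive basis of $\R^m/V$ with no zero elements.

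I expect no serious obstacle. The one point needing care is the lifting step, which uses $V=\pos(C)$ and not merely $V=\spn(C)$. This equality is exactly what being a positive circuit provides: a strictly positive dependence makes $\pos(C)$ a linear subspace, so it equals $\spn(C)$.
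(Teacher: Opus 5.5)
Your proof is correct and follows the paper's argument essentially step for step. Positive spanning comes from projecting a representation over $B$. Nonvanishing holds because $\pi(b)=0$ would put $b\in\pos(C)$ and make it removable. Quotient minimality comes from lifting a representation modulo $V$ and absorbing the error into $V=\pos(C)$. Phrasing the last step with an arbitrary proper subset $B'$ rather than a single removed element, and noting that inclusion-minimality suffices, are only cosmetic differences.
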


\begin{proof}
$\pi(B\setminus C)$ positively spans the quotient because $B$ positively
spans $\R^m$. If $\pi(b)=0$ then $b\in V=\pos(C)\subseteq\pos(B\setminus\{b\})$,
since $C\subseteq B\setminus\{b\}$; hence $B\setminus\{b\}$ still positively
spans, contradicting minimality. If some
$\pi(b)$ were removable in the quotient, then for every $x\in\R^m$ we could
write $\pi(x)$ as a nonnegative combination of
$\pi(B\setminus C\setminus\{b\})$, lift, and absorb the error in
$V=\pos(C)$; this exhibits $x\in\pos(B\setminus\{b\})$, again contradicting
minimality.
\end{proof}

\section{The budget $2d-1$}\label{sec:endpoint}

This section proves $\Fw(d,2d-1)=1+1/d$ for every $d\ge1$. The upper bound is
organized as an interface theorem about whitened systems; later sections call
it in dimensions $2$ and $3$.

\subsection{The whitened upper bound}

\begin{theorem}[whitened endpoint interface]\label{thm:wend}
Let $\{(\xi_i,\eta_i)\}_{i=1}^{N}$ be a whitened system in $\R^d$. Then there
exist at most $2d-1$ indices and strictly positive weights whose objective
$G(q)=\sum\alpha_i(\ip{q}{\xi_i}-\eta_i)^2$ is strictly convex with unique
minimizer $\hat q$ satisfying
\[
1+\norm{\hat q}^2\;\le\;1+\frac1d .
\]
\end{theorem}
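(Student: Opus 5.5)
The plan is a case split on the set $G=\{g_i: g_i\neq 0\}$ of nonzero gradients at $q=0$. Since $\sum_i g_i=0$ with all coefficients $1/N>0$, the cone $\pos(G)$ is the linear subspace $W=\spn(G)$; write $w=\dim W$. Throughout, $g_i\neq0$ forces $\eta_i\neq0$ and $\xi_i=g_i/\eta_i\in W$. In every branch I aim to build either an exact certificate via Lemma~\ref{lem:cert}, which gives ratio $1$, or an explicit strictly convex objective whose minimizer I compute by hand.

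\emph{Case 1: $w<d$.} By Lemma~\ref{lem:pb}, $G$ contains a positive basis $B$ of $W$ with $|B|\le 2w$, and Lemma~\ref{lem:l345}(1) gives strictly positive zero-sum weights on $B$. The features of $B$ span $W$. Because $\frac1N\sum\xi_i\xi_i^\top=I_d$, the full family of features spans $\R^d$. Hence I can add $d-w$ zero-gradient indices (these have $\eta_i=0$, since $g_i=0$ and $\xi_i\notin W$) whose features complete a basis modulo $W$. The total support is at most $2w+(d-w)=d+w\le 2d-1$, and Lemma~\ref{lem:cert} gives ratio $1$.

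\emph{Case 2: $w=d$ and some subset of $G$ of size at most $2d-1$ positively spans $\R^d$.} Again Lemma~\ref{lem:l345}(1) supplies positive zero-sum weights. The features span $\R^d$ because the gradients do, so Lemma~\ref{lem:cert} gives ratio $1$.

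\emph{Case 3: the remaining case.} Lemma~\ref{lem:rig} gives a basis $a_1,\dots,a_d$ with every element of $G$ on some line $\R a_j$. The positive basis extracted in that proof is $\{\pm a_j\}$ up to scaling, so each line carries gradients of both signs.

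First I dispose of zero-label points. Suppose some index has $\eta_0=0$ but $\xi_0\neq0$, and let $j$ be a coordinate with nonzero $a_j$-coefficient of $\xi_0$. For each $h\neq j$, take two indices on line $h$ with opposite gradients and weight them so their gradients cancel. Together with $\xi_0$ at any positive weight, these $2d-1$ points have zero gradient at $q=0$ and features spanning $\R^d$. Lemma~\ref{lem:cert} then gives ratio $1$.

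Otherwise every index with $\xi_i\neq0$ lies on a line. Write $\xi_i=\tau_i a_j$ for such an index, and set $T_j=\frac1N\sum_{i\in j}\tau_i^2$ and $E_j=\frac1N\sum_{i\in j}\eta_i^2$. The covariance identity becomes $\sum_j T_j a_ja_j^\top=I_d$. This forces the $a_j$ to be orthogonal, and after normalizing $\norm{a_j}=1$ it gives $T_j=1$. Since $\sum_jE_j\le1$, some line $j$ has $E_j\le 1/d$.

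The selection is then as follows. Take cancelling pairs on every line $h\neq j$; these force $\ip{q}{a_h}=0$ at the minimizer. Add one point $p$ on line $j$, which forces $\ip{q}{a_j}=\eta_p/\tau_p$. This uses $2d-1$ points, the objective is strictly convex, and $\norm{\hat q}^2=\eta_p^2/\tau_p^2$. Averaging with weights $\tau_p^2$ gives $\min_p \eta_p^2/\tau_p^2\le E_j/T_j=E_j\le 1/d$.

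The main obstacle is the orthogonality step in Case~3. The covariance identity only pins down the lines if stray features off the lines, from zero-label points, are excluded. That is exactly why the zero-label subcase must be removed first by its own certificate. Without that step the inequality runs the wrong way: one only gets $\sum_jT_ja_ja_j^\top\preceq I$, which does not bound $\norm{b_j}^2/T_j$ for the dual vectors $b_j$.
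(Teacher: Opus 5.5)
Your proof is correct and follows essentially the same route as the paper. You obtain exact certificates when the nonzero gradients span a proper subspace or admit a positively spanning subset of at most $2d-1$ elements; otherwise you apply the rigidity lemma, dispose of zero-label points by the anchor certificate, use the covariance identity to force orthogonal unit-energy lines, and take the best single point on the lightest line. The differences are minor: in the degenerate case you bound the certificate size by a minimal positive basis of $W$ and Lemma~\ref{lem:pb} where the paper invokes Steinitz's theorem, you fold $r=0$ into that case, and you read the cancelling pairs off the antipodal positive basis rather than the projected normal equation.
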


\begin{proof}
Write $g_i=\eta_i\xi_i$ and $V=\spn\{g_i:g_i\ne0\}$, $r=\dim V$. The features
span $\R^d$ by \eqref{eq:threeid}, and $\sum_ig_i=0$.

\emph{Case $r=0$.} All gradients vanish, so every point with $\xi_i\ne0$ has
$\eta_i=0$. Pick $d$ linearly independent features. Any positive weights give
a strictly convex objective with minimizer $q=0$ and value $1$. This uses $d$
points.

\emph{Case $1\le r\le d-1$.} Every nonzero-gradient point has
$\xi_i=g_i/\eta_i\in V$. The nonzero gradients span $V$ by definition of
$V$, and $0=\sum_ig_i$ writes $0$ as a strictly positive zero-sum
combination of them; hence their affine hull contains $0$ and equals their
span $V$, and $0$ lies in the interior, taken in the ambient space $V$, of
their convex hull. Steinitz's theorem applied inside $V$ yields at most $2r$
nonzero gradients whose convex hull still contains $0$ as a $V$-interior
point. In particular their affine hull is again $V$, so these gradients span
$V$, and $V$-interiority provides strictly positive coefficients $\alpha_i$
with $\sum\alpha_ig_i=0$. The features of all nonzero-gradient points lie in $V$,
while all features together span $\R^d$; hence points with $g_i=0$ and
$\xi_i\notin V$ exist and their features span $\R^d/V$. Choose $d-r$ of them
completing the rank. Such points have $\eta_i=0$, so adding them with any
positive weights preserves stationarity at $q=0$. By Lemma~\ref{lem:cert}
the selection recovers $q=0$ exactly with at most $2r+(d-r)=d+r\le 2d-1$
points.

\emph{Case $r=d$.} The identity $0=\sum_ig_i$ is a zero-sum combination of
the nonzero gradients with all coefficients equal to $1$. Given any
$x\in\R^d$, write $x$ as a linear combination of the nonzero gradients and
add a large multiple of this zero-sum relation; all coefficients become
nonnegative. Hence the nonzero gradients positively span $\R^d$.
If some subset of at most $2d-1$ nonzero gradients positively spans $\R^d$,
then by Lemma~\ref{lem:l345}(1) it carries a strictly positive zero-sum
weighting, its features span, and Lemma~\ref{lem:cert} recovers $q=0$ with at
most $2d-1$ points. So assume no such subset exists. By
Lemma~\ref{lem:rig} there is a basis $a_1,\dots,a_d$ with every nonzero
gradient on one of the lines $L_j=\R a_j$.

Suppose some point has $\eta_0=0$ and $\xi_0\ne0$. Fix a size-$2d$ positive
basis of gradients; by Lemma~\ref{lem:pb} it consists of an antipodal pair on
each line. Expand $\xi_0$ in the basis $(a_j)$ and pick a line $L_j$ where
its coefficient is nonzero. For each $k\ne j$ select the antipodal pair on
$L_k$ with weights canceling their gradients; then add the point $\xi_0$.
Gradients sum to zero, and the features contain $d-1$ of the lines plus a
vector with a nonzero component along $a_j$, hence span $\R^d$.
Lemma~\ref{lem:cert} recovers $q=0$ with $2(d-1)+1=2d-1$ points. So assume
from now on
\begin{equation}\label{eq:nozero}
g_i=0\ \Longrightarrow\ \xi_i=0 .
\end{equation}
Then every nonzero feature also lies on the $d$ lines.

Write $\xi_i=t_iu_j$ for $i\in I_j$, where $u_j$ is a unit vector along
$L_j$, and put $Q_j=\frac1N\sum_{i\in I_j}t_i^2$. The covariance identity
reads $I_d=\sum_{j=1}^dQ_ju_ju_j^{\top}$. With
$A=[\sqrt{Q_1}u_1\;\cdots\;\sqrt{Q_d}u_d]$ this says $AA^{\top}=I_d$; $A$ is
square, so $A^{\top}A=I_d$ as well, which forces
\begin{equation}\label{eq:orth}
u_1,\dots,u_d\ \text{orthonormal},\qquad Q_j=1\ \text{for all }j .
\end{equation}

Let $R_j=\frac1N\sum_{i\in I_j}\eta_i^2$. Points with $\xi_i=0$ may still
carry residual energy, so $\sum_jR_j\le1$, and some line has
$R_{j_\ast}\le1/d$. On that line, $R_{j_\ast}$ is the
$\{t_i^2/N\}$-weighted average of $(\eta_i/t_i)^2$ and the weights sum to
$Q_{j_\ast}=1$; hence some $i_\ast\in I_{j_\ast}$ has
$(\eta_{i_\ast}/t_{i_\ast})^2\le R_{j_\ast}\le1/d$.

On every other line $k$, projecting the normal equation onto $u_k$ gives
$\sum_{i\in I_k}t_i\eta_i=0$. By \eqref{eq:nozero} every term is nonzero, so
both signs occur; pick $a_k,b_k$ with $t_{a_k}\eta_{a_k}>0>t_{b_k}\eta_{b_k}$
and weights $\beta_{a_k}=-t_{b_k}\eta_{b_k}$, $\beta_{b_k}=t_{a_k}\eta_{a_k}$,
which cancel the two gradients.

Select the single point $i_\ast$ on line $j_\ast$ and the pair on each other
line, with any strictly positive masses per line, normalized. The lines are
orthogonal by \eqref{eq:orth}, so the objective separates across coordinates
$\ip{u_j}{q}$: each paired line has unique minimizer $0$, and line $j_\ast$
has unique minimizer $\eta_{i_\ast}/t_{i_\ast}$. The selection uses
$1+2(d-1)=2d-1$ points, is strictly convex, and its minimizer satisfies
$\norm{\hat q}^2=(\eta_{i_\ast}/t_{i_\ast})^2\le1/d$.
\end{proof}

\subsection{The endpoint theorem}

\begin{theorem}\label{thm:endpoint-ub}
For every $d\ge1$ and every finite dataset $D$,
$\Ls(2d-1;\Astar)\le\bigl(1+\frac1d\bigr)\Ls$.
\end{theorem}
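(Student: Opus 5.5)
The plan is to reduce the dataset statement to Theorem~\ref{thm:wend} via whitening, and to handle separately the degenerate cases where whitening is unavailable: $\Ls=0$, or $H$ singular.

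\emph{Main case: $H\succ0$ and $\Ls>0$.} Apply Lemma~\ref{lem:whiten} to obtain a whitened system, and apply Theorem~\ref{thm:wend} to it. This gives at most $2d-1$ indices with strictly positive weights such that the objective $G(q)$ is strictly convex with minimizer $\hat q$, $\norm{\hat q}^2\le 1/d$.

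Now pull the objective back to the original coordinates. The substitution $q=H^{1/2}(w-w^\star)/\sqrt{\Ls}$ is an invertible affine map, and it satisfies $\ip{q}{\xi_i}-\eta_i=(\ip{w}{x_i}-y_i)/\sqrt{\Ls}$. Hence
\[
F(w)=\frac{\sum\alpha_i\ell_{z_i}(w)}{\sum\alpha_i}
\]
is a positive multiple of $G$ composed with this map. So $F$ is strictly convex with a unique minimizer $\hat w$, and $\hat w$ corresponds to $\hat q$. Because the minimizer is unique, $\Astar(F)=\hat w$ and the tie rule plays no role. By \eqref{eq:ratio}, $L_D(\hat w)/\Ls=1+\norm{\hat q}^2\le 1+1/d$.

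\emph{Case $\Ls=0$.} The data are realizable by some $w^\star$. Choose a maximal linearly independent set of features, of size $\rk H\le d\le 2d-1$, with equal weights. Every minimizer of $F$ interpolates these points. Any interpolant $w$ differs from $w^\star$ by a vector orthogonal to the span of all features, because that span equals the span of the chosen features. Therefore $\ip{w}{x_i}=y_i$ for all $i$, and the loss is $0$, which gives ratio $0/0=1$.

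\emph{Case $H$ singular, $\Ls>0$.} Let $U$ be the span of the features, with $\dim U=m<d$. Every $L_D$ and every selected objective depends on $w$ only through its projection onto $U$. The min-norm rule returns a minimizer lying in $U$, since the orthogonal component only adds norm. So the problem is exactly the $m$-dimensional problem inside $U$, in orthonormal coordinates for $U$. That problem has $H\succ0$, and the main case there yields at most $2m-1\le 2d-1$ points with ratio at most $1+1/m$.

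This last bound is the step I expect to be the main obstacle: when $m<d$ we have $1+1/m>1+1/d$, so the naive reduction is too weak. The remedy is to use the spare budget, since $2d-1\ge 2m+1$. Rerun the case analysis of Theorem~\ref{thm:wend} inside $U$ with budget $2m+1\ge 2m$. Steinitz's theorem then recovers $q=0$ exactly whenever the nonzero gradients span $U$. The subspace sub-case ($r<m$) was already exact.

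One gap remains. If all gradients vanish on $U$ at the optimum, then every $\eta_i=0$, which contradicts $\Ls>0$ through the third identity of \eqref{eq:threeid}. So in the $r=m$ case positive spanning holds, and a subset of size at most $2m\le 2d-1$ positively spans. Lemma~\ref{lem:cert} then gives ratio $1$, and this case is complete.
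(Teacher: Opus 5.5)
Your proof is correct and is essentially the paper's: you whiten and invoke Theorem~\ref{thm:wend} in the full-rank case with $\Ls>0$, interpolate a maximal independent set when $\Ls=0$, and in the rank-deficient case reduce to $U$ and recover the optimum exactly with at most $2m$ points; that last step is the $n\ge 2\rho$ result the paper simply cites from Theorem~8 of \citet{HMSYerm}, which you instead reprove through the Steinitz branches of Theorem~\ref{thm:wend}. One correction to your last paragraph: all gradients vanishing does not force every $\eta_i=0$, since points with $\xi_i=0$ can carry residual energy, but that $r=0$ case is harmless because the first branch of Theorem~\ref{thm:wend} already gives exact recovery with $m$ independent zero-label features, and positive spanning in the $r=m$ case follows from $\sum_i g_i=0$ together with spanning, not from excluding $r=0$.
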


\begin{proof}
Let $U=\spn\{x_i\}$ and $\rho=\dim U$. If $\rho<d$, predictions and losses
depend only on the projection of $w$ to $U$, and the min-norm ERM output lies
in $U$; the problem is isometric to a $\rho$-dimensional one. Since
$2d-1\ge2\rho$, the case $n\ge2\rho$ of Theorem~8 in \citet{HMSYerm} gives
ratio $1$. If $\Ls=0$, pick $d$ linearly independent features (we may now
assume full feature rank); the labels are realizable, these constraints
determine the full-data predictor uniquely, and the ratio is $1$ by the
$0/0$ convention. Otherwise whiten by Lemma~\ref{lem:whiten} and apply
Theorem~\ref{thm:wend}; the resulting objective is strictly convex, so the
min-norm rule does not act, and \eqref{eq:ratio} bounds the ratio by
$1+1/d$.
\end{proof}

\begin{theorem}\label{thm:endpoint-lb}
For every $d\ge1$, $\Fw(d,2d-1)\ge1+\frac1d$.
\end{theorem}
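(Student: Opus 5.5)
The plan is to realize the extremal geometry of the proof of Theorem~\ref{thm:wend} directly: a translated cross-polytope with exactly two points on each of $d$ orthogonal axes. With only $2d-1$ slots, some axis receives at most one point, and the construction makes that lose exactly the harmonic factor $\frac1d$. Fix a shift $s\ge1$. Let $D$ consist of the $2d$ points
\[
z_j^{+}=(e_j,\;s+1),\qquad z_j^{-}=(-e_j,\;-s+1),\qquad j=1,\dots,d.
\]
On the points of axis $j$ we have $\ip{w}{x}-y=\pm(w_j-s)-1$. Hence $L_D(w)=\frac1{2d}\sum_j\bigl[(w_j-s-1)^2+(w_j-s+1)^2\bigr]$, which separates across coordinates. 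Its optimum is $w^\star=s\mathbf 1$ with $\Ls=1$, and for general $w$,
\[
L_D(w)=1+\frac1d\sum_j (w_j-s)^2 .
\]

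Next take an arbitrary selection $F$ with effective support at most $2d-1$ points. Every feature is a multiple of some $e_j$, so $F(w)=\sum_j F_j(w_j)$ separates coordinatewise. Therefore $\arg\min F$ is a product of per-coordinate argmin sets, and the min-norm element is obtained coordinate by coordinate, each coordinate minimizing $|w_j|$ within its own argmin set. By pigeonhole over $2d$ points and $2d-1$ slots, some axis $j_0$ carries at most one selected point with positive weight.

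That axis falls into one of two cases.
\begin{itemize}
\item If it carries exactly one point, then $F_{j_0}$ is a positive multiple of a single square, so its unique minimizer interpolates that point: $w_{j_0}=s+1$ or $w_{j_0}=s-1$. Either way $(w_{j_0}-s)^2=1$.
\item If it carries none, then $F_{j_0}\equiv0$ and the min-norm rule sets $w_{j_0}=0$, giving $(w_{j_0}-s)^2=s^2\ge1$.
\end{itemize}
All other coordinates contribute nonnegatively. So $L_D(\Astar(F))\ge 1+\frac1d$ for every admissible selection, hence $\Ls(2d-1;\Astar)/\Ls\ge 1+\frac1d$. This gives $\Fw(d,2d-1)\ge1+\frac1d$.

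The only delicate point is the tie rule. One must check that the min-norm ERM genuinely decouples across coordinates in the original coordinates, not merely after whitening. This holds because the dataset is already whitened, $H=I_d$, so the self-realization remark applies and the argmin set is an axis-aligned product. The shift $s\ge1$ is exactly what punishes dropping an entire axis, since the min-norm rule then returns $0$ rather than the optimum $s$. Repetitions and zero weights are harmless because counting is by effective support. Finally, for $d=1$ the construction reads $\Fw(1,1)\ge2$, consistent with $\Fw(d,d)=d+1$.
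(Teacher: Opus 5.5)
Your proof is correct and is essentially the paper's: the same translated cross-polytope $\{(\pm e_j,\pm c+1)\}$, the same identity $L_D(w)=1+\frac1d\norm{w-c\mathbf1}^2$, and the same pigeonhole argument with the three cases (one point kept, the other kept, or the axis dropped so the min-norm rule returns $0$) on the axis of a missed point. One small correction to your closing remark: here $H=\frac1d I_d$ and $w^\star=c\mathbf1\neq0$, so the dataset is not a self-realized whitened system; nothing depends on this, because the coordinatewise decoupling of the min-norm rule follows directly from the axis-aligned features in the original coordinates, as your main argument already uses.
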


\begin{proof}
Fix $c\ge1$ and let
\[
D_{d,c}=\{(e_j,\,c+1),\ (-e_j,\,1-c)\ :\ j=1,\dots,d\}.
\]
With $w^{\star}=c\mathbf1$, both points of coordinate $j$ have residual $1$,
the normal equations hold, and $\Ls=1$. Writing $w=c\mathbf1+\delta$, the two
losses of coordinate $j$ sum to $2\delta_j^2+2$, so
\begin{equation}\label{eq:cross-risk}
L_D(w)=1+\frac1d\norm{w-c\mathbf1}^2 .
\end{equation}
The objective of any selection with effective support at most $2d-1$ misses
one of the $2d$ point types; say it involves coordinate $j$. The objective
separates across coordinates. If only $(e_j,c+1)$ is selected in coordinate
$j$, the unique minimizing value there is $w_j=c+1$. If only $(-e_j,1-c)$ is
selected, it is $w_j=c-1$. If neither is selected, the objective does not
depend on $w_j$, and the min-norm rule sets $w_j=0$, at distance $c\ge1$ from
$c$. In every case some coordinate deviates from $c$ by at least $1$, so
\eqref{eq:cross-risk} gives $L_D(\Astar(F))\ge1+1/d$. Conversely, selecting
all points except $(-e_d,1-c)$ with equal weights on each complete pair
outputs $(c,\dots,c,c+1)$, with risk exactly $1+1/d$.
\end{proof}

\begin{corollary}\label{cor:endpoint}
$\Fw(d,2d-1)=1+\frac1d$ for every $d\ge1$. In particular $\Fw(2,3)=\frac32$,
which resolves the smallest open cell of Question~2 in \citet{HMSYopen}.
\end{corollary}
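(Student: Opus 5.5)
The corollary is the conjunction of Theorem~\ref{thm:endpoint-ub} and Theorem~\ref{thm:endpoint-lb}, so the plan is to assemble the two inequalities and then specialize to $d=2$.

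For the upper bound, I would take an arbitrary finite dataset $D\subseteq\R^d\times\R$. Theorem~\ref{thm:endpoint-ub} gives $\Ls(2d-1;\Astar)\le(1+\frac1d)\Ls$. The only thing left to check is that this yields ratio at most $1+\frac1d$ under the conventions of the definition of $\Fw$. If $\Ls>0$, divide through by $\Ls$. If $\Ls=0$, the theorem forces $\Ls(2d-1;\Astar)=0$, and the convention $0/0=1$ gives ratio $1\le 1+\frac1d$. Taking the supremum over $D$ then gives $\Fw(d,2d-1)\le 1+\frac1d$.

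For the lower bound, I would cite Theorem~\ref{thm:endpoint-lb} directly. Its dataset $D_{d,c}$ has $\Ls=1$, and every selection of effective support at most $2d-1$ produces loss at least $1+\frac1d$. Hence the infimum $\Ls(2d-1;\Astar)$ over the finitely many support patterns, and over the weights within each, is at least $1+\frac1d$, and so the ratio is too. Combining the two bounds gives equality. Setting $d=2$ gives $2d-1=3$ and value $\frac32$; since $d<3<2d$, this is the smallest cell of the open regime.

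I do not expect a genuine obstacle here, because all the work is done in Theorems~\ref{thm:wend}, \ref{thm:endpoint-ub} and~\ref{thm:endpoint-lb}. The one point that needs care is bookkeeping: the upper bound must cover degenerate datasets (rank-deficient features, $\Ls=0$), and Theorem~\ref{thm:endpoint-ub} handles these separately. The lower bound must hold for the infimum, not just for each individual selection, which is immediate since every selection's loss is at least the uniform bound $1+\frac1d$.
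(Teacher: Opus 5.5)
Your proposal is correct and follows the paper's route exactly: the corollary is the conjunction of Theorem~\ref{thm:endpoint-ub} and Theorem~\ref{thm:endpoint-lb}. In Theorem~\ref{thm:endpoint-ub} the rank-deficient and $\Ls=0$ cases are already handled, with the $0/0=1$ convention covering the latter, and in Theorem~\ref{thm:endpoint-lb} every selection of effective support at most $2d-1$ on $D_{d,c}$ has loss at least $1+\frac1d$, so the infimum does too. Your identification of $(2,3)$ as the smallest open cell is also right.
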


The value confirms the assertion quoted in Section~\ref{sec:prelim}; the
note of \citet{HMSYopen} does not include a proof. For $d=2$, an extremal
instance is
\[
D_{2,2}=\{((1,0),3),\ ((-1,0),-1),\ ((0,1),3),\ ((0,-1),-1)\}:
\]
selecting the first three points with equal weights outputs $(2,3)$ and
attains risk ratio exactly $\frac32$.

\section{A lower bound for every intermediate budget}\label{sec:lb}

Write the intermediate budget as $n=d+k$ with $1\le k\le d-1$. For
$s\in\{k+1,\dots,d\}$ let $d=q_ss+a_s$ with $q_s\ge1$ and $0\le a_s<s$, and
define
\begin{equation}\label{eq:gamma}
C_{d,s}=\Bigl(\frac{s-a_s}{q_s}+\frac{a_s}{q_s+1}\Bigr)^{-1},
\qquad
\Gam_{d,k}=\max_{k+1\le s\le d}\,(s-k)\,C_{d,s}.
\end{equation}
Equivalently, $1/C_{d,s}$ is the minimum of $\sum_{j=1}^{s}1/r_j$ over
positive integer partitions $r_1+\dots+r_s=d$; the minimum is attained by the
balanced partition, since replacing parts $r_i\ge r_j+2$ by $r_i-1,r_j+1$
strictly decreases the sum. The endpoints degenerate to $\Gam_{d,0}=d$ and
$\Gam_{d,d-1}=1/d$, matching $\Fw(d,d)=d+1$ and Corollary~\ref{cor:endpoint}.

\begin{proposition}[block instances]\label{prop:blocks}
Let $s\in\{k+1,\dots,d\}$ and let $r_1+\dots+r_s=d$ be a partition with
$C=(\sum_j1/r_j)^{-1}$. There is an explicit dataset $D$ with $\Ls=1$ and
\[
\Ls(d+k;\Astar)=1+(s-k)\,C .
\]
\end{proposition}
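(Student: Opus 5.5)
The plan is to build $D$ as an orthogonal direct sum of $s$ blocks, with block $j$ living on a coordinate subspace $E_j\cong\R^{r_j}$, $\R^d=E_1\oplus\dots\oplus E_s$. Block $j$ is a scaled copy of the extremal instance for $\Fw(r_j,r_j)=r_j+1$: a regular simplex configuration $v^{(j)}_0,\dots,v^{(j)}_{r_j}\in E_j$ with $\sum_i v^{(j)}_i=0$, all residuals equal, so the gradients form a single positive circuit of rank $r_j$. Each block also gets a translation by $c\mathbf 1_{E_j}$ with $c$ large, as in Theorem~\ref{thm:endpoint-lb}, so that the min-norm tie rule is penalized.

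\textbf{Normalization.} I choose multiplicities and scalings so that the self-realization is a whitened system, with feature covariance $I$ on each $E_j$ and residual energy $R_j$ in block $j$, where $\sum_jR_j=1$. Then $\Ls=1$, and by \eqref{eq:ratio} the risk is $1+\sum_j\norm{q_j}^2$, with $q_j$ the component of $q$ in $E_j$. The key per-block computation is the following. If exactly $r_j$ of the $r_j+1$ simplex points of block $j$ are selected, their features are independent. The minimizer on $E_j$ is then the interpolation point, which does not depend on the weights. A direct simplex calculation should give $\norm{q_j}^2=r_jR_j$ for every choice of the omitted point, by symmetry. This is the scaled version of the fact that at budget $r$ the excess is $r$ when one block carries all the energy.

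I then set $R_j=C/r_j$. This sums to $1$ by the definition of $C$, and it equalizes every block's leave-one-out excess at $r_jR_j=C$.

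\textbf{Upper bound.} Select all $r_j+1$ points in $k$ blocks. Each of these is a positive circuit spanning $E_j$, so $q_j=0$ there by Lemma~\ref{lem:cert}, applied blockwise. In each of the remaining $s-k$ blocks, select $r_j$ points. The total count is $\sum_j r_j+k=d+k$. The objective is strictly convex and separates across the orthogonal blocks, so the risk is exactly $1+(s-k)C$.

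\textbf{Lower bound.} Any selection of effective support at most $d+k$ yields an objective that separates across blocks. By pigeonhole, at least $s-k$ blocks receive at most $r_j$ selected points. Fix such a block and consider two cases.
\begin{itemize}
\item If the block receives exactly $r_j$ points, they are automatically independent, since any $r_j$ vertices of a simplex are. The excess there is then exactly $C$.
\item If it receives fewer than $r_j$ points, the objective is degenerate along some direction of $E_j$. The min-norm rule then pulls that component toward $0$, while the full-data optimum sits near $c\mathbf 1_{E_j}$. For $c$ large enough this costs more than $C$.
\end{itemize}
Summing over the (at least) $s-k$ deficient blocks gives $\Ls(d+k;\Astar)\ge 1+(s-k)C$.

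\textbf{Main obstacle.} I expect the fewer-points case to be the hardest step. It requires quantifying the min-norm output under the translation, as in Theorem~\ref{thm:endpoint-lb}, but now with a nonorthogonal simplex inside the block, where the free direction need not be a coordinate direction. I would handle it by choosing the shift along a direction with nonzero component on every proper sub-span of simplex features, for instance $c$ times the sum of the block's features rescaled. I would then bound $\norm{q_j}^2$ from below by the squared distance of the shift to the affine solution set, which grows like $c^2$.
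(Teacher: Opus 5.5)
Your construction is essentially the paper's: orthogonal simplex blocks, energies $R_j=C/r_j$ so that each leave-one-out interpolation costs $r_jR_j=C$, and a large shift that makes the min-norm rule expensive. Your per-block computation and your upper bound are correct.

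The lower bound has a genuine gap in the counting step. Your pigeonhole claim, that at least $s-k$ blocks receive at most $r_j$ selected points, is false. A selection may strip several vertices from one block and spend the freed slots completing more than $k$ other blocks. For example, take $r=(1,1,1,1)$ and $k=1$, so $d=4$, the budget is $5$, and $C=\frac14$. Select both points of blocks $1$ and $2$, one point of block $3$, and nothing from block $4$. Only two blocks are non-full. Your estimate that a deficient block costs \emph{more than} $C$ then gives excess $>2C$, not the required $3C$. The threshold must account for the slots a deficient block frees. You need a block with fewer than $r_j$ distinct vertices to cost more than $(s-k)C$ on its own, or at least $(1+r_j-n_j)C$. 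Then there are two cases. Either such a block exists and the bound holds outright, or every block receives at least $r_j$ vertices, and only then does $\sum_jn_j\le d+k$ force at most $k$ full blocks. This is exactly how the paper argues: it chooses $M$ in \eqref{eq:Mshift} so that a block losing two vertices alone contributes at least $(s-k)C+1$, and only afterwards counts.

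Your quantitative estimate for a deficient block is also misstated. Write the shift in block $j$ as $c\,t$. The affine solution set $\{w:\ip{w}{\xi_m}=y_m,\ m\in S\}$ equals $c\,t$ plus a $c$-independent affine set, so the distance from $c\,t$ to it does not grow with $c$ at all. What grows is the distance from $c\,t$ to the linear span $V_S=\spn\{\xi_m:m\in S\}$, which contains the min-norm output. Hence $\norm{\hat w_j-c\,t}\ge c\,\norm{(I-P_{V_S})t}$, and this is positive exactly when $t$ avoids every span of at most $r_j-1$ vertices. Your concrete suggestion does not work as stated: the plain sum of the simplex features is $0$, so it gives no shift at all. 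You need genuinely distinct coefficients. The paper uses $h_j=(1,2,\dots,r_j)$ on the corner simplex and verifies $\norm{(I-P_S)h_j}\ge1/\sqrt2$ directly. Once you choose $c$ with $c^2\min_{|S|\le r_j-1}\norm{(I-P_{V_S})t}^2>(s-k)C$ and redo the counting as above, your argument closes.
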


\begin{proof}
Decompose $\R^d=E_1\oplus^{\perp}\dots\oplus^{\perp}E_s$ with
$\dim E_j=r_j$. In $E_j$ place the $r_j+1$ simplex vertices
$v_{j,0}=-\sum_{m}e_{j,m}$ and $v_{j,m}=e_{j,m}$ for $m=1,\dots,r_j$, so
$N=d+s$ points in total. Set
\[
a_j=\sqrt{\frac{NC}{r_j(r_j+1)}},\qquad
h_j=(1,2,\dots,r_j),\qquad
w_j^{\star}=M\,h_j,
\]
with labels $y_{j,m}=\ip{w_j^{\star}}{v_{j,m}}+a_j$ and shift
\begin{equation}\label{eq:Mshift}
M=\sqrt2\,\Bigl[d\,a_{\max}+\sqrt{N\bigl((s-k)C+1\bigr)}\Bigr],
\qquad a_{\max}=\max_ja_j .
\end{equation}

\emph{Optimum.} Since $\sum_{m=0}^{r_j}v_{j,m}=0$, each block satisfies its
normal equations at $w_j^{\star}$ with all residuals equal to $a_j$, so
$\Ls=\frac1N\sum_j(r_j+1)a_j^2=\sum_jC/r_j=1$.

\emph{Cost of a block with $r_j$ selected vertices.} Any $r_j$ simplex
vertices form a basis of $E_j$, so every positively weighted objective on
them has the unique zero-training-error interpolant, independent of the
weights. Writing the missed vertex's constraint through
$\sum_mv_{j,m}=0$, its new residual has absolute value $(r_j+1)a_j$, and the
excess full-data risk contributed by the block is
\[
\frac{(r_j+1)^2a_j^2-(r_j+1)a_j^2}{N}=\frac{r_j(r_j+1)a_j^2}{N}=C .
\]

\emph{No block may lose two vertices.} Suppose a selection uses at most
$r_j-1$ vertices of block $j$; let $S$ be the selected set,
$|S|=p\le r_j-1$. Every minimizer of the selected objective restricted to
block $j$ interpolates $S$, and the min-norm output has the form
$\hat w_j=MP_Sh_j+b_S$ where $P_S$ projects onto the span of $S$ and $b_S$
lies in that span and interpolates the constant $a_j$. Two facts:
\begin{enumerate}
\item $\norm{(I-P_S)h_j}\ge1/\sqrt2$. If $v_{j,0}\notin S$, some coordinate
direction is entirely absent from $S$ and $h_j$ has a nonzero entry there of
size at least $1$. If $v_{j,0}\in S$, at least two standard directions
$m_1,m_2$ are unselected; every vector in $\spn(S)$ has equal coordinates at
$m_1,m_2$, while $h_j$ differs there by at least $1$, giving distance at
least $1/\sqrt2$.
\item $\norm{b_S}\le r_ja_j\le d\,a_{\max}$. Explicitly, if $v_{j,0}\notin S$
then $b_S=a_j\sum_{m\in T}e_m$ over the selected coordinates $T$, of norm
$a_j\sqrt{p}$. If $v_{j,0}\in S$ with $p-1$ standard vertices selected, the
min-norm interpolant puts $a_j$ on each selected coordinate and
$-pa_j/(r_j-p+1)$ on each unselected one, giving the exact value
\begin{equation}\label{eq:bS}
\norm{b_S}^2=a_j^2\Bigl[(p-1)+\frac{p^2}{r_j-p+1}\Bigr]\le
\frac{r_j^2-3}{2}\,a_j^2<r_j^2a_j^2 .
\end{equation}
\end{enumerate}
Hence
$\norm{\hat w_j-w_j^{\star}}\ge M/\sqrt2-d\,a_{\max}
=\sqrt{N((s-k)C+1)}$ by \eqref{eq:Mshift}. The block second-moment matrix is
$H_j=\frac1N(I+\mathbf1\mathbf1^{\top})\succeq\frac1NI$ on $E_j$, so this
block alone contributes excess risk, meaning $L_D(\hat w)-\Ls$, of at least
$(s-k)C+1$, strictly more than the candidate optimal excess $(s-k)C$. An
optimal selection therefore uses at least $r_j$ vertices in every block.

\emph{Counting.} The base cost is $\sum_jr_j=d$ points, so at most $k$
blocks can retain all $r_j+1$ vertices; at least $s-k$ blocks lose exactly
one vertex and contribute $C$ each. Conversely, keeping $k$ full blocks with
equal weights inside each block and dropping one vertex elsewhere attains
$1+(s-k)C$ exactly: full blocks output $w_j^{\star}$, and broken blocks
contribute $C$.
\end{proof}

\begin{theorem}\label{thm:gammalb}
For all $1\le k\le d-1$,
\[
\Fw(d,d+k)\;\ge\;1+\Gam_{d,k}.
\]
\end{theorem}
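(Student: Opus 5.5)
The theorem follows directly from Proposition~\ref{prop:blocks}. Fix $1\le k\le d-1$ and let $s^\ast\in\{k+1,\dots,d\}$ be an index attaining the maximum in \eqref{eq:gamma}, so that $\Gam_{d,k}=(s^\ast-k)C_{d,s^\ast}$.

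Write $d=q_{s^\ast}s^\ast+a_{s^\ast}$ and take the balanced partition of $d$ into $s^\ast$ positive parts: $a_{s^\ast}$ parts equal to $q_{s^\ast}+1$ and $s^\ast-a_{s^\ast}$ parts equal to $q_{s^\ast}$. This is a legitimate positive integer partition because $s^\ast\le d$ forces $q_{s^\ast}\ge1$. By construction, $\sum_j1/r_j=1/C_{d,s^\ast}$, so the constant $C$ of Proposition~\ref{prop:blocks} equals $C_{d,s^\ast}$. The remark after \eqref{eq:gamma} says this balanced partition minimizes $\sum 1/r_j$, but we do not need that here; we only need that the partition realizes the value $C_{d,s^\ast}$ used in the definition.

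Proposition~\ref{prop:blocks} then supplies a dataset $D$ with $\Ls=1$ and $\Ls(d+k;\Astar)=1+(s^\ast-k)C_{d,s^\ast}=1+\Gam_{d,k}$. Hence the ratio $\Ls(d+k;\Astar)/\Ls$ for this $D$ equals $1+\Gam_{d,k}$, and taking the supremum over datasets in the definition of $\Fw$ gives $\Fw(d,d+k)\ge1+\Gam_{d,k}$.

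There is no substantive obstacle, since all the work sits in Proposition~\ref{prop:blocks}. The only point to check is that its hypotheses are met: $s^\ast$ lies in $\{k+1,\dots,d\}$, and the parts are positive integers summing to $d$. Both hold by the choice of the balanced partition.
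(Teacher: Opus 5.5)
Your proposal is correct and is the paper's argument: the paper proves Theorem~\ref{thm:gammalb} in one line by applying Proposition~\ref{prop:blocks} with the maximizing $s$ in \eqref{eq:gamma} and the balanced partition, whose reciprocal sum equals $1/C_{d,s}$ by definition. The only hypotheses to verify are $s^\ast\ge k+1$ and that the parts are positive integers summing to $d$ (which follows from $q_{s^\ast}\ge1$), and you verify both.
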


\begin{proof}
Apply Proposition~\ref{prop:blocks} with the maximizing $s$ in
\eqref{eq:gamma} and the balanced partition.
\end{proof}

\paragraph{Explicit instances.}
Three instances that we verified by exact computation. For $(d,n)=(3,4)$:
\[
z_1=\bigl((-1,-1,0),-30+\tfrac{\sqrt5}{3}\bigr),\quad
z_2=\bigl((1,0,0),10+\tfrac{\sqrt5}{3}\bigr),\quad
z_3=\bigl((0,1,0),20+\tfrac{\sqrt5}{3}\bigr),
\]
\[
z_4=\bigl((0,0,-1),-10+\sqrt{\tfrac53}\bigr),\quad
z_5=\bigl((0,0,1),10+\sqrt{\tfrac53}\bigr),
\]
with $w^{\star}=(10,20,10)$, $\Ls=1$ and
$\Ls(4;\Astar)=\frac53=1+\Gam_{3,1}$; the value is attained by
$\{z_1,z_2,z_3,z_5\}$ with weights $(\frac16,\frac16,\frac16,\frac12)$. For
$(d,n)=(4,5)$, two $2$-dimensional blocks with $M=17$ give
$w^{\star}=(17,34,17,34)$, $\Ls=1$ and $\Ls(5;\Astar)=2=1+\Gam_{4,1}$. For
$(d,n)=(5,6)$, blocks of dimensions $2$ and $3$ give
$\Ls(6;\Astar)=\frac{11}5=1+\Gam_{5,1}$. Small values of
$1+\Gam_{d,k}$:
\[
\begin{array}{c|ccccc}
d\backslash k&1&2&3&4&5\\\hline
2&\frac32&&&&\\
3&\frac53&\frac43&&&\\
4&2&\frac32&\frac54&&\\
5&\frac{11}5&\frac85&\frac75&\frac65&\\
6&\frac52&\frac53&\frac32&\frac43&\frac76
\end{array}
\]

\section{Orthogonal circuit-block systems: an exact minimax value}\label{sec:block}

The instances of Section~\ref{sec:lb} live in a natural model class, and on
this class the harmonic quantity $\Gam_{d,k}$ is not merely a lower bound but
the exact minimax answer. This is the content of the present section.

\begin{definition}\label{def:block}
A whitened system $\{(\xi_i,\eta_i)\}$ in $\R^d$ has \emph{orthogonal
circuit-block structure} with dimension vector $r=(r_1,\dots,r_s)$,
$\sum_jr_j=d$, if
$\R^d=E_1\oplus^{\perp}\dots\oplus^{\perp}E_s$ with $\dim E_j=r_j$, every
nonzero feature lies in some $E_j$, and every $E_j$ contains a positive
circuit of $r_j+1$ gradients spanning $E_j$. Here a positive circuit of
gradients means a set of indices whose gradients $g_i=\eta_i\xi_i$ carry a
strictly positive zero-sum relation, minimal with respect to support. Write
$I_j=\{i:\xi_i\in E_j\setminus\{0\}\}$ and
$R_j=\frac1N\sum_{i\in I_j}\eta_i^2$, so $\sum_jR_j\le1$; points with
$\xi_i=0$ belong to no block. The \emph{model class} $\mathcal D_r$ consists
of the datasets $D$ with full feature rank and $\Ls>0$ whose whitened system
under Lemma~\ref{lem:whiten} has orthogonal circuit-block structure with
dimension vector $r$.
\end{definition}

The class is defined at the level of datasets, not of whitened systems
alone, and this matters. A whitened system does not determine the min-norm
tie rule of the underlying dataset, because whitening is not orthogonal; and
for the self-realized dataset of a whitened system the tie rule actively
helps the selector, since there the full-data optimum is the origin and
unconstrained directions are resolved to it for free. The statements below
avoid this issue from both sides: the upper bound uses only strictly convex
selections, which are tie-rule-free and therefore valid for every dataset in
$\mathcal D_r$, and the lower bound uses the original-coordinate instances
of Proposition~\ref{prop:blocks}, whose large parameter shift makes the tie
rule costly.

Since the blocks are orthogonal and each block's nonzero features lie inside
it, the covariance identity forces
$\frac1N\sum_{i\in I_j}\xi_i\xi_i^{\top}$ to equal the orthogonal projector
onto $E_j$, and the normal equation projects to zero in each block.

\begin{theorem}[exact budget--energy value]\label{thm:lp}
Let $D\in\mathcal D_r$ and let its whitened system have energies
$(R_j)$; let the budget be $n=d+k$ with $0\le k\le s-1$. Then there is a
strictly convex, strictly positively weighted selection of at most $d+k$
points with
\[
\norm{\hat q}^2\;\le\;\min_{K\subseteq[s],\,|K|=k}\ \sum_{j\notin K}r_jR_j ,
\]
and the supremum of the right-hand side over admissible energies is exactly
\begin{equation}\label{eq:phi}
\Phi(r,k)=\max_{J\subseteq[s],\,|J|\ge k+1}
\frac{|J|-k}{\sum_{j\in J}1/r_j}.
\end{equation}
For $k\ge s$ the budget completes every block and the excess is zero.
\end{theorem}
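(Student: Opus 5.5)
The plan has two independent halves. First, construct a selection achieving $\norm{\hat q}^2\le\sum_{j\notin K}r_jR_j$ for every $K$ with $|K|=k$. Second, solve the resulting finite-dimensional max--min problem over the energies in closed form.

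\emph{Selection.} By the remark after Definition~\ref{def:block}, the block data $\{(\xi_i,\eta_i):i\in I_j\}$ satisfy two identities: $\frac1N\sum_{I_j}\xi_i\xi_i^\top=P_{E_j}$ and $\sum_{I_j}\eta_i\xi_i=0$. Hence for $u\in E_j$ we have
\[
\rho_j(u):=\frac1N\sum_{i\in I_j}(\ip{u}{\xi_i}-\eta_i)^2=\norm{u}^2+R_j .
\]
Fix $K$ with $|K|=k$ and treat the two kinds of blocks separately.
\begin{itemize}
\item For $j\in K$, take the positive circuit of $r_j+1$ gradients in $E_j$ guaranteed by the definition. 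Its zero-sum weights are strictly positive, and its features $\xi_i=g_i/\eta_i$ span $E_j$. Lemma~\ref{lem:cert}, applied inside $E_j$, shows this block's part of the objective is strictly convex on $E_j$ with unique minimizer $0$.
\item For $j\notin K$, apply Lemma~\ref{lem:L2} to the block family with $r=r_j$. The normalization $1/N$ versus $1/|I_j|$ only rescales $\rho_j$, so the lemma still applies. It returns $r_j$ independent features whose interpolant $\hat u_j$ satisfies $\norm{\hat u_j}^2+R_j\le(r_j+1)R_j$, i.e.\ $\norm{\hat u_j}^2\le r_jR_j$. The case $R_j=0$ is included.
\end{itemize}
Give every selected point an arbitrary strictly positive weight. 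Because the $E_j$ are orthogonal and every selected feature lies in its own block, the Hessian is block diagonal and positive definite on each $E_j$. The objective therefore separates into $\ip{P_{E_j}q}{\cdot}$ pieces and has unique minimizer $\hat q=\sum_j\hat u_j$. This gives $\norm{\hat q}^2\le\sum_{j\notin K}r_jR_j$ using $\sum_{j\in K}(r_j+1)+\sum_{j\notin K}r_j=d+k$ points. If $k\ge s$, all blocks are certified and $\hat q=0$.

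\emph{The optimization.} Set $x_j=r_jR_j\ge0$, so the admissible energies satisfy $\sum_j x_j/r_j\le1$. The objective $\min_{|K|=k}\sum_{j\notin K}x_j$ is the total minus the $k$ largest entries. I will use the standard variational formula
\[
\text{top-}k(x)=\min_{\lambda\ge0}\Bigl(k\lambda+\sum_j(x_j-\lambda)_+\Bigr),
\]
which turns the objective into $\max_{\lambda\ge0}\bigl(\sum_j\min(x_j,\lambda)-k\lambda\bigr)$. Swapping the two maxima, for fixed $\lambda$ the inner problem is a fractional knapsack. Raising $x_j$ up to $\lambda$ costs $\lambda/r_j$ and gains $\lambda$, so it is optimal to saturate the blocks with the largest $r_j$ first. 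The resulting function of $\lambda$ is piecewise linear. Its breakpoints occur where a set $J$ is exactly saturated, that is, $\lambda=1/\sum_{J}1/r_j$, and the value there is $(|J|-k)/\sum_J1/r_j$. Between breakpoints the function is linear, so its maximum is attained at a breakpoint, or at $\lambda\downarrow0$ where the value is $0$. This yields the upper bound $\Phi(r,k)$. For the matching lower bound, take a maximizing $J$ with $|J|\ge k+1$ and set $x_j=1/\sum_J1/r_j$ on $J$ and $0$ elsewhere. Removing the $k$ largest entries leaves exactly the value $\Phi(r,k)$.

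\emph{Main obstacle.} The delicate point is ``admissible energies'': the supremum must be realized by datasets actually in $\mathcal D_r$. I would realize any $(R_j)$ with $\sum R_j\le1$ by simplex blocks as in Proposition~\ref{prop:blocks}, with residual $a_j$ chosen so that block $j$ carries energy $R_j$. Any deficit $1-\sum R_j$ would be placed on points with $\xi=0$, keeping the three identities \eqref{eq:threeid}. Blocks with $R_j=0$ have no nonzero gradients, so they violate the circuit requirement; I would handle them as limits $R_j\downarrow0$, which suffices for a supremum. I expect this feasibility and closure bookkeeping, together with the knapsack breakpoint argument, to need the most care.
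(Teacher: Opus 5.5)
Your proposal is correct. The selection half coincides with the paper's: circuit certificates on the blocks in $K$, volume-sampling interpolants with $\norm{u_j}^2\le r_jR_j$ on the other blocks, and separation across the orthogonal blocks. So does the lower-bound witness, the flat vector $x_j=(\sum_{h\in J}1/r_h)^{-1}$ on a maximizing $J$.

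The genuine difference is how you evaluate the relaxed program $W(r,k)$. The paper uses layer-cake identities. With $J_t=\{j:x_j>t\}$, the objective is $\int_0^\infty(|J_t|-k)_+\,dt$ and the constraint is $\int_0^\infty\sum_{j\in J_t}1/r_j\,dt\le1$. The definition of $\Phi$ then bounds the first integrand by $\Phi(r,k)$ times the second. This gives a short upper bound that needs no ordering of the blocks, and it is paired with the separate witness.

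Your route uses the dual formula for the top-$k$ sum, exchanges the two suprema, and then solves a fractional knapsack that saturates blocks in decreasing order of $r_j$. It computes the value exactly rather than bounding it: you get $W(r,k)=\max_{m\ge k+1}(m-k)/\sum_{i\le m}1/r_{(i)}$ with $r_{(1)}\ge r_{(2)}\ge\cdots$. The maximizer is read off at the optimal breakpoint, so the witness comes for free. The price is some bookkeeping you should state explicitly. Breakpoints with $|J|\le k$ give nonpositive values. For $\lambda>\max_jr_j$ the function equals $\max_jr_j-k\lambda$, which is nonincreasing, so no supremum escapes to infinity.

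For realizability, the paper places regular-simplex directions directly in whitened coordinates and uses self-realization. Your reuse of the datasets of Proposition~\ref{prop:blocks}, with rescaled residuals plus one zero-feature point, also works. You do need to note, as in the proof of Corollary~\ref{cor:lp-gamma}, that their covariance is block diagonal. Then whitening maps each $E_j$ into itself and preserves the simplex circuits.
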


\begin{proof}
\emph{Selection.} Fix $K$ with $|K|=k$. For $j\in K$ select the full circuit
of block $j$ with its strictly positive zero-sum weights: $r_j+1$ points, and
by Lemma~\ref{lem:cert} applied inside $E_j$ the block output is $0$. For
$j\notin K$, view the points indexed by $I_j$ as a dataset in
$E_j\cong\R^{r_j}$; zero-feature points belong to no block and are never
selected. Its covariance is a positive multiple of the identity and
its normal equation vanishes, so its optimal parameter is $0$ and its optimal
risk is $(N/M_j)R_j$, where $M_j=|I_j|$. Lemma~\ref{lem:L2} provides
$r_j$ independent features whose interpolation point $u_j$ satisfies, after
expanding the risk around $0$,
$\frac{N}{M_j}\bigl(R_j+\norm{u_j}^2\bigr)\le(r_j+1)\frac{N}{M_j}R_j$, that
is $\norm{u_j}^2\le r_jR_j$. The union of the selections uses
$\sum_jr_j+|K|=d+k$ points; block objectives act on orthogonal coordinates,
the joint Hessian is positive definite, and the joint minimizer is the direct
sum of the block minimizers, so
$\norm{\hat q}^2=\sum_{j\notin K}\norm{u_j}^2\le\sum_{j\notin K}r_jR_j$.

\emph{Worst-case energies.} Substituting $x_j=r_jR_j$, the adversary solves
\[
W(r,k)=\sup\Bigl\{\min_{|K|=k}\sum_{j\notin K}x_j\ :\
x\ge0,\ \sum_jx_j/r_j\le1\Bigr\},
\]
and we must show $W(r,k)=\Phi(r,k)$. The relaxation $x\ge0$ is deliberate:
Definition~\ref{def:block} forces every block energy to be strictly
positive, so a maximizer with some $x_j=0$ is approached within the class,
in the limit of vanishing energies, rather than attained; when the
maximizing $J$ in \eqref{eq:phi} is all of $[s]$, as in the balanced
constructions of Proposition~\ref{prop:blocks}, the value is attained
exactly.

For the lower bound, fix $J$ attaining \eqref{eq:phi}, put
$x_j=C_J=(\sum_{h\in J}1/r_h)^{-1}$ for $j\in J$ and $x_j=0$ otherwise. The
constraint holds with equality, and removing the $k$ largest entries leaves
$(|J|-k)C_J=\Phi(r,k)$.

Every strictly positive feasible energy vector is realized by a member of
$\mathcal D_r$, so the supremum over admissible energies matches the
relaxed program up to the zero-energy limits already discussed. Given
$R_j>0$ with $\sum_jR_j\le1$, take in each $E_j$ the $r_j+1$ unit
directions $u_{j,0},\dots,u_{j,r_j}$ of a regular simplex, which satisfy
$\sum_mu_{j,m}=0$ and
$\sum_mu_{j,m}u_{j,m}^{\top}=\frac{r_j+1}{r_j}P_{E_j}$, and set
\[
\xi_{j,m}=\sqrt{\tfrac{Nr_j}{r_j+1}}\;u_{j,m},\qquad
\eta_{j,m}=\sqrt{\tfrac{NR_j}{r_j+1}},
\]
with one further pure-residual point $\xi_0=0$,
$\eta_0=\sqrt{N(1-\sum_jR_j)}$, and $N=d+s+1$. Each block then has
covariance $P_{E_j}$, energy exactly $R_j$, and an equal-weight gradient
circuit spanning $E_j$; the identities \eqref{eq:threeid} hold, and the
self-realized dataset of this system lies in $\mathcal D_r$.

For the upper bound, note that for any feasible $x$,
$\min_{|K|=k}\sum_{j\notin K}x_j$ equals $\sum_jx_j$ minus the sum of the $k$
largest entries. With $J_t=\{j:x_j>t\}$, the layer-cake identities give
\[
\min_{|K|=k}\sum_{j\notin K}x_j=\int_0^{\infty}\bigl(|J_t|-k\bigr)_{+}\,dt,
\qquad
\sum_j\frac{x_j}{r_j}=\int_0^{\infty}\sum_{j\in J_t}\frac1{r_j}\,dt\le1 .
\]
Whenever $|J_t|\ge k+1$, the definition of $\Phi$ gives
$|J_t|-k\le\Phi(r,k)\sum_{j\in J_t}1/r_j$; the integrand vanishes otherwise.
Integrating, $\min_{|K|=k}\sum_{j\notin K}x_j\le\Phi(r,k)$.
\end{proof}

\begin{corollary}\label{cor:lp-gamma}
$\displaystyle\max_{\substack{k+1\le s\le d\\r_1+\dots+r_s=d}}
\Phi(r,k)=\Gam_{d,k}$. Hence
\[
\sup_{r}\ \sup_{D\in\mathcal D_r}
\Bigl(\frac{\Ls(d+k;\Astar)}{\Ls}-1\Bigr)=\Gam_{d,k},
\]
the outer supremum over dimension vectors with $s\ge k+1$ parts, and the
value is attained by the balanced simplex-block datasets of
Proposition~\ref{prop:blocks}.
\end{corollary}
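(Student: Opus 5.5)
The corollary splits into a combinatorial identity and a minimax statement, and I will prove them in that order.

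\emph{The identity.} For fixed $r$ and $J\subseteq[s]$ with $|J|=t\ge k+1$, the parts $(r_j)_{j\in J}$ sum to $d_J:=\sum_{j\in J}r_j\le d$. Because $x\mapsto 1/x$ is decreasing, raising any part only lowers $\sum_{j\in J}1/r_j$. So we can put the leftover mass $d-d_J$ onto one part of $J$. This gives a partition of $d$ into $t$ parts with harmonic sum at most $\sum_{j\in J}1/r_j$, hence at least $1/C_{d,t}$ by the balanced-partition remark after \eqref{eq:gamma}. Therefore
\[
\frac{t-k}{\sum_{j\in J}1/r_j}\le (t-k)\,C_{d,t}\le\Gam_{d,k},
\]
with $t\in\{k+1,\dots,d\}$ since $t\le s\le d$. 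This gives $\Phi(r,k)\le\Gam_{d,k}$. For the reverse inequality, take $s$ maximizing \eqref{eq:gamma} and the balanced partition $r$, and choose $J=[s]$. Then $\Phi(r,k)\ge (s-k)C_{d,s}=\Gam_{d,k}$.

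\emph{The minimax statement.} For the upper bound, fix $r$ with $s\ge k+1$ and any $D\in\mathcal D_r$. Theorem~\ref{thm:lp} gives a strictly convex selection of at most $d+k$ points with $\norm{\hat q}^2\le\min_K\sum_{j\notin K}r_jR_j$. The layer-cake argument in that proof bounds this quantity by $\Phi(r,k)$ for every feasible energy vector. Strict convexity means the min-norm tie rule never acts, so \eqref{eq:ratio} applies in the original coordinates, and $\Ls(d+k;\Astar)/\Ls-1\le\Phi(r,k)\le\Gam_{d,k}$.

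For the lower bound and attainment, use the balanced dataset of Proposition~\ref{prop:blocks} with the maximizing $s$. This requires checking that it lies in $\mathcal D_r$:
\begin{itemize}
\item it has full feature rank and $\Ls=1>0$;
\item under whitening, the blocks $E_j$ stay mutually orthogonal, because $H$ is block diagonal with blocks $\frac1N(I+\mathbf 1\mathbf 1^{\top})$, so $H^{-1/2}$ preserves each $E_j$;
\item each block's residuals are all equal to $a_j>0$, so the $r_j+1$ whitened gradients carry the positive relation inherited from $\sum_m v_{j,m}=0$, form a positive circuit, and span $E_j$.
\end{itemize}
Proposition~\ref{prop:blocks} then gives excess exactly $(s-k)C_{d,s}=\Gam_{d,k}$. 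Together with the upper bound, this shows the double supremum equals $\Gam_{d,k}$ and is attained.

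The main obstacle is this membership check. The class $\mathcal D_r$ is defined through the whitened system, so we must confirm that whitening preserves both the orthogonal block decomposition and the circuit property. The block-diagonal form of $H$ is what makes this work.
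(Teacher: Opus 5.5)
Your proposal is correct and follows the paper's proof essentially step for step. It uses the same monotonicity-plus-balancing argument for $\max_r\Phi(r,k)=\Gam_{d,k}$, the same upper bound from the strictly convex (tie-rule-free) selections of Theorem~\ref{thm:lp}, and the same lower bound via membership of the balanced instance of Proposition~\ref{prop:blocks} in $\mathcal D_r$, which rests on its block-diagonal covariance. Your membership check is slightly more explicit than the paper's, since you spell out that the equal residuals $a_j$ within a block make the whitened gradients inherit $\sum_m v_{j,m}=0$ as a strictly positive circuit spanning $E_j$.
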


\begin{proof}
For the identity between the maxima, take any $J$ in \eqref{eq:phi} with
$t=|J|$ and $\sum_{j\in J}r_j\le d$.
Among positive integers with $t$ parts and total at most $d$, the sum
$\sum_{j\in J}1/r_j$ is minimized by taking total exactly $d$ (enlarging a
part decreases its reciprocal) and balancing, so
$\sum_{j\in J}1/r_j\ge1/C_{d,t}$ and
$(t-k)/\sum_{j\in J}1/r_j\le(t-k)C_{d,t}\le\Gam_{d,k}$. Conversely, choose
$t$ maximizing \eqref{eq:gamma}, partition all of $d$ into a balanced
$t$-partition, and take $J=[t]$.

For the dataset statement, the upper direction is Theorem~\ref{thm:lp}: for
every $D\in\mathcal D_r$ the constructed selection is strictly convex, so
its output does not involve the tie rule, and \eqref{eq:ratio} bounds the
excess by $\Phi(r,k)\le\Gam_{d,k}$. For the lower direction, the balanced
dataset of Proposition~\ref{prop:blocks} lies in $\mathcal D_r$: its
full-data covariance is block-diagonal with respect to the $E_j$, since each
block's simplex has second moment $\frac1N(I+\mathbf1\mathbf1^{\top})$
inside its own subspace, so whitening acts block by block, maps each block's
features into the same $E_j$, and preserves each simplex gradient circuit
together with its strictly positive dependence. Its optimal excess is
$(s-k)C=\Gam_{d,k}$ for the balanced partition at the maximizing $s$.
\end{proof}

Two remarks. First, Theorem~\ref{thm:lp} together with
Proposition~\ref{prop:blocks} pins the constant from both sides inside the
model class: the conjectured value of $\Fw$ in the whole open regime is the
exact answer for this class. Second, the maximum in \eqref{eq:phi} is in
general attained by a proper subset $J$: for unbalanced dimension vectors the
worst adversary shuts off some blocks entirely, and only after optimizing
over partitions does the balanced formula $\Gam_{d,k}$ reappear. We verified
Corollary~\ref{cor:lp-gamma} by exhaustive enumeration of all partitions for
$d\le14$ and all $k$.

\section{The budget $4$ in dimension $3$}\label{sec:fw34}

This section proves $\Fw(3,4)=\frac53$. The lower bound is
Theorem~\ref{thm:gammalb} at $(d,k)=(3,1)$. The upper bound requires new
structure: a complete classification of five-vector positive bases of $\R^3$
and a covering theorem for the gradients outside such a basis.

\subsection{Penalty limits and anchors}

\begin{lemma}[multi-anchor penalty limit]\label{lem:penalty}
Let $C$ have full row rank, $\mathcal A=\{q:Cq=b\}$, and let $G\ge0$ be a
convex quadratic whose restriction to $\mathcal A$ has a unique minimizer
$q_{\mathcal A}$. Assume the Hessian nullspace of $G$ meets $\ker C$ only at
$0$. Then for every $\lambda>0$ the function
$F_\lambda(q)=G(q)+\lambda\norm{Cq-b}^2$ is strictly convex, and its unique
minimizer $q_\lambda$ converges to $q_{\mathcal A}$ as $\lambda\to\infty$.
\end{lemma}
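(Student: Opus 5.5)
The argument splits into strict convexity of $F_\lambda$ and convergence of its minimizer; the convergence rests on a decomposition along $\ker C$ plus boundedness.

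\emph{Strict convexity.} The Hessian of $F_\lambda$ is $H_G+2\lambda C^{\top}C$, where $H_G\succeq0$ is the Hessian of $G$. If $v$ is in its nullspace, then $v^{\top}H_Gv+2\lambda\norm{Cv}^2=0$. Both terms are nonnegative, so $H_Gv=0$ and $Cv=0$. The hypothesis that the Hessian nullspace meets $\ker C$ only at $0$ forces $v=0$. Hence the Hessian is positive definite, $F_\lambda$ is strictly convex, and $q_\lambda$ exists and is unique.

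\emph{Coordinates.} Since $C$ has full row rank, $\mathcal A$ is nonempty. Write $q=q_{\mathcal A}+u+z$ with $u\in\ker C$ and $z\in(\ker C)^{\perp}=\operatorname{range}(C^{\top})$. On this complement $C$ is injective, so $\norm{Cq-b}=\norm{Cz}\ge\sigma\norm z$ with $\sigma>0$ the smallest singular value of $C$. The restriction of $G$ to $\mathcal A$ is $u\mapsto G(q_{\mathcal A}+u)$. It has a unique minimizer at $u=0$, and by the nullspace hypothesis $H_G$ is positive definite on $\ker C$. So this restriction grows at least like $\mu\norm u^2$ for some $\mu>0$, and its gradient along $\ker C$ vanishes at $q_{\mathcal A}$. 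Consequently the gradient of $G$ at $q_{\mathcal A}$ lies in $\operatorname{range}(C^{\top})$.

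\emph{Convergence.} First, compare with the feasible point: $F_\lambda(q_\lambda)\le F_\lambda(q_{\mathcal A})=G(q_{\mathcal A})$. Since $G\ge0$, this gives $\lambda\norm{Cz_\lambda}^2\le G(q_{\mathcal A})$, so $\norm{z_\lambda}=O(\lambda^{-1/2})\to0$. Next, to control $u_\lambda$, expand $G$ exactly as a quadratic around $q_{\mathcal A}$:
\[
G(q_{\mathcal A}+u+z)=G(q_{\mathcal A})+\ip{\nabla G(q_{\mathcal A})}{z}+\tfrac12(u+z)^{\top}H_G(u+z).
\]
Combined with $G(q_\lambda)\le G(q_{\mathcal A})$ and $\norm{z_\lambda}\to0$, this bounds $\tfrac12 u_\lambda^{\top}H_Gu_\lambda$ by $O(\norm{z_\lambda}+\norm{u_\lambda}\norm{z_\lambda}+\norm{z_\lambda}^2)$. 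Coercivity on $\ker C$ then gives $\mu\norm{u_\lambda}^2\le O(\norm{z_\lambda})(1+\norm{u_\lambda})$, which forces $u_\lambda\to0$. Therefore $q_\lambda\to q_{\mathcal A}$.

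The main obstacle is this last step: bounding the drift $u_\lambda$ along $\ker C$. It requires turning the restricted uniqueness hypothesis into quadratic growth on $\ker C$, which is where the nullspace condition is essential. A cleaner alternative is a compactness argument. The bound $G(q_\lambda)\le G(q_{\mathcal A})$ together with $z_\lambda\to0$ and coercivity on $\ker C$ gives boundedness of $(q_\lambda)$. Any limit point $\bar q$ satisfies $C\bar q=b$ and $G(\bar q)\le G(q_{\mathcal A})$, so by uniqueness $\bar q=q_{\mathcal A}$.
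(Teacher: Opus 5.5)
Your proof is correct and follows essentially the paper's route. It shares the paper's three steps: strict convexity from the nullspace condition, the comparison $F_\lambda(q_\lambda)\le F_\lambda(q_{\mathcal A})=G(q_{\mathcal A})$ to force the constraint violation to zero, and a decomposition along $\ker C$ followed by boundedness plus uniqueness of the constrained minimizer; your closing compactness argument is exactly the paper's. Your primary quantitative variant, which uses $\nabla G(q_{\mathcal A})\in\operatorname{range}(C^{\top})$ and positive definiteness of the Hessian on $\ker C$, is also sound. It makes explicit the paper's ``uniform strict convexity in the $\ker C$ direction'' step and even yields the rate $\norm{q_\lambda-q_{\mathcal A}}=O(\lambda^{-1/4})$.
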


\begin{proof}
Strict convexity is the nullspace condition. Optimality gives
$F_\lambda(q_\lambda)\le F_\lambda(q_{\mathcal A})=G(q_{\mathcal A})$, so
$G(q_\lambda)\le G(q_{\mathcal A})$ and
$\lambda\norm{Cq_\lambda-b}^2\le G(q_{\mathcal A})$, using $G\ge0$; the
constraint violation tends to $0$. Decompose
$q_\lambda=v_\lambda+w_\lambda$ with $w_\lambda\in\ker C$ and
$v_\lambda\perp\ker C$. The component $v_\lambda$ is bounded because
$Cq_\lambda$ is bounded and $C$ is injective on $(\ker C)^{\perp}$. On the
affine sets $\{Cq=Cq_\lambda\}$ the quadratic $G$ is strictly convex in the
$\ker C$ direction, uniformly in the translate, so the sublevel set
$\{G\le G(q_{\mathcal A})\}$ intersected with bounded constraint violation is
bounded in $w_\lambda$ as well. Any accumulation point of $q_\lambda$ lies in
$\mathcal A$ and minimizes $G$ there, hence equals $q_{\mathcal A}$.
\end{proof}

When each row of $C$ is a feature $\xi_i^{\top}$ and $b_i=\eta_i$, the
penalty $\lambda\norm{Cq-b}^2$ is a positive combination of original losses.
Every finite $\lambda$ therefore yields a legal, strictly positively
weighted, strictly convex objective, and the risk of its output approaches
the risk of $q_{\mathcal A}$. Since $\Ls(n;\Astar)$ is an infimum, limits of
this kind suffice for upper bounds.

\begin{lemma}[anchored reduction in $\R^3$]\label{lem:anchor3}
Let a whitened system in $\R^3$ contain a point with $\eta_0=0$ and
$\xi_0\ne0$. Then for every $\varepsilon>0$ there is a strictly convex,
strictly positively weighted selection of at most $4$ points with
$1+\norm{\hat q}^2\le\frac32+\varepsilon$.
\end{lemma}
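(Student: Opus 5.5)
Project along the anchor direction and reduce to dimension $2$, then use Theorem~\ref{thm:wend} in $\R^2$, which costs $3$ points and gives excess $1/2$. The anchor is a zero-residual point: its loss $(\ip{q}{\xi_0}-0)^2$ pins the linear functional $\ip{q}{\xi_0}$ to the full-data optimal value $0$. Concretely, I would restrict attention to the hyperplane $\mathcal A=\{q:\ip{q}{\xi_0}=0\}$ and use the anchor with a large weight as a penalty. Lemma~\ref{lem:penalty} then converts the constrained minimizer into strictly convex legal objectives with at most one extra point. That leaves a budget of $3=2\cdot2-1$ for the problem on $\mathcal A\cong\R^2$.

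\textbf{Reduction.} Let $u=\xi_0/\norm{\xi_0}$ and $W=u^{\perp}$, and let $P$ denote orthogonal projection onto $W$. For $q\in W$ the full-data risk is still $1+\norm{q}^2$ by \eqref{eq:ratio}. On $W$ the predictions are $\ip{q}{P\xi_i}$, so the restricted problem is the regression with features $P\xi_i$ and labels $\eta_i$. Its data are not whitened in $W$, so I would rewhiten them. The covariance is $\frac1N\sum P\xi_i\xi_i^{\top}P=P$, the identity on $W$. The cross term $\frac1N\sum\eta_iP\xi_i=P\cdot0=0$, and the energy $\frac1N\sum\eta_i^2=1$. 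So the system $\{(P\xi_i,\eta_i)\}$ is already a whitened system in $W\cong\R^2$ with optimum $q=0$, optimal value $1$, and risk on $W$ equal to $1+\norm{q}^2$. This matches the full risk restricted to $W$ because the cross term along $u$ vanishes for $q\in W$.

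\textbf{Selection.} Apply Theorem~\ref{thm:wend} with $d=2$ to this projected system. It returns at most $3$ indices with strictly positive weights whose objective $G(q)=\sum\alpha_i(\ip{q}{P\xi_i}-\eta_i)^2$ is strictly convex on $W$, with minimizer $\hat q_W$ satisfying $\norm{\hat q_W}^2\le\frac12$. For $q\in\mathcal A=W$ we have $\ip{q}{P\xi_i}=\ip{q}{\xi_i}$, so $G$ agrees on $\mathcal A$ with the original selected objective $G_0(q)=\sum\alpha_i(\ip{q}{\xi_i}-\eta_i)^2$. Hence $G_0|_{\mathcal A}$ has the unique minimizer $\hat q_W$.

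Now take $F_\lambda=G_0+\lambda(\ip{q}{\xi_0})^2$. Lemma~\ref{lem:penalty} needs the Hessian nullspace of $G_0$ to meet $\ker C=W$ only at $0$, which holds by strict convexity of $G_0$ on $W$. The lemma then gives strictly convex objectives, using at most $4$ points, whose minimizers converge to $\hat q_W$. Normalizing the weights to a convex combination does not change minimizers. For $\lambda$ large the risk is at most $\frac32+\varepsilon$.

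\textbf{Main obstacle.} The step to check most carefully is that Theorem~\ref{thm:wend} applies to the projected family. Some projected features may vanish, namely when $\xi_i\parallel u$, and the anchor itself projects to $0$ with $\eta_0=0$. Such points are harmless: they sit in the $r=0$ case or contribute zero gradient, and the theorem only requires the whitened identities, which hold. A second point is that the theorem's selection might include the anchor (projected to zero), in which case it only contributes zero features and can be dropped or merged with the penalty. A third point is that all selected weights stay strictly positive, which the penalty construction preserves.
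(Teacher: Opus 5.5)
Your proposal is correct and follows the paper's proof essentially step for step. You project to $W=\xi_0^{\perp}$ and check the three whitened identities, then apply Theorem~\ref{thm:wend} with $d=2$ and note that the resulting objective agrees on $W$ with the original three-dimensional losses. Finally you reinstate the anchor as a penalty via Lemma~\ref{lem:penalty}, with the nullspace condition supplied by strict convexity on $W$; your worry about the selection containing zero-projected-feature points is moot, since Theorem~\ref{thm:wend} never selects them.
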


\begin{proof}
Let $W=\xi_0^{\perp}$ and $\zeta_i=P_W\xi_i$. The projected family
$\{(\zeta_i,\eta_i)\}$ satisfies the three identities \eqref{eq:threeid} on
$W\cong\R^2$: the covariance projects to $P_WP_W=I_W$, the normal equation
and the energy are unchanged. Apply Theorem~\ref{thm:wend} with $d=2$ to get
at most $3$ indices and positive weights whose objective $G$, formed from the
\emph{original} three-dimensional losses, is strictly convex on $W$ with
unique minimizer $q_W\in W$ satisfying $1+\norm{q_W}^2\le\frac32$. The
selected features control only their $W$-components; strict convexity on $W$
holds because the chosen $\zeta$'s span $W$. Add the anchor constraint
$\ip{q}{\xi_0}=0$ through Lemma~\ref{lem:penalty}: the penalty is the anchor
point's own loss, the nullspace condition holds since a direction killed by
both $G$ and the anchor would lie in $W$ and contradict strict convexity
there, and $q_\lambda\to q_W$. A large finite $\lambda$ gives a four-point
selection within $\varepsilon$.
\end{proof}

Consequently, in the branches below we may assume
\begin{equation}\label{eq:nozero3}
g_i=0\ \Longrightarrow\ \xi_i=0,
\end{equation}
because a point with $\eta_i=0\ne\xi_i$ triggers Lemma~\ref{lem:anchor3} and
the bound $\frac32<\frac53$.

\begin{lemma}[plane energy]\label{lem:plane}
Let $C$ be a positive circuit of three gradients of rank $2$ with plane
$P=\spn\{g_i:i\in C\}$ and weights $\alpha_i>0$, $\sum_{i\in C}\alpha_ig_i=0$.
Define $R(P)=\frac1N\sum_{i:\xi_i\in P}\eta_i^2$, where points with
$\xi_i=0$ count as members of $P$. Then under \eqref{eq:nozero3} there is a
strictly convex four-point selection with
$\norm{\hat q}^2\le1-R(P)$.
\end{lemma}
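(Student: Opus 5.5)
The plan is to select the three circuit points of $C$ together with one further point chosen to control the direction orthogonal to $P$, and to bound $\norm{\hat q}^2$ by the energy that lives off $P$. Let $u$ be a unit normal to $P$ in $\R^3$. By \eqref{eq:nozero3}, every point of $C$ has $\eta_i\ne0$, so its feature $\xi_i=g_i/\eta_i$ lies in $P$. The three circuit gradients span $P$ and carry the strictly positive relation $\sum_{i\in C}\alpha_ig_i=0$. Hence the restriction of $\sum_{i\in C}\alpha_i(\ip{q}{\xi_i}-\eta_i)^2$ to $P$ is strictly convex with stationary point $q_P=0$, as in Lemma~\ref{lem:cert} applied inside $P$. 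In other words, the circuit pins the $P$-component of $q$ at $0$ regardless of the normal component.

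Next consider the points with $\xi_i\notin P$. I would decompose each such feature as $\xi_i=p_i+t_iu$ with $p_i\in P$ and $t_i\ne0$. The covariance identity projected onto $u$ gives $\frac1N\sum_i t_i^2=1$, where the sum effectively runs over $\xi_i\notin P$. The residual energy off $P$ is $\frac1N\sum_{\xi_i\notin P}\eta_i^2=1-R(P)$. I would add a single such point $i_\ast$ with a small positive weight $\varepsilon$. The joint Hessian is then positive definite, since the circuit covers $P$ and $t_{i_\ast}\ne0$ covers $u$.

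The key computation is as follows. As the weight $\varepsilon$ varies, the minimizer $\hat q$ of the four-point objective is the point that zeroes the new point's residual, subject to the $P$-component being forced toward $0$ by the circuit. In the limit $\varepsilon\to0$, or more precisely by reversing the roles of anchor and penalty so the circuit acts as a large-$\lambda$ penalty via Lemma~\ref{lem:penalty}, the minimizer tends to $\hat q=(\eta_{i_\ast}/t_{i_\ast})u$. So I need some $i_\ast$ with $(\eta_{i_\ast}/t_{i_\ast})^2\le1-R(P)$. This is the same averaging trick used in Theorem~\ref{thm:wend}: $1-R(P)$ is the $\{t_i^2/N\}$-weighted average of $(\eta_i/t_i)^2$ over off-plane points, and the total weight $\frac1N\sum t_i^2=1$. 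Hence some point achieves at most the average.

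The main obstacle is that the claimed bound is stated without an $\varepsilon$. I would therefore try to get an exact rather than a limiting statement. Because the circuit objective restricted to $P$ has zero gradient at $0$, I would check directly that the exact minimizer of the four-point objective with any positive weights is $(\eta_{i_\ast}/t_{i_\ast})u$. The circuit's contribution must also vanish in the direction $u$ there, since its features lie in $P$ and so it is independent of the $u$-component. Its gradient at $q=su$ is therefore $-2\sum\alpha_i\eta_i\xi_i=0$. The new point's loss vanishes at $q=su$ with $s=\eta_{i_\ast}/t_{i_\ast}$, because $\ip{su}{\xi_{i_\ast}}=st_{i_\ast}=\eta_{i_\ast}$. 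So $su$ is a stationary point of a strictly convex objective and hence the unique minimizer, with no penalty limit needed. Finally, I would check the degenerate case in which no off-plane point exists; this is impossible, because the features span $\R^3$.
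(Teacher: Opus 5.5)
Your proposal is correct and follows the paper's proof. You use the same weighted averaging of $(\eta_j/\ip{u}{\xi_j})^2$ against the weights $\ip{u}{\xi_j}^2/N$ over the off-plane points, and then the same exact check that $(\eta_{i_\ast}/t_{i_\ast})u$ is a stationary point (circuit residuals equal to $-\eta_i$, the new point interpolated) of a strictly convex four-point objective; the penalty-limit detour in your third paragraph is unnecessary, since, as your final paragraph shows, this point is the unique minimizer for every choice of positive weights.
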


\begin{proof}
Let $u$ be a unit normal of $P$. Every point with $\xi_j\notin P$ has
$\ip{u}{\xi_j}\ne0$ and, by \eqref{eq:nozero3}, $\eta_j\ne0$. The covariance
identity gives $\sum_{j:\xi_j\notin P}\ip{u}{\xi_j}^2=N$, so by weighted
averaging some $j$ has
$\eta_j^2/\ip{u}{\xi_j}^2\le\frac1N\sum_{\xi_j\notin P}\eta_j^2=1-R(P)$.
Select the circuit with its weights and the point $j$, and let
$q_j=\bigl(\eta_j/\ip{u}{\xi_j}\bigr)u$. Circuit residuals at $q_j$ are
$-\eta_i$ since $q_j\perp P$, so the circuit part of the gradient is
$-2\sum\alpha_ig_i=0$; point $j$ is interpolated at $q_j$. The features span
$\R^3$, so $q_j$ is the unique minimizer, and
$\norm{q_j}^2\le1-R(P)$.
\end{proof}

\subsection{Five-vector positive bases of $\R^3$}

\begin{theorem}[classification]\label{thm:class5}
Let $B=\{b_1,\dots,b_5\}$ be a positive basis of $\R^3$. Up to renumbering,
independent positive rescaling of each vector, and an invertible linear map,
\[
b_1=e_1,\quad b_2=e_2,\quad b_3=-e_1-e_2,\quad
b_4=e_3-\tfrac\kappa2e_1,\quad b_5=-e_3-\tfrac\kappa2e_1,
\]
for some $\kappa\ge0$; equivalently
\begin{equation}\label{eq:tworel}
b_1+b_2+b_3=0,\qquad \kappa b_1+b_4+b_5=0 .
\end{equation}
For $\kappa=0$ the basis splits into a rank-$2$ triangle circuit and an
antipodal pair; all $\kappa>0$ form one orbit, in which two rank-$2$
triangle circuits share the vector $b_1$.
\end{theorem}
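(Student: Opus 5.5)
The plan is to work with the kernel of the linear map $\R^5\to\R^3$, $\lambda\mapsto\sum_i\lambda_ib_i$, which has dimension $2$ since $B$ spans. By Lemma~\ref{lem:l345}(1) the kernel contains a strictly positive vector. The cone $K_+=\ker\cap\R^5_{+}$ is therefore a two-dimensional pointed cone with nonempty interior relative to the kernel. Its two extreme rays $\lambda,\mu$ have supports $S,T$ that are positive circuits by Lemma~\ref{lem:l345}(3). The positive vector lies strictly between them, so $S\cup T=[5]$. A positive circuit of size $p$ has rank $p-1\le3$, so $|S|,|T|\in\{2,3,4\}$.

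The next step is to exclude the size-$4$ case. If $|S|=4$, then $S$ is a positive circuit of rank $3$, so by Lemma~\ref{lem:l345}(2) read in reverse (a strictly positive dependence on $m+1$ vectors spanning $\R^m$ gives positive spanning), $S$ already positively spans $\R^3$. This contradicts the minimality of $B$. Hence each circuit is a rank-$1$ antipodal pair or a rank-$2$ triangle.

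Sizes are then constrained by $|S|+|T|\ge5$. Two pairs would cover only four indices, so at least one circuit is a triangle. I would then split into two cases.

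\begin{itemize}
\item \emph{Triangle and pair, disjoint} ($|S\cap T|=0$). Rescale the triangle to $b_1+b_2+b_3=0$ with $b_1=e_1$ and $b_2=e_2$. The pair $b_4,b_5$ is antipodal, and after rescaling $b_4+b_5=0$. Spanning forces $b_4\notin\spn(e_1,e_2)$, and a linear map fixing $e_1,e_2$ sends $b_4$ to $e_3$. This is $\kappa=0$.
\item \emph{Two triangles sharing one vector} ($|S\cap T|=1$). Call the shared vector $b_1$, with $S=\{1,2,3\}$ and $T=\{1,4,5\}$. Normalize $b_1+b_2+b_3=0$ as before. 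Write the second relation as $\kappa b_1+b_4+b_5=0$ with $\kappa>0$, after rescaling $b_4$ and $b_5$ so that their coefficients are $1$. The planes of $S$ and $T$ are distinct, since otherwise $B$ would lie in a plane. Choose $e_3$ as $\tfrac12(b_4-b_5)$, which lies outside $\spn(e_1,e_2)$. Then $b_4=e_3+\tfrac12(b_4+b_5)=e_3-\tfrac\kappa2e_1$, and similarly for $b_5$. Map this $e_3$ to the standard $e_3$.
\end{itemize}

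Other overlaps are impossible. A pair and a triangle sharing a vector would cover only four indices. Two triangles sharing two vectors would also cover only four.

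The remaining points are that all $\kappa>0$ lie in one orbit and that each normal form really is a positive basis. For the orbit claim, rescale $b_4$ and $b_5$ by $t>0$: the relation becomes $\kappa b_1+t^{-1}(tb_4)+t^{-1}(tb_5)=0$, that is $t\kappa b_1+tb_4+tb_5=0$. Renormalizing then changes $\kappa$ to $t\kappa$. Since rescalings are allowed, every $\kappa>0$ is equivalent to $\kappa=1$. I need to check that this stays within the stated normal form, possibly after applying a diagonal map on $e_3$.

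For the positive-basis check, the two circuit relations show that $\pos(B)$ contains $\pm$ a spanning set and hence equals $\R^3$. For minimality, removing any $b_i$ leaves four vectors whose kernel is spanned by at most one of the two circuits. That circuit is not supported on all four vectors, so by Lemma~\ref{lem:l345}(1) these four do not positively span.

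The main obstacle is the bookkeeping in the overlap analysis, together with the orbit claim when $\kappa>0$. I would handle the orbit claim by explicit rescaling as above and verify it directly.
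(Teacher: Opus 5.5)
Your proposal is correct and follows the paper's proof essentially step for step. It takes the two extreme rays of the two-dimensional dependence cone as circuit supports covering all five indices, uses minimality of $B$ to exclude a rank-$3$ circuit, reduces to the disjoint $(3,2)$ and shared-one $(3,3)$ cases, and obtains the normal form from $e_3=\tfrac12(b_4-b_5)$, with a rescaling of $b_4,b_5$ (equivalently the paper's diagonal map) collapsing all $\kappa>0$ to one orbit. Your extra check that the normal forms are positive bases is not needed for the statement. In that check, removing the shared vector $b_1$ leaves a kernel spanned by the mixed-sign vector $(0,\kappa,\kappa,-1,-1)$ rather than by a circuit, though your conclusion that the remaining four vectors do not positively span still holds.
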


\begin{proof}
Let $K$ be the kernel of the $3\times5$ matrix with columns $b_i$;
$\dim K=2$. The cone $\mathcal C=K\cap\R^5_{+}$ contains a strictly positive
vector by Lemma~\ref{lem:l345}(1), so it is a two-dimensional pointed cone
with exactly two extreme rays. By Lemma~\ref{lem:l345}(3) each extreme ray is
supported on a positive circuit. A circuit of rank $3$ would positively span
$\R^3$ with at most four elements, contradicting minimality of $B$; so both
circuits have rank at most $2$ and support size at most $3$. The strictly
positive vector is a positive combination of the two rays, so the two
supports cover $\{1,\dots,5\}$. Two supports of size at most $3$ covering
five coordinates are either disjoint of sizes $3,2$, or of sizes $3,3$
sharing one coordinate. The first case is a triangle circuit plus an
antipodal pair; the second is two triangles sharing one vector, whose planes
differ since the matrix has rank $3$. Normalizing the first triangle to
$e_1,e_2,-e_1-e_2$ and the second relation to $\kappa b_1+b_4+b_5=0$, and
sending $(b_4-b_5)/2$ to $e_3$, yields the stated form. A diagonal map
$\mathrm{diag}(\lambda,\lambda,\mu)$ rescales $\kappa$ by $\lambda/\mu$, so
all $\kappa>0$ are equivalent.
\end{proof}

\subsection{The coupled branch: a three-plane cover}

Fix a coupled basis, positively normalized as
\begin{equation}\label{eq:coupled}
a_1+a_2+a_3=0,\qquad \kappa a_1+a_4+a_5=0,\qquad\kappa>0,
\end{equation}
and write $P_A=\spn(a_1,a_2)$, $P_B=\spn(a_1,a_4)$.

\begin{lemma}\label{lem:cone2}
Assume no four gradients form a positive circuit of rank $3$. Then every
nonzero gradient $g$ outside the basis satisfies $-g=\alpha a_i$ or
$-g=\alpha a_i+\beta a_j$ with $\alpha,\beta>0$.
\end{lemma}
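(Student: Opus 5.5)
We argue by a cone decomposition of $-g$ over the coupled basis. Since $a_1,\dots,a_5$ positively span $\R^3$, the vector $-g$ lies in $\pos(a_1,\dots,a_5)$. By Carath\'eodory's theorem for cones, $-g=\sum_{i\in S}c_ia_i$ with all $c_i>0$ and $\{a_i:i\in S\}$ linearly independent, so $|S|\le3$. We take such a representation with $|S|$ minimal. If $|S|\le2$ we obtain exactly the claimed form, namely $-g=\alpha a_i$ or $-g=\alpha a_i+\beta a_j$ with $\alpha,\beta>0$. Note that $|S|=0$ is impossible because $g\ne0$.

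It remains to rule out $|S|=3$ in every representation. Suppose $-g=\sum_{i\in S}c_ia_i$ with $|S|=3$, $c_i>0$, and the $a_i$, $i\in S$, independent. Then $g+\sum_{i\in S}c_ia_i=0$ is a strictly positive dependence on the four vectors $\{g\}\cup\{a_i:i\in S\}$. This set has rank $3$. Its dependence space is one-dimensional, since $\{a_i\}_{i\in S}$ is a basis. So the dependence is unique up to scaling, its support is all four vectors, and it is support-minimal: any dependence on a proper subset would be a multiple of this one, hence zero. Therefore the four vectors form a positive circuit of rank $3$. All of them are gradients, because the $a_i$ are basis gradients up to positive rescaling and positive circuits are invariant under positive rescaling. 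This contradicts the hypothesis. Hence every minimal conic representation has $|S|\le2$, and the claim follows.

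The main point to check is that minimality of $|S|$ is not really needed. Any representation with three independent basis vectors and positive coefficients already contradicts the hypothesis, so Carath\'eodory only has to supply some independent representation. Two degenerate cases also need a remark. First, $g$ could be a positive multiple of some $a_i$. Then $-g=-\lambda a_i$ still admits a conic representation, for instance $-a_1=a_2+a_3$, and the argument applies without change. Second, in a representation with $|S|=2$ the two vectors might be dependent, say $a_4$ and $a_5$ when $\kappa=0$; but here $\kappa>0$, and in any case Carath\'eodory selects independent supports. I expect no genuine obstacle. The only subtlety is ensuring that the hypothesis "no four gradients form a positive circuit of rank $3$" is applied to rescaled basis vectors, which is legitimate since the basis was obtained from gradients by positive normalization.
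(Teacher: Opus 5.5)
Your proposal is correct and follows the paper's argument: a support-minimal (Carath\'eodory) conic representation of $-g$ over the basis has linearly independent support of size at most $3$, and support size $3$ would make $g$ together with those three basis vectors a four-point positive circuit of rank $3$, contradicting the hypothesis. Your explicit check that this four-element dependence is support-minimal (the kernel is one-dimensional with full support) fills in a step the paper leaves implicit.
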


\begin{proof}
$-g\in\pos(B)$; take a support-minimal nonnegative representation. Its
support is linearly independent, else a dependence could shrink it. A support
of size $3$ would give, together with $g$, a four-point positive circuit of
rank $3$.
\end{proof}

Call $g$ of \emph{type $ij$} in the second case. Types inside one triangle
lie in its plane: types $12,13,23$ lie in $P_A$ and types $14,15,45$ in
$P_B$. The remaining types $24,25,34,35$ are called \emph{cross} types.

\begin{lemma}\label{lem:onecross}
Assume no four gradients form a positive circuit of rank $3$. Then at most
one cross type occurs.
\end{lemma}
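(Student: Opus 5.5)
The plan is to argue by contradiction. Suppose two distinct cross types occur, realized by gradients $g$ of type $ij$ and $g'$ of type $i'j'$ with $i,i'\in\{2,3\}$ and $j,j'\in\{4,5\}$. I would exhibit four vectors among $B\cup\{g,g'\}$ that carry a strictly positive linear dependence and span $\R^3$. Such a four-element set positively spans $\R^3$, by the argument of Case $r=d$ in Theorem~\ref{thm:wend}: write any vector as a linear combination and add a large multiple of the positive relation. By Lemma~\ref{lem:l345}(2) it is then a positive circuit of rank $3$, which contradicts the hypothesis.

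\textbf{Setup.} I would work in the normal form of Theorem~\ref{thm:class5}: $a_1=e_1$, $a_2=e_2$, $a_3=-e_1-e_2$, $a_4=e_3-\frac\kappa2e_1$, $a_5=-e_3-\frac\kappa2e_1$, with $\kappa>0$. By Lemma~\ref{lem:cone2} we may write $g=-(\alpha a_i+\beta a_j)$ and $g'=-(\alpha'a_{i'}+\beta'a_{j'})$ with all coefficients strictly positive. The relations \eqref{eq:coupled} are invariant under swapping $a_2\leftrightarrow a_3$ and under swapping $a_4\leftrightarrow a_5$. Modulo these symmetries, the six unordered pairs of cross types reduce to three cases: $\{24,25\}$ (same triangle vertex, different ear), $\{24,34\}$ (same ear, different vertex), and $\{24,35\}$.

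\textbf{Case $\{24,25\}$.} Here $g=-\alpha e_2-\beta e_3+\frac{\beta\kappa}2e_1$ and $g'=-\alpha'e_2+\beta'e_3+\frac{\beta'\kappa}2e_1$. The combination $\beta'g+\beta g'$ equals $\beta\beta'\kappa e_1-(\alpha\beta'+\alpha'\beta)e_2$. Adding $\beta\beta'\kappa\,a_3$ and then $(\alpha\beta'+\alpha'\beta+\beta\beta'\kappa)\,a_2$ gives zero, with every coefficient strictly positive. The set $\{g,g',a_2,a_3\}$ spans $\R^3$, since $g$ has a nonzero $e_3$-component, so this case is done. Note that the argument uses $\kappa>0$ essentially.

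\textbf{Remaining cases and main obstacle.} For $\{24,34\}$ and $\{24,35\}$, I would again take a positive combination of $g$ and $g'$ that cancels a shared basis direction, and try to close it with two basis vectors. In $\{24,34\}$ the natural combination kills $a_2$ via $a_2+a_3=-a_1$ and leaves a vector of the form $c_1a_1-c_4a_4$ with $c_1,c_4>0$. In $\{24,35\}$ one rewrites $a_4=-\kappa a_1-a_5$ to put both $g$ and $g'$ over the frame $\{a_1,a_2,a_3,a_5\}$. The difficulty is that for the same-ear pair $\{24,34\}$ no single basis vector supplies the missing $-a_1$ direction. I would therefore first test all four-element subsets containing $g$ and $g'$ by solving the corresponding $3\times4$ kernel and checking its sign pattern; this is a small finite computation in the coordinates above.

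If no such subset works for some pair, I would fall back on subsets containing only one of $g,g'$ together with three basis vectors, where $g$'s own type forces a rank-$3$ circuit once the coefficient ratios are appropriate. The alternative is to show that this pair is in fact excluded by the other half of the hypothesis, namely the minimality built into Lemma~\ref{lem:cone2}. I expect this same-ear, different-vertex case to be the main obstacle.
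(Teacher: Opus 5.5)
\textbf{Gap: two of the three cases are not proved.} Your setup, your symmetry reduction to the three cases $\{24,25\}$, $\{24,34\}$, $\{24,35\}$, and your computation for $\{24,25\}$ are all correct; that case is exactly the paper's first case. The other two cases, however, are only announced. The fallbacks you propose for the same-ear case would not close it. A single cross gradient together with basis vectors need not form a rank-$3$ four-point circuit. Indeed, $B\cup\{g\}$ with one cross gradient is precisely the configuration that Lemma~\ref{lem:cover} goes on to allow under the same hypothesis. So any contradiction must use both $g$ and $g'$. Nor is there a further ``minimality'' hypothesis in Lemma~\ref{lem:cone2} to exploit. As written, the lemma is proved in only one of its three cases.

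\textbf{The missing step.} In the same-ear case you stopped at
$\gamma g+\alpha g'=\alpha\gamma a_1-(\gamma\beta+\alpha\delta)a_4$,
with $g=-\alpha a_2-\beta a_4$ and $g'=-\gamma a_3-\delta a_4$. It is true that no single basis vector supplies $-a_1$. But the ear pair does: the second relation in \eqref{eq:coupled} gives $a_1=-(a_4+a_5)/\kappa$. Substituting yields
\[
\gamma g+\alpha g'+\Bigl(\tfrac{\alpha\gamma}{\kappa}+\gamma\beta+\alpha\delta\Bigr)a_4+\tfrac{\alpha\gamma}{\kappa}\,a_5=0 .
\]
This is a strictly positive dependence on $\{g,g',a_4,a_5\}$. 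It has rank $3$ because $g$ has a nonzero $a_2$-component and therefore leaves $P_B$.

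The case $\{24,35\}$ closes the same way, with the two relations used in the other order. Take $g'=-\gamma a_3-\delta a_5$. Then
\[
\delta g+\beta g'=-\alpha\delta a_2-\beta\gamma a_3-\beta\delta(a_4+a_5)=-\alpha\delta a_2-\beta\gamma a_3+\beta\delta\kappa a_1 ,
\]
and $a_1=-a_2-a_3$ turns this into a positive dependence on $\{g,g',a_2,a_3\}$. This set has rank $3$ since $g$ has a nonzero $a_4$-component and so leaves $P_A$.

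\textbf{The common mechanism.} In all three cases you choose a positive combination of the two cross gradients so that either the triangle parts assemble into $a_2+a_3$ or the ear parts assemble into $a_4+a_5$. One circuit relation converts that sum into a multiple of the shared vector $a_1$. The other relation then converts $a_1$ into a negative combination of two basis vectors. This is exactly the paper's proof.
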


\begin{proof}
Let $p=-\alpha a_u-\beta a_v$ and $q=-\gamma a_{u'}-\delta a_{v'}$ with
$u,u'\in\{2,3\}$, $v,v'\in\{4,5\}$, $(u,v)\ne(u',v')$. In each of the three
cases we exhibit a four-point positive circuit, contradiction. If $u=u'$,
$v\ne v'$, compute using \eqref{eq:coupled}, for $u=2$:
$\delta p+\beta q=-(\delta\alpha+\beta\gamma+\beta\delta\kappa)a_2
-\beta\delta\kappa a_3$, so $\{p,q,a_2,a_3\}$ carries a positive dependence,
and its rank is $3$ since $p,q$ leave the plane $P_A$. If $v=v'$, $u\ne u'$,
for $v=4$:
$\gamma p+\alpha q
=-\bigl(\tfrac{\alpha\gamma}{\kappa}+\gamma\beta+\alpha\delta\bigr)a_4
-\tfrac{\alpha\gamma}{\kappa}a_5$, same conclusion with $a_4,a_5$. If
$u\ne u'$ and $v\ne v'$:
$\delta p+\beta q=-(\delta\alpha+\beta\delta\kappa)a_u
-(\beta\gamma+\beta\delta\kappa)a_{u'}$, same conclusion with $a_u,a_{u'}$.
\end{proof}

\begin{lemma}[three-plane cover]\label{lem:cover}
Assume no four gradients form a positive circuit of rank $3$. Then all
nonzero gradients lie in $P_A\cup P_B$, or in
$P_A\cup P_B\cup P_{uv}$ for a single cross type $(u,v)$ with
$P_{uv}=\spn(a_u,a_v)$. Each listed plane contains a rank-$2$ positive
triangle circuit.
\end{lemma}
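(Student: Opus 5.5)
The plan is to combine Lemma~\ref{lem:cone2} with Lemma~\ref{lem:onecross} and locate each gradient in a plane. The setting is a fixed coupled basis $B=\{a_1,\dots,a_5\}$ normalized as in \eqref{eq:coupled}. We assume that no four gradients form a rank-$3$ positive circuit. Take any nonzero gradient $g$.

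\emph{Gradients in the basis.} If $g\in B$ up to positive scaling, it already lies in $P_A\cup P_B$. Indeed, $a_1,a_2,a_3\in P_A$ because $a_3=-a_1-a_2$. Also $a_1,a_4,a_5\in P_B$ because $a_5=-\kappa a_1-a_4$.

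\emph{Gradients outside the basis.} Otherwise Lemma~\ref{lem:cone2} applies. Either $-g=\alpha a_i$, in which case $g$ lies on the line $\R a_i\subseteq P_A\cup P_B$. Or $-g=\alpha a_i+\beta a_j$ with $\alpha,\beta>0$, and then $g\in\spn(a_i,a_j)$. We sort by type:
\begin{itemize}
\item types $12,13,23$ give $\spn(a_i,a_j)=P_A$, since any two of $a_1,a_2,a_3$ span that plane;
\item types $14,15,45$ give $\spn(a_i,a_j)=P_B$ by the same reasoning;
\item the four cross types give $P_{uv}$ with $u\in\{2,3\}$ and $v\in\{4,5\}$.
\end{itemize}
By Lemma~\ref{lem:onecross} at most one cross type $(u,v)$ occurs among all gradients. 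This gives the dichotomy: all gradients lie in $P_A\cup P_B$, or in $P_A\cup P_B\cup P_{uv}$ for that single cross type.

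\emph{Triangle circuits in each plane.} In $P_A$ the triangle $\{a_1,a_2,a_3\}$ has positive relation $a_1+a_2+a_3=0$. In $P_B$ the triangle $\{a_1,a_4,a_5\}$ has relation $\kappa a_1+a_4+a_5=0$ with $\kappa>0$. Both are rank-$2$ positive circuits, minimal because any two of their vectors are independent.

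For the cross plane $P_{uv}$, pick a cross gradient $g$ of type $uv$, so $g=-\alpha a_u-\beta a_v$. Then $g+\alpha a_u+\beta a_v=0$ is a strictly positive relation on $\{g,a_u,a_v\}$, which spans $P_{uv}$. It remains to check that $a_u,a_v$ are independent. Note $a_u\in P_A$, and $a_v\in P_B$ is not in $\spn(a_1)$. Since $P_A\ne P_B$ and $P_A\cap P_B=\R a_1$, while $a_u\notin\R a_1$, the vectors $a_u,a_v$ are not parallel. Hence the triangle is a rank-$2$ circuit.

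The only step needing care is this independence check, together with the degenerate case $-g=\alpha a_i$. The rest is bookkeeping, since Lemma~\ref{lem:onecross} already carries the substantive combinatorics.
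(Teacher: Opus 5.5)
Your proof is correct and follows the paper's argument. Lemma~\ref{lem:cone2} reduces each extra gradient to a ray or a two-element type, the in-triangle types land in $P_A\cup P_B$, Lemma~\ref{lem:onecross} leaves at most one cross plane, and the triangle $\{g,a_u,a_v\}$ supplies the circuit in $P_{uv}$; your independence check for $a_u,a_v$ via $P_A\cap P_B=\R a_1$ is a detail the paper leaves implicit.
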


\begin{proof}
Lemma~\ref{lem:cone2} reduces every extra gradient to a ray or a two-element
cone type. The six in-triangle types lie in $P_A\cup P_B$, and by
Lemma~\ref{lem:onecross} at most one cross type $(u,v)$ occurs; its members
lie in $P_{uv}$. Rays lie in these planes as well. $P_A$ and $P_B$ contain
the two basis triangles; if the cross plane occurs, $a_u,a_v$ together with
any cross gradient of that type form a positive triangle circuit spanning
$P_{uv}$.
\end{proof}

\begin{proposition}[coupled branch]\label{prop:coupled}
Under \eqref{eq:nozero3}, if the nonzero gradients contain a coupled
five-vector positive basis and no four of them form a rank-$3$ positive
circuit, then some four-point strictly convex selection has
$\norm{\hat q}^2\le\frac23$.
\end{proposition}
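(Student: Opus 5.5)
The plan is to combine the three-plane cover of Lemma~\ref{lem:cover} with the plane-energy bound of Lemma~\ref{lem:plane}. By Lemma~\ref{lem:cover}, every nonzero gradient lies in $P_A\cup P_B$, or in $P_A\cup P_B\cup P_{uv}$ for a single cross type, and each listed plane carries a rank-$2$ positive triangle circuit. Under \eqref{eq:nozero3}, a point with nonzero feature has nonzero gradient, and then $\xi_i=g_i/\eta_i$ lies on the same line as $g_i$. So every point with $\xi_i\ne0$ has its feature in one of at most three planes, and points with $\xi_i=0$ count as members of every plane in the definition of $R(P)$. Hence the plane energies of the (at most three) listed planes $P^{(1)},P^{(2)},P^{(3)}$ together cover all the residual energy:
\[
R(P^{(1)})+R(P^{(2)})+R(P^{(3)})\;\ge\;\frac1N\sum_i\eta_i^2=1 .
\]
Overlaps between planes only increase the left side, since a point on an intersection line is counted in each plane containing it.

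By pigeonhole, some listed plane $P$ has $R(P)\ge\frac13$. Applying Lemma~\ref{lem:plane} with the triangle circuit in that plane gives a strictly convex four-point selection with $\norm{\hat q}^2\le1-R(P)\le\frac23$, which is exactly the claimed bound.

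In the two-plane case the same argument gives $R(P)\ge\frac12$, so some plane yields $\norm{\hat q}^2\le\frac12\le\frac23$.

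The main points to check are the following.
\begin{enumerate}
\item Lemma~\ref{lem:plane} needs a rank-$2$ positive circuit of three \emph{gradients} whose span is the plane. For $P_A$ and $P_B$ these are the basis triangles from \eqref{eq:coupled}. For $P_{uv}$, Lemma~\ref{lem:cover} supplies the circuit $\{a_u,a_v,g\}$ formed with a cross gradient $g$ of that type.
\item The energy accounting must be complete. It is important that points with $\xi_i=0$ are counted in $R(P)$, and \eqref{eq:nozero3} rules out points with $\eta_i=0\ne\xi_i$; those have zero energy anyway.
\item If a cross plane is listed only because rays or in-triangle types happen to lie in it, with no cross gradient present, we simply do not use it. In that case all gradients already lie in $P_A\cup P_B$ and the two-plane bound applies.
\end{enumerate}

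I expect the only delicate step to be the covering inequality: confirming that each feature really lies in a listed plane, rather than only its gradient. That is exactly where \eqref{eq:nozero3} and the collinearity $\xi_i\parallel g_i$ are needed.
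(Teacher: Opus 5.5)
Your proposal is correct and is essentially the paper's proof. Lemma~\ref{lem:cover}, together with the collinearity $\xi_i\parallel g_i$ under \eqref{eq:nozero3} and the convention that zero-feature points count in every plane, makes the plane energies sum to at least $1$. Pigeonhole then gives a plane with $R(P)\ge\frac13$ (or $\frac12$ with two planes), and Lemma~\ref{lem:plane} applied to that plane's triangle circuit yields a four-point strictly convex selection with $\norm{\hat q}^2\le\frac23$.
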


\begin{proof}
By Lemma~\ref{lem:cover} the nonzero features, which are collinear with
their gradients, lie in at most three planes, each carrying a positive
triangle circuit; points with $\xi_i=0$ lie in every plane. The three plane
energies cover total energy $1$, so some plane has $R(P)\ge\frac13$; with
two planes, some plane has $R(P)\ge\frac12$. Lemma~\ref{lem:plane} gives
$\norm{\hat q}^2\le1-R(P)\le\frac23$.
\end{proof}

\subsection{The split branch and the six-axis branch}

\begin{proposition}[split branch]\label{prop:split}
Under \eqref{eq:nozero3}, suppose the nonzero gradients contain a
$\kappa=0$ basis, i.e.\ a rank-$2$ triangle circuit with plane $V$ and an
antipodal pair on a line $L$ with $V\oplus L=\R^3$, and suppose all nonzero
gradients lie in $V\cup L$. Then some four-point strictly convex selection
has $\norm{\hat q}^2\le\frac23$.
\end{proposition}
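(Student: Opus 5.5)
Since every nonzero gradient lies in $V\cup L$ and features are collinear with their (nonzero) gradients under \eqref{eq:nozero3}, every point has $\xi_i\in V$, $\xi_i\in L$, or $\xi_i=0$. The plan is a two-way energy split. Let $R_V=\frac1N\sum_{\xi_i\in V\cup\{0\}}\eta_i^2$ and $R_L=\frac1N\sum_{\xi_i\in L\setminus\{0\}}\eta_i^2$, so $R_V+R_L=1$. Two selections compete. The first is Lemma~\ref{lem:plane} applied to the triangle circuit in $V$, which gives $\norm{\hat q}^2\le 1-R_V=R_L$. The second keeps the antipodal pair on $L$ and spends only two points in $V$. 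If $\min(R_L,\,2R_V)\le\frac23$ we are done, and this always holds: if $R_L>\frac23$ then $R_V<\frac13$, so $2R_V<\frac23$. The remaining work is to make the second selection achieve $2R_V$.

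\textbf{Orthogonality.} First I would show that $L\perp V$. Let $u$ be a unit normal to $V$. The covariance identity $I=\frac1N\sum\xi_i\xi_i^{\top}$ splits into a part supported in $V$ and a part supported in $L$, the latter being $c\,\ell\ell^{\top}$ for a unit vector $\ell$ spanning $L$. Take $x\in V$ with $x\perp\ell$. Then $\norm{x}^2=\frac1N\sum_{\xi_i\in V}\ip{x}{\xi_i}^2$, and since $A=\frac1N\sum_{\xi_i\in V}\xi_i\xi_i^{\top}$ is supported on $V$, this gives $A|_V\succeq$ identity on $\ell^{\perp}\cap V$. Apply the identity to $u$ as well: $1=c\ip{u}{\ell}^2$. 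Now apply it to $\ell$: $1=\ell^{\top}A\ell+c$. Writing $\ell=\ip{\ell}{u}u+\ell_V$ gives $\ell^{\top}A\ell=\ell_V^{\top}A\ell_V$, so $1=\ell_V^{\top}A\ell_V+1/\ip{u}{\ell}^2$. This forces $\ell_V=0$ and $\ip{u}{\ell}^2=1$. Hence $L=\R u$, $c=1$, and $A=P_V$, the same rigidity computation as \eqref{eq:orth}. The normal equation then projects to zero separately on $V$ and on $L$.

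\textbf{Second selection.} On $L$, the antipodal pair of gradients is canceled with positive weights, exactly as in the proof of Theorem~\ref{thm:wend}; both signs of $t_i\eta_i$ occur there by \eqref{eq:nozero3}. The block output is $0$ and uses two points. On $V$, treat the points with $\xi_i\in V\setminus\{0\}$ as a dataset in $V\cong\R^2$. Its covariance is a multiple of the identity and its optimum is $0$, so Lemma~\ref{lem:L2} gives two independent features whose interpolant $u_V$ satisfies $\norm{u_V}^2\le 2R_V'$. This is the same computation as in the proof of Theorem~\ref{thm:lp}, with $R_V'\le R_V$ the energy of the nonzero $V$-features. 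The two blocks act on orthogonal coordinates, so the four-point selection is strictly convex with $\norm{\hat q}^2\le 2R_V$.

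\textbf{Main obstacle.} The step I expect to be hardest is the orthogonality argument: it must handle zero-feature points and the bookkeeping of $A$ carefully, and it must also confirm that the $V$-block subdataset has enough points to apply Lemma~\ref{lem:L2}. That last point is guaranteed by the triangle circuit.
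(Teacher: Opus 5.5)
Your proposal is correct and follows the paper's proof essentially step for step. The covariance identity forces $L\perp V$ with the $V$-part equal to $P_V$: you get this by evaluating the quadratic form at the normal $u$ and at $\ell$, where the paper argues that $I-\lambda uu^{\top}$ has rank $2$ only if $\lambda=1$, and your auxiliary step with $x\perp\ell$ is unnecessary. Then the same two four-point selections compete, namely the triangle plus the best point on $L$ (your use of Lemma~\ref{lem:plane} gives exactly $R_L$) against the canceling antipodal pair plus a volume-sampling pair in $V$ (giving $2R_V$), and $\min\{R_L,2R_V\}\le\frac23$ closes the argument.
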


\begin{proof}
All nonzero features lie in $V\cup L$ by \eqref{eq:nozero3}. Write
$H_V,H_L$ for the feature second moments of the two groups; then
$I=H_V+H_L$ with $\rk H_V=2$, $\rk H_L=1$. Writing
$H_L=\lambda uu^{\top}$, the rank of $H_V=I-\lambda uu^{\top}$ equals $2$
only if $\lambda=1$; hence $L=\R u$, $V=u^{\perp}$, $H_V=P_V$, $H_L=P_L$.
Assign points with $\xi_i=0$ to the $V$ group and write
$R_V+R_L\le1$ for the group energies. The normal equation splits along the
orthogonal decomposition, so both groups have optimum $0$.

Selection A: the full triangle circuit in $V$ plus the best single point on
$L$. On $L$, $\frac1N\sum t_i^2=1$, and the weighted-average argument of
Theorem~\ref{thm:wend} yields a point with squared offset at most $R_L$; the
circuit fixes the $V$ component at $0$. Hence $\Delta_A\le R_L$ with $4$
points.

Selection B: an antipodal pair on $L$ with canceling weights, plus a
volume-sampling pair in $V$. The $V$ group, viewed as a dataset in
$V\cong\R^2$ of size $N_V$, has covariance $(N/N_V)I_V$ and optimal risk
$(N/N_V)R_V$. Lemma~\ref{lem:L2} with $r=2$ gives two independent features
whose interpolation point $u_V$ satisfies
$(N/N_V)\bigl(R_V+\norm{u_V}^2\bigr)\le3(N/N_V)R_V$, so
$\norm{u_V}^2\le2R_V$. The pair fixes the $L$ component at $0$; the joint
objective is strictly convex on $\R^3$ and separates, so
$\Delta_B\le2R_V$ with $4$ points.

Finally $\min\{R_L,2R_V\}\le\frac23$ whenever $R_V+R_L\le1$.
\end{proof}

\begin{proposition}[six-axis branch]\label{prop:sixaxis}
Under \eqref{eq:nozero3}, suppose the minimum-cardinality positively
spanning subset of the nonzero gradients has size $6$ and all nonzero
gradients lie on its three lines. Then some four-point strictly convex
selection has $\norm{\hat q}^2\le\frac23$.
\end{proposition}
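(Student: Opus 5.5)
The proof repeats the axis argument from the $r=d$ case of Theorem~\ref{thm:wend}, now with budget $d+1=4$ in place of $2d-1$. The minimum positively spanning subset has size $6$, so by Lemma~\ref{lem:pb} it is an antipodal pair on each of three lines $\R a_1,\R a_2,\R a_3$, and by hypothesis every nonzero gradient lies on these lines. Under \eqref{eq:nozero3}, every nonzero feature is collinear with its gradient, so every nonzero feature also lies on the three lines. Write $\xi_i=t_iu_j$ for $i\in I_j$ with $u_j$ a unit vector. The covariance identity then reads $I_3=\sum_jQ_ju_ju_j^{\top}$, and the square-matrix argument behind \eqref{eq:orth} forces $u_1,u_2,u_3$ orthonormal with $Q_j=1$. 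Projecting the normal equation onto $u_j$ gives $\sum_{i\in I_j}t_i\eta_i=0$ on each line, and every term is nonzero, so each line carries gradients of both signs. Finally, the line energies $R_j=\frac1N\sum_{i\in I_j}\eta_i^2$ satisfy $R_1+R_2+R_3\le1$, since zero-feature points may carry residual energy.

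The problem now separates into three orthogonal one-dimensional regressions. A selection uses either a canceling pair (two points) or a single best point (one point) on each line, and each line needs at least one point for strict convexity. With budget $4$ the only allocation is one pair on one line and single points on the other two. The pair fixes its coordinate at $0$, exactly as in Theorem~\ref{thm:wend}. On each single-point line $j$, the weighted-average argument gives a point with $(\eta_i/t_i)^2\le R_j$, and we select it. The objective is then strictly convex and separable, with
\[
\norm{\hat q}^2\le\sum_{j\ne j_0}R_j=\Bigl(\sum_jR_j\Bigr)-R_{j_0}.
\]
Choosing the pair on the line $j_0$ of largest energy, $R_{j_0}\ge\frac13\sum_jR_j$, so the bound is at most $\frac23\sum_jR_j\le\frac23$. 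This is the block bound of Theorem~\ref{thm:lp} for $r=(1,1,1)$, $k=1$, and the value $\frac23$ is attained there.

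No step is a serious obstacle; the one point to check is orthogonality. The hypothesis only places the gradients on three lines, and it is \eqref{eq:nozero3} that puts all nonzero features on the same lines, which is what lets the covariance identity force the lines orthonormal. Without \eqref{eq:nozero3}, features off the lines could break separability. Strict convexity of the selected objective holds because the four chosen features span all three orthogonal lines.
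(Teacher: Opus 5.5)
Your proposal is correct and follows the paper's proof essentially step for step: Lemma~\ref{lem:pb} gives antipodal pairs on three lines, \eqref{eq:nozero3} places every nonzero feature on those lines, the covariance identity then forces orthonormal directions with unit feature energy as in \eqref{eq:orth}, and a canceling pair on the highest-energy line plus the best single point on each of the other two lines gives $\norm{\hat q}^2\le R_2+R_3\le\frac23$ with four points. Your explicit check that the projected normal equation gives both signs on every line is a detail the paper leaves implicit; the basis's own antipodal pair on that line would serve equally well.
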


\begin{proof}
By Lemma~\ref{lem:pb} the basis consists of antipodal pairs on three lines.
All nonzero features lie on the lines by \eqref{eq:nozero3}, so the
covariance identity with three rank-one groups forces, as in
\eqref{eq:orth}, orthonormal directions with unit feature energy each. Let
$R_1\ge R_2\ge R_3$ be the line energies, $\sum R_j\le1$. Select an
antipodal canceling pair on the top-energy line and the best single point on
each of the other two; per line the offset is $0$ or at most $R_j$. Then
$\norm{\hat q}^2\le R_2+R_3\le\frac23$, with $2+1+1=4$ points.
\end{proof}

\subsection{The main theorem for $(3,4)$}

\begin{theorem}\label{thm:fw34}
$\Fw(3,4)=\frac53$.
\end{theorem}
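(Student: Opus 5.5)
The lower bound is Theorem~\ref{thm:gammalb} at $(d,k)=(3,1)$, where $1+\Gam_{3,1}=\frac53$ (the balanced partition with $s=2$, $r=(1,2)$). For the upper bound we reduce to whitened systems as in Theorem~\ref{thm:endpoint-ub}. If the feature rank is below $3$, then $4\ge 2\rho$ and the ratio is $1$ by Theorem~8 of \citet{HMSYerm}. If $\Ls=0$, realizability gives ratio $1$. Otherwise we whiten and, as in Lemma~\ref{lem:cert}, aim for a strictly convex, strictly positively weighted selection of at most $4$ points with $\norm{\hat q}^2\le\frac23$, possibly only up to $\varepsilon$ through penalty limits; since $\Ls(4;\Astar)$ is an infimum, this suffices.

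\emph{Case analysis on $r=\dim\spn\{g_i\}$.}
\begin{itemize}
\item If $r\le1$, the argument of Theorem~\ref{thm:wend} yields an exact certificate with $d+r\le4$ points.
\item If $r=2$, the nonzero gradients positively span their plane $V$. Steinitz inside $V$ gives at most $4$ gradients, which is too many. Instead we use a point with $g_i=0$ and $\xi_i\notin V$, which exists and has $\eta_i=0$, so Lemma~\ref{lem:anchor3} applies and gives $\frac32+\varepsilon\le\frac53$. We should double check that this anchor reduction covers the case. Otherwise one needs rank-$2$ positive bases of size $3$ in $V$ plus one completing point, which gives a certificate with $4$ points; if $V$ needs a $4$-element basis (two antipodal pairs), the anchor route handles it.
\item If $r=3$, we may assume \eqref{eq:nozero3}, since otherwise Lemma~\ref{lem:anchor3} applies. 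The nonzero gradients positively span $\R^3$ by the argument of Theorem~\ref{thm:wend}. If some four of them positively span, they form a rank-$3$ positive circuit, and Lemma~\ref{lem:cert} gives ratio $1$.
\end{itemize}

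\emph{Structure of the minimum positively spanning subset when $r=3$.} Assume no four gradients positively span, and take a minimum-cardinality positively spanning subset $B$, so $|B|\in\{5,6\}$. If $|B|=6$ (no five span), Lemma~\ref{lem:rig} with $m=3$ puts all gradients on three lines, and Proposition~\ref{prop:sixaxis} applies. If $|B|=5$, Theorem~\ref{thm:class5} classifies it.
\begin{itemize}
\item When $\kappa>0$, Proposition~\ref{prop:coupled} applies directly, because the no-rank-$3$-circuit hypothesis is exactly our standing assumption.
\item When $\kappa=0$, $B$ is a triangle circuit in a plane $V$ plus an antipodal pair on a line $L$. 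We must show every other nonzero gradient lies in $V\cup L$, so that Proposition~\ref{prop:split} applies. Take $g\notin V\cup L$ and write $-g$ via a support-minimal representation in $\pos(B)$ as in Lemma~\ref{lem:cone2}. The support is independent and of size at most $2$, since size $3$ yields a four-point rank-$3$ circuit. A support contained in $V$ or in $L$ forces $g\in V\cup L$. Otherwise the support is $\{a,\ell\}$ with $a$ a triangle vertex and $\ell$ one pair element, so $g=-\alpha a-\beta\ell$. Then $g$, $\ell$, and the two other triangle vertices positively span $\R^3$: $g+\beta\ell=-\alpha a$ lies in the positive cone of the other two vertices, so we get a four-point positive dependence of rank $3$, which is a contradiction. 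Hence the split hypothesis holds.
\end{itemize}

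The main obstacle I expect is bookkeeping in the degenerate branches rather than any single inequality. Two points need care: the $r=2$ case, where the plane of gradients may require a $4$-element positive basis and the anchor lemma must supply the fourth point correctly; and ensuring that zero-feature pure-residual points and ties never invalidate the strict convexity claims. The $\kappa=0$ closure argument above is the only genuinely new geometric step, and I would verify it carefully.
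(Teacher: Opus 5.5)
\textbf{Gap in the split branch.} Your outline follows the paper's proof closely: the anchor reduction, exact certificates, the $|B|\in\{5,6\}$ classification, and Propositions~\ref{prop:coupled}, \ref{prop:split}, \ref{prop:sixaxis}. However, the step you flagged as new, closing the split branch $\kappa=0$ by contradiction, is false.

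\emph{The sign error.} From $g=-\alpha a-\beta\ell$ and $-\alpha a=c'a'+c''a''$ with $c',c''>0$ you get
$g+\beta\ell-c'a'-c''a''=0$. This relation has mixed signs, so it is not a positive dependence, and $\{g,\ell,a',a''\}$ need not positively span.

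\emph{A counterexample.} Take the triangle $e_1,e_2,-e_1-e_2$, the pair $\pm e_3$, and $g=-e_1-e_3$.
\begin{enumerate}
\item Your four vectors $\{-e_1-e_3,\,e_3,\,e_2,\,-e_1-e_2\}$ all have nonpositive first coordinate, so $e_1\notin\pos$ of them.
\item Index the six vectors in the order listed, with $g$ last. A nonnegative dependence among them forces $\lambda_1=\lambda_3+\lambda_6$, $\lambda_2=\lambda_3$ and $\lambda_4=\lambda_5+\lambda_6$.
\item Hence the only positive circuits are the triangle, the pair and $\{e_1,e_3,g\}$, and no four of the six vectors positively span.
\end{enumerate}
So a cross gradient can coexist with a split minimum basis without contradicting your standing assumption. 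Proposition~\ref{prop:split} then cannot be invoked, because its proof needs every nonzero feature in $V\cup L$ in order to force $L\perp V$ from the covariance identity.

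\textbf{The missing idea: switch bases.} The cross relation gives $-\beta\ell=g+\alpha a\in\pos(g,a)$. Together with the triangle, which positively spans $V$, this shows that $\{a,a',a'',g,\ell\}$ positively spans $\R^3$. Since no four gradients positively span, this set is a five-element positive basis. It contains the two triangle circuits $\{a,a',a''\}$ and $\{a,\ell,g\}$, which share $a$, so it is of coupled type with $\kappa>0$, and Proposition~\ref{prop:coupled} applies to it.

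The repaired $|B|=5$ case reads:
\begin{enumerate}
\item If the nonzero gradients contain any coupled five-vector positive basis, apply Proposition~\ref{prop:coupled}.
\item Otherwise, your support-minimality argument shows that every nonzero gradient lies in $V\cup L$, and Proposition~\ref{prop:split} applies.
\end{enumerate}
This is how the paper closes the branch. With this repair the rest of your argument goes through. In particular, the $r=2$ case needs no hedging: Lemma~\ref{lem:anchor3} covers it outright.
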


\begin{proof}
The lower bound is Theorem~\ref{thm:gammalb}. For the upper bound, let $D$ be
any dataset. If the features span a subspace of dimension $\rho\le2$, budget
$4\ge2\rho$ gives ratio $1$ as in Theorem~\ref{thm:endpoint-ub}. If
$\Ls=0$, three independent features determine the realizable predictor. So
assume full rank and $\Ls>0$; whiten. If some point has
$\eta_i=0\ne\xi_i$, Lemma~\ref{lem:anchor3} gives ratio at most
$\frac32+\varepsilon$ and we are done by the infimum; so assume
\eqref{eq:nozero3}. If the nonzero gradients spanned a proper subspace, some
feature would lie outside it with zero gradient, contradicting
\eqref{eq:nozero3} and full feature rank; so they span, and as in
Theorem~\ref{thm:wend} they positively span $\R^3$. Let $B$ be a
minimum-cardinality positively spanning subset; $|B|\in\{4,5,6\}$ by
Lemma~\ref{lem:pb}.

If $|B|=4$: by Lemma~\ref{lem:l345}(2), $B$ is a full-rank positive circuit;
Lemma~\ref{lem:cert} recovers exactly with $4$ points.

If $|B|=5$: apply Theorem~\ref{thm:class5}. If at any stage four gradients
form a rank-$3$ positive circuit, Lemma~\ref{lem:cert} recovers exactly.
Otherwise, in the coupled case Proposition~\ref{prop:coupled} applies. In
the split case, suppose some nonzero gradient $g$ lies outside $V\cup L$.
Its minimal cone representation uses one triangle vertex $a_u$ and one pair
gradient $\ell$, giving a cross relation $g+\alpha a_u+\beta\ell=0$ with
$\alpha,\beta>0$. The five gradients consisting of the triangle, $g$ and
$\ell$ then positively span $\R^3$: the triangle positively spans $V$, and
the cross relation exhibits $-\ell$ as a positive combination. If some four
of these five positively span, we get exact recovery; otherwise they form a
five-vector positive basis containing two triangles that share $a_u$, hence
a coupled basis, and the previous case applies. If no such gradient exists,
Proposition~\ref{prop:split} applies.

If $|B|=6$: for an extra gradient with components on all three lines, the
rigidity construction of Lemma~\ref{lem:rig} with $|S|=3$ produces a
four-element positively spanning set, hence exact recovery. For an extra
gradient on exactly two lines, the same construction produces a five-element
positively spanning set, which reduces to the $|B|\le5$ cases. Otherwise all
nonzero gradients lie on the three lines and
Proposition~\ref{prop:sixaxis} applies.

Every branch ends with a strictly convex objective of at most $4$ points and
$\norm{\hat q}^2\le\frac23$, or with the anchored branch at
$\frac32+\varepsilon$. By \eqref{eq:ratio} and the infimum in the
definition, $\Ls(4;\Astar)\le\frac53\Ls$.
\end{proof}

\begin{remark}[the refuted interpolation formula]\label{rem:refute}
The three known values $\Fw(d,d)=d+1$, $\Fw(d,2d-1)=1+1/d$ and
$\Fw(d,2d)=1$ are consistent with the guess $\Fw(d,n)=1+(2d-n)^2/d$, which
predicts $\Fw(3,4)=\frac73$. Theorem~\ref{thm:fw34} refutes this formula at
the smallest cell where it differs from $1+\Gam_{d,k}$.
\end{remark}

\begin{proposition}[the coupled class stays below $3/2$]\label{prop:sharp5}
Let the dataset consist of exactly five points whose gradients form a
coupled five-vector positive basis. Then
$\Ls(4;\Astar)<\frac32\Ls$.
\end{proposition}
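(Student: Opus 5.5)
The plan is to reuse the plane-energy argument of Lemma~\ref{lem:plane} on each of the two triangle planes and to exploit the fact that the shared vector is counted in both plane energies. First the reduction to whitened coordinates. Since the gradients form a positive basis they are nonzero, so the optimum has nonzero residuals; hence $\Ls>0$. The features span $\R^3$ because they are collinear with the spanning gradients, so Lemma~\ref{lem:whiten} applies with $N=5$. Number the points so that, by Theorem~\ref{thm:class5} in the form \eqref{eq:coupled}, $g_1+g_2+g_3=0$ and $\kappa g_1+g_4+g_5=0$ with $\kappa>0$. Here $P_A=\spn(\xi_1,\xi_2)$ and $P_B=\spn(\xi_1,\xi_4)$ are distinct planes. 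Every $\eta_i\ne0$, so \eqref{eq:nozero3} holds trivially and each $\xi_i$ is collinear with $g_i$. In particular $\xi_1,\xi_2,\xi_3\in P_A$ and $\xi_1,\xi_4,\xi_5\in P_B$.

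Next, apply Lemma~\ref{lem:plane} to the triangle $\{1,2,3\}$. Points $4$ and $5$ lie off $P_A$: otherwise $P_B\subseteq P_A$, or all five features would lie in one plane. Let $u$ be a unit normal of $P_A$. The covariance identity gives $\ip{u}{\xi_4}^2+\ip{u}{\xi_5}^2=N=5$. Weighted averaging then yields some $j\in\{4,5\}$ with
\[
\frac{\eta_j^2}{\ip{u}{\xi_j}^2}\le\frac{\eta_4^2+\eta_5^2}{5}=1-R(P_A),\qquad R(P_A)=\tfrac15(\eta_1^2+\eta_2^2+\eta_3^2).
\]
Selecting the circuit with its positive weights together with point $j$ gives a strictly convex four-point objective whose unique minimizer $\hat q$ satisfies $\norm{\hat q}^2\le1-R(P_A)$. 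Since the objective is strictly convex, the tie rule never acts and the bound is valid in the original coordinates. Symmetrically, the triangle $\{1,4,5\}$ with weights $(\kappa,1,1)$ plus the better of points $2,3$ gives $\norm{\hat q}^2\le1-R(P_B)$, where $R(P_B)=\frac15(\eta_1^2+\eta_4^2+\eta_5^2)$.

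Finally combine the two bounds. The energy identity $\frac15\sum_i\eta_i^2=1$ gives
\[
R(P_A)+R(P_B)=1+\tfrac15\eta_1^2>1,
\]
strictly, because $\eta_1\ne0$. Hence $\max\{R(P_A),R(P_B)\}>\frac12$, so the better of the two selections has $\norm{\hat q}^2<\frac12$. By \eqref{eq:ratio} this gives $\Ls(4;\Astar)<\frac32\Ls$.

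The only step needing care is the non-degeneracy bookkeeping rather than any estimate. We need the planes to be distinct, points $4,5$ (resp.\ $2,3$) to lie off the other plane so that the normal-direction energies sum to exactly $N$, and strict positivity of the shared residual $\eta_1$ to upgrade the bound from $\le$ to $<$. I expect to verify these directly from the rank-$3$ condition in Theorem~\ref{thm:class5}.
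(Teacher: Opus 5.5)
Your proposal is correct and is essentially the paper's argument. You use the same candidate selections (one triangle circuit plus the better point of the opposite ear), with the weighted averaging of Lemma~\ref{lem:plane} replacing the paper's exact excess $\Delta_4=\frac15(c_4+c_4^2/c_5)$ and the inequality $\min\{a+a^2/b,\,b+b^2/a\}\le a+b$, and both routes end at $\frac1{10}(5-c_1)<\frac12$.

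One cosmetic slip: the normal form of Theorem~\ref{thm:class5} holds only up to positive rescaling. For the actual gradients, $\sum_ig_i=0$ forces $\theta g_1+g_2+g_3=0$ and $(1-\theta)g_1+g_4+g_5=0$ with $\theta\in(0,1)$. So your relations and the weights $(\kappa,1,1)$ are valid only after rescaling, and your argument never actually needs them.
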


\begin{proof}
Normalize $\Ls=1$ and let $c_i=\eta_i^2$, so $\sum_ic_i=5$. The dependence
cone of the five gradients has two circuit rays; after renumbering there is
$\theta\in(0,1)$ with $\theta g_1+g_2+g_3=0$ and $(1-\theta)g_1+g_4+g_5=0$.
Consider the selection consisting of the circuit $\{1,2,3\}$ with its
balancing weights plus the single point $4$, interpolated. The unique output
$q$ satisfies $q\perp\spn(g_1,g_2)$ and $\ip{q}{\xi_4}=\eta_4$, so
$\ip{q}{g_4}=\eta_4\ip{q}{\xi_4}=c_4$. In the basis
$B=[g_1\,g_2\,g_4]$ with $s=B^{\top}q$ we get $s=(0,0,c_4)$. The covariance
identity reads $5I=BRB^{\top}$ with
$R=\sum_iv_iv_i^{\top}/c_i$, where $v_i$ are the coefficient vectors of
$g_i$ in $B$; hence $\norm{q}^2=\frac15s^{\top}Rs$ and the excess of this
selection is $\Delta_4=\frac15\bigl(c_4+c_4^2/c_5\bigr)$. Symmetric
selections give $\Delta_5,\Delta_2,\Delta_3$ with the roles exchanged. For
$a,b>0$, $\min\{a+a^2/b,\;b+b^2/a\}\le a+b$, so
\[
\min_i\Delta_i\le\tfrac15\min\{c_2+c_3,\;c_4+c_5\}
\le\tfrac1{10}(5-c_1)<\tfrac12 .
\]
\end{proof}

\begin{remark}
We do not know whether $\frac32$ is the exact supremum of
$\Ls(4;\Astar)/\Ls$ over this class. On the path $c_1\downarrow0$,
$c_2=c_3=c_4=c_5\uparrow\frac54$ the upper bound above tends to $\frac12$,
and exact optimization over all full-rank supports approaches ratio
$\frac32$ from below; but a matching lower bound over all selections,
including rank-deficient supports under the min-norm rule, has not been
proved. We record this as a numerical observation only.
\end{remark}

\section{The budget $5$ in dimension $4$}\label{sec:fw45}

This section proves $\Fw(4,5)=2$. The lower bound is
Theorem~\ref{thm:gammalb} at $(d,k)=(4,1)$. The upper bound combines four
ingredients: interface versions of the results already proved, a completion
lemma that finishes any low-rank positive circuit through quotient volume
sampling, classifications of the minimal positive bases of $\R^4$, and a
dimension-free extremal-basis argument that handles the one remaining
configuration, the rectangle.

\subsection{Interface lemmas}

\begin{theorem}[whitened $(3,4)$ interface]\label{thm:L13D}
Let $\{(\xi_i,\eta_i)\}$ be a whitened system in $\R^3$ and
$\varepsilon>0$. There are at most $4$ indices and strictly positive weights
whose objective is strictly convex with unique minimizer $\hat q$ satisfying
$1+\norm{\hat q}^2\le\frac53+\varepsilon$.
\end{theorem}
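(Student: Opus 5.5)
The plan is to re-read the proof of Theorem~\ref{thm:fw34} at the level of whitened systems rather than datasets. Every branch there produced, except in the preliminary reductions, a strictly convex, strictly positively weighted objective of at most $4$ points in whitened coordinates, or the anchored penalty objective of Lemma~\ref{lem:anchor3}. So the interface statement should follow once the dataset-level reductions are replaced by whitened-level analogues. Fix a whitened system in $\R^3$. By \eqref{eq:threeid} the features already span $\R^3$, so the rank-deficient and $\Ls=0$ branches of Theorem~\ref{thm:fw34} do not arise. The only places where that proof leaned on the tie rule or on Theorem~8 of \citet{HMSYerm} disappear.

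The steps, in order, are as follows.
\begin{enumerate}
\item \textbf{Anchored case.} If some point has $\eta_i=0\neq\xi_i$, Lemma~\ref{lem:anchor3} directly gives a strictly convex four-point selection with $1+\norm{\hat q}^2\le\frac32+\varepsilon<\frac53+\varepsilon$. This is the only place $\varepsilon$ is needed.
\item \textbf{Positive spanning.} Otherwise \eqref{eq:nozero3} holds. Then, as in Theorem~\ref{thm:wend} (case $r=d$), the nonzero gradients span $\R^3$; a zero-gradient feature outside their span would violate \eqref{eq:nozero3}. They therefore positively span, and we take a minimum-cardinality positively spanning subset $B$ with $|B|\in\{4,5,6\}$.
\item \textbf{Case analysis on $|B|$.} We follow the proof of Theorem~\ref{thm:fw34} verbatim.
\begin{itemize}
\item[(a)] $|B|=4$, or any four gradients forming a rank-$3$ positive circuit: Lemma~\ref{lem:cert} gives $\hat q=0$.
\item[(b)] $|B|=5$, coupled: Proposition~\ref{prop:coupled}.
\item[(c)] $|B|=5$, split: either reduce to the coupled case via the cross relation, or apply Proposition~\ref{prop:split}.
\item[(d)] $|B|=6$: use the rigidity construction of Lemma~\ref{lem:rig} to reduce to the earlier cases or to Proposition~\ref{prop:sixaxis}.
\end{itemize}
Each of these propositions is already phrased in whitened coordinates. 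Each outputs a strictly convex four-point objective with strictly positive weights and $\norm{\hat q}^2\le\frac23$, hence $1+\norm{\hat q}^2\le\frac53$.
\end{enumerate}

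The main thing to check is that every selection is genuinely strictly positively weighted and strictly convex on all of $\R^3$. In Propositions~\ref{prop:split} and~\ref{prop:sixaxis} this means positive masses per block and a Hessian that separates across orthogonal components. In Lemma~\ref{lem:plane} it means the circuit features span $P$ and the extra point has $\ip{u}{\xi_j}\neq0$. The other concern is that no step silently used min-norm tie-breaking or the self-realization. I expect the main obstacle to be bookkeeping of this kind rather than new mathematics.

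The one delicate spot is the reduction in the split branch and the $|B|=6$ branch, where a newly found positively spanning subset must be re-minimized. Re-minimizing could in principle cycle between cases. I would handle this by always passing to a minimum-cardinality positively spanning subset of the \emph{whole} set of nonzero gradients, so that $|B|$ only decreases or the case is terminal. Zero-gradient points with $\xi_i=0$ are never selected, and their energy is only absorbed into the plane and group energy inequalities $\sum R\le1$, as those propositions already do.
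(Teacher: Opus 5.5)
Your proposal is correct and follows the paper's proof. The paper disposes of any point with $\eta_i=0\ne\xi_i$ through Lemma~\ref{lem:anchor3}; checking this first, as you do, also covers the paper's separate case where the nonzero gradients span a proper subspace. Otherwise, under \eqref{eq:nozero3}, it runs the spanning-gradient case analysis of Theorem~\ref{thm:fw34} verbatim on the whitened system, and every branch gives a finite strictly convex four-point selection with $\norm{\hat q}^2\le\frac23$.

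Fixing $B$ once as a minimum-cardinality positively spanning subset of all nonzero gradients is the right bookkeeping, and it rules out cycling. When $|B|=5$ no four gradients positively span, so the split-case reduction lands directly in the coupled case. When $|B|=6$, Lemma~\ref{lem:rig} puts all nonzero gradients on the three lines at once.
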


\begin{proof}
If the nonzero gradients span a proper subspace $V$, some feature lies
outside $V$; its gradient lies in $V$ only if it vanishes, so this point has
$\eta_0=0\ne\xi_0$, and Lemma~\ref{lem:anchor3} gives
$\frac32+\varepsilon$. If they span $\R^3$ and some point has
$\eta_0=0\ne\xi_0$, Lemma~\ref{lem:anchor3} applies again. Otherwise
\eqref{eq:nozero3} holds, and the case analysis in the proof of
Theorem~\ref{thm:fw34} for spanning gradients runs verbatim inside the
whitened system, producing finite strictly convex selections of at most $4$
points with $\norm{\hat q}^2\le\frac23$ in every branch.
\end{proof}

\begin{corollary}[anchored reduction in $\R^4$]\label{cor:anchor4}
Let a whitened system in $\R^4$ contain a point with $\eta_0=0$ and
$\xi_0\ne0$. Then for every $\varepsilon>0$ there is a strictly convex,
strictly positively weighted selection of at most $5$ points with
$1+\norm{\hat q}^2\le\frac53+\varepsilon$.
\end{corollary}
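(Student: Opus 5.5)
The plan mirrors the proof of Lemma~\ref{lem:anchor3}, one dimension higher, with Theorem~\ref{thm:L13D} in place of Theorem~\ref{thm:wend}. Set $W=\xi_0^{\perp}\cong\R^3$ and $\zeta_i=P_W\xi_i$.

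\emph{Step 1: the projected family is a whitened system on $W$.} The covariance identity projects to $P_WI_4P_W=I_W$. The normal equation projects to $\frac1N\sum_i\eta_i\zeta_i=P_W\cdot0=0$. The energy identity is unchanged. The anchor point itself becomes $(\zeta_0,\eta_0)=(0,0)$, which is harmless because the identities tolerate zero points.

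\emph{Step 2: apply the $(3,4)$ interface on $W$.} Theorem~\ref{thm:L13D} yields at most $4$ indices and strictly positive weights whose projected objective $\sum\alpha_i(\ip{u}{\zeta_i}-\eta_i)^2$, $u\in W$, is strictly convex with unique minimizer $q_W$ satisfying $1+\norm{q_W}^2\le\frac53+\varepsilon/2$. The selection is never the anchor, since the anchor's projected point is zero and cannot help strict convexity; if it were chosen, it could simply be dropped.

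Now form the objective $G(q)=\sum\alpha_i(\ip{q}{\xi_i}-\eta_i)^2$ from the original four-dimensional losses. On the affine set $\mathcal A=\{q:\ip{q}{\xi_0}=0\}=W$ we have $\ip{q}{\xi_i}=\ip{q}{\zeta_i}$, so $G|_{\mathcal A}$ is exactly the projected objective. Its unique minimizer on $\mathcal A$ is therefore $q_W$.

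\emph{Step 3: add the anchor by a penalty limit.} Apply Lemma~\ref{lem:penalty} with $C=\xi_0^{\top}$ and $b=\eta_0=0$. The penalty $\lambda(\ip{q}{\xi_0}-0)^2$ is a positive multiple of the anchor's own loss, so $F_\lambda$ is a legal strictly positively weighted objective on at most $5$ points.

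The nullspace condition needs a check. Suppose a direction $v$ is killed by the Hessian of $G$ and lies in $\ker C=W$. Then $\ip{v}{\zeta_i}=0$ for all selected $i$, contradicting strict convexity of the projected objective on $W$. So $F_\lambda$ is strictly convex and $q_\lambda\to q_W$. By continuity of $q\mapsto\norm{q}^2$, a large finite $\lambda$ gives $1+\norm{q_\lambda}^2\le\frac53+\varepsilon$.

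\emph{Main obstacle.} The only delicate point is Step 2. The $(3,4)$ interface must return selections that are strictly convex \emph{on $W$ via the projected features}, and its bound must be stated for arbitrary whitened systems, including those containing zero-feature points and further anchored points. Theorem~\ref{thm:L13D} is stated for arbitrary whitened systems with an $\varepsilon$ allowance, which covers exactly this. So I expect the check to reduce to confirming that its recursive use of Lemma~\ref{lem:anchor3} inside $W$ yields finite strictly convex selections, which it does.
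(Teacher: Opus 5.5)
Your proposal is correct and follows the paper's proof essentially step for step. You project to $W=\xi_0^{\perp}$ and apply Theorem~\ref{thm:L13D} to get a finite strictly convex objective on at most four points, with any inner penalty already fixed. You then lift it to the original losses and add the anchor's own loss as the penalty of Lemma~\ref{lem:penalty}, checking the nullspace condition exactly as the paper does. The only difference is that the paper handles the tolerance through a sequence $\delta_m\downarrow0$ rather than your $\varepsilon/2$ split, which is cosmetic.
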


\begin{proof}
Project to $W=\xi_0^{\perp}\cong\R^3$; the projected family is a whitened
system on $W$, as in Lemma~\ref{lem:anchor3}. Fix $\delta_m\downarrow0$. By
Theorem~\ref{thm:L13D} there is a finite objective $G_m$ of at most $4$
original points whose restriction to $W$ is strictly convex with minimizer
$u_m$ and $1+\norm{u_m}^2\le\frac53+\delta_m$; if the construction of $G_m$
itself involves an inner anchor penalty, fix that inner parameter first at a
finite value achieving the bound. Then add the outer penalty
$\lambda\ip{q}{\xi_0}^2$ and apply Lemma~\ref{lem:penalty}: for a large
finite $\lambda_m$ the unique minimizer $q_m$ of the five-point objective
satisfies $\bigl|\norm{q_m}^2-\norm{u_m}^2\bigr|\le\delta_m$. Let
$m\to\infty$.
\end{proof}

\subsection{Completion and circuit-star lemmas}

\begin{lemma}[minimal cone representation]\label{lem:mincone}
Let $B$ positively span $\R^m$ and $g\ne0$. Then $-g=\sum_{t\in T}\alpha_tt$
for some linearly independent $T\subseteq B$ and $\alpha_t>0$, and
$T\cup\{g\}$ is a positive circuit of rank $|T|$.
\end{lemma}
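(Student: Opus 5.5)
Since $B$ positively spans $\R^m$, the vector $-g$ lies in $\pos(B)$, so the set of nonnegative representations $-g=\sum_{b\in B}\alpha_b b$ is nonempty. I will pick one of minimal support and call that support $T$; then $\alpha_t>0$ for all $t\in T$. Because $g\ne0$, the support $T$ is nonempty.

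\emph{Linear independence of $T$.} This is the Carath\'eodory reduction already used in the proof of Lemma~\ref{lem:cone2}. Suppose $\sum_{t\in T}\mu_tt=0$ with $\mu\ne0$; replacing $\mu$ by $-\mu$ if needed, some $\mu_t>0$. Set $\theta=\min\{\alpha_t/\mu_t:\mu_t>0\}$. Then $\alpha-\theta\mu$ is nonnegative, still represents $-g$, and vanishes on at least one index of $T$. This contradicts minimality of the support.

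\emph{The circuit.} Rewriting the representation gives $g+\sum_{t\in T}\alpha_tt=0$. This is a linear dependence on $T\cup\{g\}$ with all coefficients strictly positive (coefficient $1$ on $g$).
\begin{itemize}
\item \emph{Distinctness.} We have $g\notin T$, since otherwise the relation would read $(1+\alpha_g)g+\sum_{t\ne g}\alpha_tt=0$, a dependence on the independent set $T$ with nonzero coefficient.
\item \emph{One-dimensional dependence space.} Since $T$ is independent, any dependence on $T\cup\{g\}$ with zero coefficient on $g$ is trivial. Hence every dependence is determined by its $g$-coefficient, and the dependence space is the line spanned by $(1,\alpha)$.
\item \emph{Minimality of support.} A positive dependence on a proper subset would be a dependence on $T\cup\{g\}$ with some zero coefficient. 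It would have to be a multiple of $(1,\alpha)$, which has full support, so it is zero.
\end{itemize}
Thus $T\cup\{g\}$ is a positive circuit.

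\emph{Rank.} Its span equals $\spn(T)$, because $g\in\spn(T)$. So the rank is $|T|$.

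The only delicate point is ensuring $g\notin T$ and correctly interpreting circuit minimality. Both follow from the uniqueness of the dependence, so I expect no real obstacle.
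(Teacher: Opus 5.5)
Your proof is correct and follows essentially the same route as the paper: take a support-minimal nonnegative representation of $-g$ over $B$, use the Carath\'eodory shift to force linear independence of the support, and read off the strictly positive relation $g+\sum_{t\in T}\alpha_t t=0$ as a positive circuit. Your extra checks ($g\notin T$, the one-dimensional dependence space giving support-minimality, and the rank computation) are valid. They spell out steps the paper's two-line proof leaves implicit.
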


\begin{proof}
Take a support-minimal nonnegative representation of $-g$ over $B$. If its
support carried a dependence $\sum c_tt=0$, shifting by
$\lambda=\min_{c_t>0}\alpha_t/c_t$ would zero out a coefficient without
changing the sum, contradicting minimality. The strictly positive relation
$g+\sum\alpha_tt=0$ on an independent support is a positive circuit.
\end{proof}

\begin{lemma}[circuit-flat completion]\label{lem:completion}
Let a whitened system in $\R^d$ contain a positive circuit $C$ of gradients
with $|C|=r+1$, $P=\spn\{g_i:i\in C\}$, $\dim P=r$, and weights
$\alpha_i>0$, $\sum_{i\in C}\alpha_ig_i=0$. Define
\[
E(P)=\frac1N\sum_{i:\,\Pi_{P^{\perp}}\xi_i\ne0}\eta_i^2 .
\]
Then there is a strictly convex, strictly positively weighted selection of at
most $d+1$ points with $\norm{\hat q}^2\le(d-r)\,E(P)$.
\end{lemma}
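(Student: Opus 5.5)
The plan is to select the circuit $C$ with its weights $\alpha_i$, which pins the $P$-component of $\hat q$, and to complete the rank in the quotient direction $P^{\perp}$ by volume sampling, as in the plane-energy Lemma~\ref{lem:plane} but with $d-r$ extra points instead of one. Since the circuit gradients $g_i$ span $P$ and each circuit point has $\eta_i\ne0$, its features $\xi_i=g_i/\eta_i$ lie in $P$ and span $P$. Hence the circuit objective is strictly convex in the $P$-directions. Its gradient at any $q\perp P$ equals $-2\sum_{i\in C}\alpha_i\eta_i\xi_i=0$, because the residuals there are $-\eta_i$. This circuit uses $r+1$ points, leaving a budget of $d-r$ points.

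\emph{Quotient system.} Let $Q=P^{\perp}$ and $\zeta_i=\Pi_Q\xi_i$. Consider the family of points with $\zeta_i\ne0$ and labels $\eta_i$, viewed as a dataset in $Q\cong\R^{d-r}$. By the covariance identity, $\frac1N\sum_i\zeta_i\zeta_i^{\top}=I_Q$, and the points with $\zeta_i=0$ contribute nothing to this sum. So the $\zeta_i$ span $Q$.

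The rest of the argument would go as follows.

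\begin{enumerate}
\item \emph{Optimal value of the quotient problem.} The optimal risk over $u\in Q$ is at most the value at $u=0$, which is $\frac{N}{M}E(P)$ after normalising by the subfamily size $M$. It may be strictly less, since the normal equation need not vanish in $Q$ restricted to this subfamily.
\item \emph{Volume sampling.} Lemma~\ref{lem:L2} with rank $d-r$ gives $d-r$ indices with independent $\zeta$'s and an interpolation point $u\in Q$ satisfying $\rho(u)\le(d-r+1)\rho^{\star}\le(d-r+1)\frac{N}{M}E(P)$.
\item \emph{Converting to a bound on $\norm{u}^2$.} Expanding $\rho(u)=\frac1M\sum(\ip{u}{\zeta_i}-\eta_i)^2$ gives $\frac{N}{M}\norm{u}^2-\frac2M\sum\eta_i\ip{u}{\zeta_i}+\frac{N}{M}E(P)$. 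The cross term is the obstacle: it vanishes only if $\sum_{\zeta_i\ne0}\eta_i\zeta_i=\Pi_Q\sum_i g_i=0$. This does hold, because $\sum_i g_i=0$ and points with $\zeta_i=0$ have $\xi_i\in P$, so their gradients project to $0$. Hence the normal equation vanishes on the subfamily, $\rho^{\star}=\frac NM E(P)$, and $\norm{u}^2\le(d-r)E(P)$.
\item \emph{Lifting the quotient point.} The main worry is that the selected points' features have $P$-components, so the joint minimizer is not simply $u$. Lift $u$ to $\hat q=u+p$ with $p\in P$, and determine $p$ jointly. The circuit part forces the $P$-residual structure, and the completion points interpolate $\ip{\hat q}{\xi_i}=\eta_i$. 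These $d$ linear conditions are not automatically consistent with an output whose $P$-part is $0$.

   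So I would instead choose the objective as the circuit plus a large penalty $\lambda\sum(\ip{q}{\xi_i}-\eta_i)^2$ on the completion points. Then I would use Lemma~\ref{lem:penalty} in reverse, with the circuit as the constraint: alternatively, weight the circuit by $\lambda\to\infty$. In the limit, the circuit objective pins the minimizer to its minimizing set $\{q:\Pi_Pq=0\}$ (its unique minimizer on $P$ is $0$ by the zero-gradient computation). On that set, the completion points are interpolated exactly by $u$, since their $\zeta$'s are independent.

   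The limit is therefore $\hat q=u$. Finite-$\lambda$ objectives are strictly convex and have positive weights, and their outputs converge. If the statement's bound is to hold exactly rather than up to $\varepsilon$, I would check directly that the joint objective has $u$ as its exact minimizer. Its circuit gradient at $u$ is $0$, and each completion residual at $u$ is $\ip{u}{\xi_i}-\eta_i=\ip{u}{\zeta_i}-\eta_i=0$, because $u\perp P$. So $u$ is a stationary point, and the Hessian is positive definite because the features span $P\oplus Q$. No limit is needed.
\end{enumerate}

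The total count is $r+1+(d-r)=d+1$ points, and $\norm{\hat q}^2\le(d-r)E(P)$. The step I expect to need care is the normal-equation identity in step 3 (handling points with $\xi_i\in P$, including $\xi_i=0$). Strict convexity and exactness then follow immediately as in Lemma~\ref{lem:plane}.
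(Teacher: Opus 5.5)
Your proposal is correct and matches the paper's proof: select the circuit with its weights, volume-sample $d-r$ points from the projected family on $P^{\perp}$, where the projected covariance and normal equation give optimum $0$ and hence $\norm{u}^2\le(d-r)E(P)$, and then check directly that $u$ is a stationary point of the joint strictly convex objective. The penalty-limit detour in step~4 and the hedge in step~1 are unnecessary, as you conclude yourself; the direct stationarity check you end with is exactly the paper's argument.
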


\begin{proof}
Let $\zeta_i=\Pi_{P^{\perp}}\xi_i$, $A=\{i:\zeta_i\ne0\}$, $M=|A|$ and
$m=d-r$. Points outside $A$ have $\xi_i\in P$, so
$\sum_{i\in A}\zeta_i\zeta_i^{\top}=\Pi\bigl(\sum_i\xi_i\xi_i^{\top}\bigr)\Pi
=N\,I_{P^{\perp}}$ and
$\sum_{i\in A}\eta_i\zeta_i=\Pi\sum_i\eta_i\xi_i=0$. The family
$(\zeta_i,\eta_i)_{i\in A}$ is thus a dataset on $P^{\perp}\cong\R^m$ with
covariance $(N/M)I$, optimum $0$ and optimal risk $(N/M)E(P)$. By
Lemma~\ref{lem:L2} there are $m$ indices $T\subseteq A$ with independent
projected features whose interpolation point $u\in P^{\perp}$ satisfies
$\frac{N}{M}(E(P)+\norm{u}^2)\le(m+1)\frac{N}{M}E(P)$, that is
$\norm{u}^2\le m\,E(P)$.

Select $C$ with its circuit weights and $T$ with any positive weights. At
$u$, the circuit residuals are $-\eta_i$ since $\xi_i\in P\perp u$, so the
circuit gradient part is $-2\sum\alpha_ig_i=0$; the points of $T$ satisfy
$\ip{u}{\xi_i}=\ip{u}{\zeta_i}=\eta_i$ and are interpolated. The circuit
features span $P$ and the $T$ features project onto a basis of
$P^{\perp}$, so all selected features span $\R^d$ and $u$ is the unique
minimizer. The count is $(r+1)+m=d+1$.
\end{proof}

\begin{lemma}[circuit-star]\label{lem:star}
Let a whitened system in $\R^4$ contain a rank-$2$ positive triangle circuit
of gradients with plane $P$, and suppose every projected feature
$\Pi_{P^{\perp}}\xi_i$ lies on one of two lines of $P^{\perp}$. Then there
is a strictly convex five-point selection with $\norm{\hat q}^2\le1$.
\end{lemma}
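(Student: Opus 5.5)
The plan is to reduce to the quotient $P^{\perp}\cong\R^2$, exactly as in the proof of Lemma~\ref{lem:completion}, and then to replace the volume-sampling step by the two-line axis argument from the end of the proof of Theorem~\ref{thm:wend}. Volume sampling would only give $2E(P)$ here; the axis argument should give $E(P)\le1$.

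\emph{Step 1: the projected family is an orthogonal two-axis system.} Set $\zeta_i=\Pi_{P^{\perp}}\xi_i$ and $A=\{i:\zeta_i\ne0\}$. As in Lemma~\ref{lem:completion}, we have
\[
\sum_{i\in A}\zeta_i\zeta_i^{\top}=N\,I_{P^{\perp}},\qquad
\sum_{i\in A}\eta_i\zeta_i=0 .
\]
By hypothesis every $\zeta_i$ with $i\in A$ lies on one of two lines $\R u_1,\R u_2\subseteq P^{\perp}$, with $u_1,u_2$ unit vectors. Write $\zeta_i=t_iu_j$ for $i\in A_j$ and set $Q_j=\frac1N\sum_{i\in A_j}t_i^2$. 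The covariance identity becomes $I_{P^{\perp}}=Q_1u_1u_1^{\top}+Q_2u_2u_2^{\top}$. The left side has rank $2$, so both groups are nonempty and the lines are distinct. The square-matrix argument behind \eqref{eq:orth} then forces $u_1\perp u_2$ and $Q_1=Q_2=1$.

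\emph{Step 2: energies and the choice of one point per line.} Let $E_j=\frac1N\sum_{i\in A_j}\eta_i^2$. Then $E_1+E_2=E(P)\le\frac1N\sum_i\eta_i^2=1$ by \eqref{eq:threeid}. On line $j$, $E_j$ is the $\{t_i^2/N\}$-weighted average of $(\eta_i/t_i)^2$, and these weights sum to $Q_j=1$. Hence some $i_j\in A_j$ has $(\eta_{i_j}/t_{i_j})^2\le E_j$. Note that $\eta_{i_j}=0$ is allowed here: it only makes the offset $0$.

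\emph{Step 3: the selection and its minimizer.} Select the triangle circuit with its balancing weights $\alpha_i>0$, together with $i_1$ and $i_2$ with arbitrary positive weights. That is $3+2=5$ points. Put
\[
\hat q=\frac{\eta_{i_1}}{t_{i_1}}u_1+\frac{\eta_{i_2}}{t_{i_2}}u_2\in P^{\perp}.
\]
\begin{itemize}
\item Since the circuit gradients span $P$ and each circuit point has $\eta_i\ne0$, its features lie in $P$. So the circuit residuals at $\hat q$ are $-\eta_i$, and the circuit's gradient contribution is $-2\sum\alpha_ig_i=0$.
\item By orthonormality and $\hat q\perp P$, we get $\ip{\hat q}{\xi_{i_j}}=\ip{\hat q}{\zeta_{i_j}}=\eta_{i_j}$. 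So both extra points are interpolated, and $\hat q$ is a stationary point.
\item The circuit features span $P$, and $\zeta_{i_1},\zeta_{i_2}$ span $P^{\perp}$, so the five selected features span $\R^4$. The Hessian is therefore positive definite and $\hat q$ is the unique minimizer.
\end{itemize}
Finally $\norm{\hat q}^2\le E_1+E_2\le1$.

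\emph{Main obstacle.} The only step I expect to need care is Step 1: ruling out degenerate two-line configurations, namely a single line or a nonorthogonal pair. This should follow from the rank count in the covariance identity. A secondary point is that zero-feature points and points with $\xi_i\in P$ contribute to neither $A_1$ nor $A_2$. They only lower $E(P)$ and never need to be selected, so they do not affect the bound.
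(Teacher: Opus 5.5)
Your proposal is correct and follows essentially the same route as the paper. You project to $P^{\perp}$ and use the rank-two covariance identity to force two orthonormal lines with unit feature energy, pick the weighted-average-best point on each line, and add these two points to the balanced triangle so that the interpolant in $P^{\perp}$ is the unique minimizer with $\norm{\hat q}^2\le R_1+R_2\le1$; your extra checks (circuit features lie in $P$ because circuit gradients are nonzero, and a single-line configuration is excluded by the rank count) are details the paper leaves implicit.
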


\begin{proof}
Write the nonzero projections as $t_iu_j$ on lines $j=1,2$. The projected
covariance identity $I_{P^{\perp}}=Q_1u_1u_1^{\top}+Q_2u_2u_2^{\top}$ with
two rank-one terms forces, as in \eqref{eq:orth}, $u_1\perp u_2$ and
$Q_1=Q_2=1$. Let $R_j$ be the residual energy of line $j$. As in
Theorem~\ref{thm:wend}, each line contains a point $i_j$ with
$(\eta_{i_j}/t_{i_j})^2\le R_j$. Set
$q=(\eta_{i_1}/t_{i_1})u_1+(\eta_{i_2}/t_{i_2})u_2$; it interpolates both
line points and is orthogonal to $P$, so the circuit stationarity holds at
$q$ and the five selected features span $\R^4$. Hence $q$ is the unique
minimizer and $\norm{q}^2\le R_1+R_2\le1$.
\end{proof}

\subsection{Branch reductions}

Throughout this subsection the system is a whitened system in $\R^4$
satisfying
\begin{equation}\label{eq:nozero4}
g_i=0\ \Longrightarrow\ \xi_i=0,
\end{equation}
its nonzero gradients positively span $\R^4$, and $B$ denotes a
minimum-cardinality positively spanning subset of the nonzero gradients,
$|B|\in\{5,6,7,8\}$ by Lemma~\ref{lem:pb}.

\begin{proposition}\label{prop:B5}
If $|B|=5$, exact recovery with $5$ points.
\end{proposition}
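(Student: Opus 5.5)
By Lemma~\ref{lem:pb}, a positive basis of $\R^4$ has between $5$ and $8$ elements. So $|B|=5=m+1$ is the minimal size, and Lemma~\ref{lem:l345}(2) applies to $B$ directly. The plan is therefore to recognize $B$ as a full-rank positive circuit of gradients and then invoke the exact certificate of Lemma~\ref{lem:cert}.

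First, Lemma~\ref{lem:l345}(2) with $m=4$ and $p=5$ shows that $B$ spans $\R^4$ linearly. Its kernel is one-dimensional and generated by a strictly positive vector, so there are weights $\alpha_i>0$ with $\sum_{i\in B}\alpha_ig_i=0$ for the five indices carrying these gradients.

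Second, we check that the corresponding features span $\R^4$. Each selected gradient is nonzero, so $g_i=\eta_i\xi_i$ with $\eta_i\ne0$, and $\xi_i=g_i/\eta_i$ is a nonzero multiple of $g_i$. Hence $\spn\{\xi_i:i\in B\}=\spn\{g_i:i\in B\}=\R^4$.

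Third, Lemma~\ref{lem:cert} applies with $S$ the five indices and weights $\alpha_i$. The objective $\sum_{i\in S}\alpha_i(\ip{q}{\xi_i}-\eta_i)^2$ is strictly convex with unique minimizer $q=0$, so its risk ratio is $1$ by \eqref{eq:ratio}. Normalizing the weights to a convex combination gives a legal selection of exactly $5$ points. The selection is strictly convex, so the min-norm rule never acts, and the conclusion transfers to the original dataset as in Theorem~\ref{thm:endpoint-ub}.

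There is no real obstacle here. The only point needing care is that several chosen gradients might come from repeated points, but repetitions merge under effective support, so the count is at most $5$. Hypothesis \eqref{eq:nozero4} is not even needed, because every element of $B$ is a nonzero gradient by definition.
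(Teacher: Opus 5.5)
Your proof is correct and follows the paper's route: Lemma~\ref{lem:l345}(2) makes $B$ a rank-$4$ positive circuit, and Lemma~\ref{lem:cert} then gives exact recovery with $5$ points. You also make explicit the feature-span check ($\xi_i=g_i/\eta_i$ is collinear with $g_i$), which the paper leaves implicit.
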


\begin{proof}
Lemma~\ref{lem:l345}(2): $B$ is a rank-$4$ positive circuit;
Lemma~\ref{lem:cert}.
\end{proof}

\begin{proposition}[size-$6$ classification]\label{prop:B6}
Let $|B|=6$. Then either some four gradients of $B$ form a positive circuit
of rank $3$, in which case Lemma~\ref{lem:completion} gives
$\norm{\hat q}^2\le E(P)\le1$ with $5$ points, or $B$ is the disjoint union
of two rank-$2$ triangle circuits spanning complementary planes.
\end{proposition}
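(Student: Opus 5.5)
The plan mirrors the proof of Theorem~\ref{thm:class5}, with one extra dimension. Let $K$ be the kernel of the $4\times6$ matrix with columns in $B$; then $\dim K=2$ because $B$ spans $\R^4$. By Lemma~\ref{lem:l345}(1) the cone $\mathcal C=K\cap\R^6_{+}$ contains a strictly positive vector. It is pointed, so it is a two-dimensional cone with exactly two extreme rays $\lambda,\mu$. By Lemma~\ref{lem:l345}(3) the supports $S=\supp\lambda$ and $T=\supp\mu$ are positive circuits. The strictly positive vector is a positive combination of $\lambda$ and $\mu$, so $S\cup T=B$.

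Next we bound the circuit sizes. A positive circuit of size $p$ has rank $p-1$ and positively spans its span. A rank-$4$ circuit would have five elements and would positively span $\R^4$, which contradicts the minimality of $|B|=6$. Hence $|S|,|T|\le4$.

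\begin{itemize}
\item \textbf{One support has size $4$.} If, say, $|S|=4$, then $S$ is a rank-$3$ positive circuit. We apply Lemma~\ref{lem:completion} with $r=3$ and $P=\spn S$. This gives at most $d+1=5$ points and $\norm{\hat q}^2\le(4-3)E(P)\le1$, since $E(P)\le\frac1N\sum\eta_i^2=1$.
\item \textbf{Both supports have size at most $3$.} Then $|S|+|T|\ge6$ forces $|S|=|T|=3$ and $S\cap T=\emptyset$. Each support is a triangle circuit of rank $2$ (size $2$ is impossible when $|S|+|T|=6$). The two planes $\spn S$ and $\spn T$ must sum to $\R^4$, since $B$ spans. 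Hence they are complementary, which is the second alternative.
\end{itemize}

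One degenerate point needs checking. If $\lambda$ and $\mu$ had equal supports, we would get a contradiction with the covering of the six coordinates. Beyond that, the argument is just the support counting.

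The main subtlety is whether a rank-$3$ circuit of four gradients might exist inside $B$ without being an extreme-ray support. That case is harmless: the statement says ``either some four gradients of $B$ form a rank-$3$ circuit'', and any such circuit triggers Lemma~\ref{lem:completion} directly. Conversely, the dichotomy only needs the extreme-ray analysis above. So I expect no real obstacle beyond verifying, via Lemma~\ref{lem:l345}(3), that extreme-ray supports are genuine positive circuits, and that size-$2$ circuits (antipodal pairs) cannot occur when the supports must cover all six vectors with total size at most $6$.
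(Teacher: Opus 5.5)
Your proposal is correct and follows the paper's proof essentially step by step. The steps are the same: the two extreme rays of $K\cap\R^6_{+}$ have positive-circuit supports covering all six vectors, a size-$5$ support is excluded by minimality, any size-$4$ support is a rank-$3$ circuit handled by Lemma~\ref{lem:completion}, and otherwise the supports are two disjoint triangles with complementary planes. The only differences are cosmetic. The paper lists the patterns $(4,2),(4,3),(4,4),(3,3)$ and excludes equal supports via Lemma~\ref{lem:l345}(3), while you group all size-$4$ cases and let the covering count do the rest. A size-$6$ support is ruled out only because a circuit of size $p$ has rank $p-1\le 4$, which you should state explicitly.
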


\begin{proof}
Let $K$ be the kernel of the matrix with columns $B$; $\dim K=2$, and
$\mathcal C=K\cap\R^6_{+}$ is a two-dimensional pointed cone with a strictly
positive interior point, hence two extreme rays with circuit supports $S,T$
covering $[6]$ (Lemma~\ref{lem:l345}). A support of size $5$ would be a
rank-$4$ circuit, positively spanning with five elements against minimality;
size $1$ is impossible; equal supports would give a two-dimensional
dependence space on a circuit. So $|S|,|T|\in\{2,3,4\}$ with
$|S\cup T|=6$, leaving $(3,3)$ disjoint, $(4,3)$ sharing one, $(4,4)$
sharing two, and $(4,2)$ disjoint. In the last three cases the size-$4$
support is a positive circuit of rank $3$. The case $(3,3)$ disjoint gives
two triangles whose planes are complementary because $B$ has rank $4$.
\end{proof}

\begin{proposition}[two blocks without cross gradients]\label{prop:22}
Let $B=A\cup C$ be two disjoint triangles with complementary planes $U,V$,
and suppose all nonzero gradients lie in $U\cup V$. Then a five-point
strictly convex selection achieves $\norm{\hat q}^2\le1$.
\end{proposition}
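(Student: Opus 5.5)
The plan is to mirror Proposition~\ref{prop:split}: first show the two planes are orthogonal with each group's covariance equal to the projector, then compare two block selections. Under \eqref{eq:nozero4}, every nonzero feature is collinear with its gradient and so lies in $U\cup V$. Points with $\xi_i=0$ are assigned to (say) the $U$ group. Write $H_U,H_V$ for the feature second moments of the two groups, so $I_4=H_U+H_V$ with $\rk H_U\le2$ and $\rk H_V\le2$. Ranks are subadditive, so both ranks equal $2$, with ranges $U$ and $V$. Then $H_U=I-H_V$ kills $V$, so $U\perp V$. Hence $H_U=P_U$ and $H_V=P_V$, just as in \eqref{eq:orth}. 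Projecting the normal equation onto $U$ and $V$ shows each group has optimum $0$. Let $R_U+R_V\le1$ be the group energies.

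Two candidate selections are compared.

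\textbf{Selection A.} Take the full triangle $A$ in $U$ with its circuit weights ($3$ points), plus a volume-sampling pair in $V$. This is Lemma~\ref{lem:completion} with $P=U$, $r=2$: here $E(U)=R_V$, since exactly the $V$-group points have nonzero projection to $U^\perp=V$. It gives $\norm{\hat q}^2\le2R_V$ with $5$ points.

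\textbf{Selection B.} Symmetrically, the triangle $C$ plus a volume-sampling pair in $U$ gives $\norm{\hat q}^2\le2R_U$. The zero-feature points contribute nothing to the projected dataset, because they are excluded from the index set of Lemma~\ref{lem:completion}. So $E(V)\le R_U$, which only helps.

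Both objectives are strictly convex, because the circuit spans its plane and the pair projects onto a basis of the complement, as in the proof of Lemma~\ref{lem:completion}.

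Finally, $\min\{2R_U,2R_V\}\le R_U+R_V\le1$, which gives the claimed bound $\norm{\hat q}^2\le1$. The only step needing care is the orthogonality argument, in particular confirming that neither group can have rank below $2$. This uses $U\oplus V=\R^4$ together with full rank of the sum: each group's range is contained in its own plane, and the two ranges must add to rank $4$. Once orthogonality holds, the selections decouple exactly, and the bound follows without any tie-rule issues.
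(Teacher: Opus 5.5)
Your proposal is correct and is essentially the paper's argument: the paper derives the same orthogonal two-block structure ($U\perp V$, $H_U=P_U$, $H_V=P_V$) and applies Theorem~\ref{thm:lp} with $k=1$, whose selection is exactly your ``full circuit in one plane plus a volume-sampling pair in the other,'' giving $\min\{2R_U,2R_V\}\le R_U+R_V\le1$. One fix: ``$H_U=I-H_V$ kills $V$'' is not justified as written---argue instead that $H_V=I-H_U$ is the identity on $\ker H_U=U^{\perp}$, so $U^{\perp}\subseteq\operatorname{range}H_V=V$ with equality by dimension---although in your route this step can be dropped, since Lemma~\ref{lem:completion} needs no orthogonality and $E(U)$, $E(V)$ are energies of the disjoint sets of points with features in $V\setminus\{0\}$ and $U\setminus\{0\}$, so $E(U)+E(V)\le1$ directly.
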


\begin{proof}
By \eqref{eq:nozero4} all nonzero features lie in $U\cup V$. Then
$I=H_U+H_V$ with $\rk H_U=\rk H_V=2$; since $0\preceq H_U,H_V\preceq I$ and
the ranks are complementary, $U\perp V$ and $H_U=P_U$, $H_V=P_V$. The system
has orthogonal circuit-block structure with $r=(2,2)$, and
Theorem~\ref{thm:lp} with $k=1$ gives
$\norm{\hat q}^2\le\min\{2R_U,2R_V\}\le R_U+R_V\le1$ with $d+1=5$ points.
\end{proof}

\begin{definition}\label{def:cross}
In the two-block case, a \emph{cross gradient of type $(i,j)$} is a nonzero
gradient $h$ with minimal cone representation
$h+\alpha a_i+\beta c_j=0$, $\alpha,\beta>0$, where $a_i\in A$ and
$c_j\in C$. The \emph{rectangle branch} is the case where two cross types
$(i,j)$ and $(i',j')$ occur with $i\ne i'$ and $j\ne j'$.
\end{definition}

\begin{proposition}[shared-endpoint crosses]\label{prop:shared}
In the two-block case, suppose cross gradients occur and all cross types
share an endpoint, that is, all use the same $a_i$ or all use the same
$c_j$. Suppose also that no four gradients form a rank-$3$ positive circuit
and no five gradients positively span. Then a five-point strictly convex
selection achieves $\norm{\hat q}^2\le1$.
\end{proposition}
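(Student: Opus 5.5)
We would reduce the configuration to a covering of the nonzero gradients by a few planes and flats, and then finish with a convex combination of the completion bounds of Lemma~\ref{lem:completion} and the star bound of Lemma~\ref{lem:star}. Write $A=\{a_1,a_2,a_3\}$ in $U$ and $C=\{c_1,c_2,c_3\}$ in $V$. By symmetry between the blocks, assume every cross type uses the same endpoint $a_1$. A cross gradient $h$ of type $(1,j)$ satisfies $h=-\alpha a_1-\beta c_j$, so it lies in the plane $P_j=\spn(a_1,c_j)$. Moreover $\{a_1,c_j,h\}$ is a rank-$2$ positive triangle circuit spanning $P_j$. Hence every nonzero gradient, and by \eqref{eq:nozero4} every nonzero feature, lies in $U\cup V\cup P_1\cup P_2\cup P_3$. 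All the cross planes sit inside the $3$-flat $W=\R a_1\oplus V$.

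\textbf{Step 1: prune the cross types.} We first use the hypotheses that no four gradients form a rank-$3$ positive circuit and no five positively span. Take two cross gradients of different types $(1,j)$ and $(1,j')$, or a cross gradient together with suitable triangle vertices. Computations in the style of Lemma~\ref{lem:onecross} should produce either a four-element rank-$3$ positive circuit inside $W$ or a five-element positively spanning set, for instance $\{a_2,a_3,h,c_k,c_l\}$ with $k,l\ne j$. We expect this to leave at most one cross type $(1,j)$, or a very restricted pair of types. In either case the cross features lie on a single plane $P=P_j$, or on at most two lines after projecting to $U^{\perp}$ or $V^{\perp}$.

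\textbf{Step 2: energy bookkeeping.} Split the residual energy as $R_U+R_V+R_X\le1$, where $R_X$ is the cross energy and zero-feature points are assigned to $U$. Lemma~\ref{lem:completion} applied to each of the three available triangles gives three bounds. For the triangle $A$ it gives $2E(U)\le2(R_V+R_X)$. For $C$ it gives $2E(V)\le2(R_U+R_X)$. For the cross triangle on $P$ it gives $2E(P)$, where $E(P)$ excludes the cross points and the part of the $U$ and $V$ mass lying on the lines $\R a_1$ and $\R c_j$. When the projection of the features to $U^{\perp}$ or $V^{\perp}$ lands on two lines, we also invoke Lemma~\ref{lem:star} directly for the bound $1$.

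\textbf{Step 3: the main obstacle.} The difficulty is that the cross energy $R_X$ appears in both block completions, so the naive bound $\min\{2(R_V+R_X),2(R_U+R_X)\}$ can exceed $1$. To handle this, we would exploit the covariance identity $I=H_U+H_V+H_X$. Because the cross features are confined to $P\subseteq W$, this identity should force near-orthogonality of the remaining directions: for example, $H_U$ restricted to $W^{\perp}$ must supply all of the covariance there. From this we would derive an averaging inequality of the form
\[
\min\{2E(U),\,2E(V),\,2E(P)\}\le\tfrac23\bigl(E(U)+E(V)+E(P)\bigr)\le1.
\]
The last inequality needs each unit of energy to be counted in at most $\tfrac32$ of the three exclusion sets on average, which is where the structure from Step~1 is used. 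If this averaging fails, we would fall back on the quotient geometry of Lemma~\ref{lem:quot} modulo the plane $P$. In that quotient the images of $U$ and $V$ should collapse onto two lines, after which Lemma~\ref{lem:star} finishes with $\norm{\hat q}^2\le1$. Every selection produced above is a circuit plus interpolated completion points, so it uses five points, is strictly convex, and has strictly positive weights, as the statement requires.
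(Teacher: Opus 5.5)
Your Steps 1--3 do not work, and the proposal only closes through the fallback you mention at the end. \emph{Step 1's pruning is false.} Take $a_1=e_1$, $a_2=e_2$, $a_3=-e_1-e_2$, $c_1=e_3$, $c_2=e_4$, $c_3=-e_3-e_4$, $h=-e_1-e_3$, $h'=-e_1-e_4$. The nonnegative dependences are generated by the four triangles $A$, $C$, $\{a_1,c_1,h\}$, $\{a_1,c_2,h'\}$. So there is no rank-$3$ four-point circuit and no five-element positively spanning set, yet the two cross types $(1,1)$ and $(1,2)$ coexist. Your candidate $\{a_2,a_3,h,c_k,c_l\}$ does not positively span either: $h=\alpha a_2+\alpha a_3+\beta c_k+\beta c_l$ lies in the cone of the other four. \emph{Step 3's averaging also fails.} Every unit of energy not on $\R a_1$, not on $\R c_j$, and not at a zero feature is counted in at least two of $E(U),E(V),E(P)$; crosses of other types are counted in all three. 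For instance, put $R_U=R_V=0.4$ off the anchor lines and $R_X=0.2$ on type-$(1,j)$ crosses. Then $E(U)=E(V)\approx0.6$ and $E(P)\approx0.8$, so all three completion bounds exceed $1$. Completion alone cannot reach $1$ in this branch.

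The fallback is the right argument and is exactly the paper's proof. It needs no energy bookkeeping and no bound on the number of cross types, but two things are missing.

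\textbf{The reduction of extra gradients.} Your ``Hence every nonzero gradient lies in $U\cup V\cup P_1\cup P_2\cup P_3$'' silently assumes that each extra gradient lies in $U$, lies in $V$, or is a cross gradient. This follows from Lemma~\ref{lem:mincone}: minimal cone supports over $B$ of size $4$ and $3$ are exactly what the two hypotheses exclude. This, not pruning cross types, is where the hypotheses are used.

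\textbf{Where other cross types go.} Fix one cross triangle $\{h,a_1,c_j\}$ and set $P=\spn(a_1,c_j)$. Lemma~\ref{lem:star} uses the orthogonal projection $\Pi_{P^{\perp}}$; Lemma~\ref{lem:quot} is not needed. Since $U\cap P=\R a_1$, $U$ maps to the line of $\Pi_{P^{\perp}}a_2$. Since $V\cap P=\R c_j$, $V$ also maps to a single line. You must also check the crosses: any $-\alpha a_1-\beta c_{j'}$ maps to $-\beta\,\Pi_{P^{\perp}}c_{j'}$, which lies on the $V$-line because $a_1\in P$. Zero features map to $0$. So any number of cross types sharing $a_1$ is harmless, all projected features lie on two lines, and Lemma~\ref{lem:star} with the circuit $\{h,a_1,c_j\}$ gives $\norm{\hat q}^2\le1$ with five points.
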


\begin{proof}
First reduce the extra gradients. By Lemma~\ref{lem:mincone} each nonzero
gradient outside $B$ has a minimal cone support in $B$ of size at most $4$.
Size $4$ gives a rank-$4$ five-point positive circuit, which positively
spans and is excluded; size $3$ gives a rank-$3$ circuit, excluded. Supports
of size $1$ or $2$ inside one triangle stay in that triangle's plane, and
the mixed supports are exactly the cross gradients. Say all cross types use
$a_1$. Fix one cross circuit $\{h,a_1,c_j\}$ and let
$P=\spn(a_1,c_j)$, a plane carrying this positive triangle circuit. Project
to $P^{\perp}$. The $A$-plane maps to the line spanned by the image of
$a_2$; the $C$-plane maps to the line spanned by the images of the
$c_{j'}$, $j'\ne j$, which is one line since $C$'s plane is
two-dimensional and $c_j$ maps to zero. A cross gradient of type $(1,j')$
maps into the $C$-line since $h'=-\alpha a_1-\beta c_{j'}$ and $a_1$ maps to
zero. Gradients in $P$ map to zero. So all projected features lie on two
lines and Lemma~\ref{lem:star} applies.
\end{proof}

\begin{proposition}[size $7$]\label{prop:B7}
If $|B|=7$, a five-point strictly convex selection achieves
$\norm{\hat q}^2\le1$.
\end{proposition}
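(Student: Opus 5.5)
Following the pattern of Proposition~\ref{prop:B6}, I would first classify a size-$7$ minimum-cardinality positively spanning set $B$ in $\R^4$ by its dependence cone, and then reduce every resulting configuration to a selection already available in this section.

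\emph{Dependence cone.} Let $K$ be the kernel of the $4\times7$ matrix with columns $B$. Then $\dim K=3$ and $\mathcal C=K\cap\R^7_+$ is a pointed three-dimensional cone that contains a strictly positive vector (Lemma~\ref{lem:l345}(1)). By Lemma~\ref{lem:l345}(3) each extreme ray of $\mathcal C$ is supported on a positive circuit. A circuit of size $5$ would have rank $4$ and positively span with five elements, which contradicts minimality. A circuit of size $4$ has rank $3$. In that case Lemma~\ref{lem:completion} with $r=3$ already gives a five-point strictly convex selection with $\norm{\hat q}^2\le E(P)\le1$, so we may assume every circuit support has size $2$ or $3$.

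\emph{Combinatorics.} The circuit supports cover $[7]$, and the strictly positive vector lies in their positive hull. Iterating Lemma~\ref{lem:quot} tightens this. Pick a triangle circuit $C$ (a rank-$2$ circuit of size $3$) or an antipodal pair (size $2$) inside $B$. The quotient of $B\setminus C$ is then a positive basis of $\R^4/\spn C$, which has dimension $2$ or $3$. Each such basis has at most $4$ or $6$ elements, and Lemma~\ref{lem:pb} together with Theorem~\ref{thm:class5} describes them. My goal is the following structural statement: either some rank-$3$ circuit of size $4$ appears, or some antipodal pair is present, or the cover is by three triangles arranged as a chain or star sharing vertices.

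\emph{Reduction.}
\begin{itemize}
\item If an antipodal pair $\{\ell,-\lambda\ell\}$ occurs, use its line to form a rank-$1$ circuit $P=\R\ell$. I would then try to reduce to the three-dimensional quotient through an anchor-like projection: the pair fixes the $\ell$-coordinate at $0$, and Theorem~\ref{thm:L13D} applied on $\ell^\perp$ gives $\frac53$. That is not good enough, since the target is $\norm{\hat q}^2\le1$.
\item So in the triangle configurations I would instead look for a triangle plane $P$ whose complementary projections lie on two lines, and invoke Lemma~\ref{lem:star}.
\item Alternatively, I would apply Lemma~\ref{lem:completion} with $r=2$. This gives $\norm{\hat q}^2\le 2E(P)$, so I need a plane with $E(P)\le\frac12$. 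Choosing between two triangle planes whose ``outside'' energy sets are complementary would guarantee this.
\item Extra gradients outside $B$ would be handled through minimal cone representations (Lemma~\ref{lem:mincone}). A support of size $4$ is excluded because it would positively span with five elements, and a support of size $3$ is excluded as a rank-$3$ circuit. Hence extra gradients lie in planes spanned by pairs of basis vectors.
\end{itemize}

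\emph{Main obstacle.} The hard step is the final energy argument: showing that among the available triangle planes one has $E(P)\le\frac12$, or that the star projection applies. The risk is that a ``cross'' configuration analogous to the rectangle survives here as well. My first attempt would be to show that in the size-$7$ case any surviving cross gradient produces a five-element positively spanning set, via the rigidity-style construction of Lemma~\ref{lem:rig}. That would contradict $|B|=7$ and leave only the split or star cases, where Lemma~\ref{lem:star} or Theorem~\ref{thm:lp} applies.
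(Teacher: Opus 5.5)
\textbf{There is a genuine gap.} You correctly name the endpoint: project along a triangle plane and apply Lemma~\ref{lem:star}. But the proposal never shows that such a plane exists, and that structural step is the whole proof.

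The missing classification runs as follows.
\begin{enumerate}
\item A triangle always exists. Extreme-ray circuits cannot all be antipodal pairs: disjoint pairs cannot cover seven coordinates, and two pairs sharing a vector put two elements of $B$ on one ray. So the branch where you only reach $\frac53$ is unnecessary. Any antipodal pairs simply become lines after projecting along the triangle's plane.
\item Take a triangle $C_0=\{p,q,u\}$ with plane $V$. Lemma~\ref{lem:quot} makes the other four vectors a size-$4$ positive basis of $\R^4/V\cong\R^2$, hence two antipodal projected pairs by Lemma~\ref{lem:pb}.
\item Lift these pairs to get shifts $v+\lambda r,\ w+\mu s\in V$. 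Each shift is $0$ or a negative multiple of a single triangle vector, since a two-vector support would give a rank-$3$ four-point circuit.
\item The nonzero shifts share an anchor. Otherwise the third triangle vector lies in $\pos\{r,v,s,w\}$ and is removable.
\end{enumerate}
This gives the common-anchor form $q+p+u=0$, $v+\lambda r+\tau q=0$, $w+\mu s+\sigma q=0$ with $\tau,\sigma\ge0$. It is a star around $q$ whose leaves are attached or detached.

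\textbf{Your plan for the main obstacle fails.} To exclude an extra gradient whose two-element support mixes two leaf groups, you aim for a \emph{five}-element positively spanning set via the construction of Lemma~\ref{lem:rig}. That construction is built for size-$2m$ antipodal bases, and the target is false in general.

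Take $p=e_1$, $u=e_2$, $q=-e_1-e_2$, $r=e_3$, $v=e_1+e_2-e_3$, $s=e_4$, $w=e_1+e_2-e_4$. This is a size-$7$ basis with $\tau=\sigma=1$. Add $g=-e_1-e_3$. The nonnegative dependence cone of the eight vectors is simplicial, with circuit rays $\{p,u,q\}$, $\{q,r,v\}$, $\{q,s,w\}$, $\{p,r,g\}$. No five of the eight vectors positively span.

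What is true, and suffices, is a \emph{six}-element positively spanning set. The cross gradient's triangle pairs with a triangle in a complementary plane; for $\tau,\sigma>0$ this is the untouched leaf group together with $q$. Six positively spanning vectors contradict minimum cardinality $7$.

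Only after this exclusion does every feature project modulo $\spn(p,q)$ onto the two lines of $r$ and $s$. Then Lemma~\ref{lem:star} gives $\norm{\hat q}^2\le R_1+R_2\le1$.

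Two further points:
\begin{itemize}
\item The case $\sigma=0<\tau$ also needs a switch of anchor plane to a cross triangle when one is present. Your proposal does not address it.
\item Your fallback through Lemma~\ref{lem:completion} with $r=2$ cannot substitute. With features spread over three circuit planes, pigeonhole only gives $\min_P 2E(P)\le\frac43$, not $1$.
\end{itemize}
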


\begin{proof}
If some four gradients of $B$ form a rank-$3$ positive circuit,
Lemma~\ref{lem:completion} finishes. Otherwise all extreme-ray circuits of
the dependence cone of $B$ have size at most $3$. They cannot all be
antipodal pairs: disjoint pairs cannot cover seven coordinates, and two
pairs sharing a vector force two elements on one ray, one of them removable.
So there is a triangle circuit $C_0=\{p,q,u\}$ with plane $V=\pos(C_0)$.

By Lemma~\ref{lem:quot} the projections of the other four vectors to
$\R^4/V$ form a positive basis of the two-dimensional quotient of size $4$,
which by Lemma~\ref{lem:pb} consists of two antipodal projected pairs:
\[
v+\lambda r=p_1\in V,\qquad w+\mu s=p_2\in V,\qquad\lambda,\mu>0 .
\]
Represent $-p_1$ over the triangle $C_0$ minimally. A support of two
triangle vectors would give a rank-$3$ four-point circuit, excluded; so
$p_1=0$ or $p_1$ is a negative multiple of a single triangle vector, and
likewise $p_2$. If the two nonzero shifts anchored at different triangle
vectors $c_a\ne c_b$, then $-c_a\in\pos\{r,v\}$ and $-c_b\in\pos\{s,w\}$,
and the triangle relation would put the third vector $c_c$ in
$\pos\{r,v,s,w\}$, making it removable; contradiction. So after renaming the
shared anchor $q$,
\[
q+p+u=0,\qquad v+\lambda r+\tau q=0,\qquad w+\mu s+\sigma q=0,
\qquad\tau,\sigma\ge0 .
\]

Suppose $\tau,\sigma>0$ and set $P_0=\spn(p,q)$, $P_1=\spn(q,r)$,
$P_2=\spn(q,s)$. All seven basis vectors lie in $P_0\cup P_1\cup P_2$.
Consider any extra nonzero gradient. Its minimal cone support has size at
most $2$: size $4$ gives a rank-$4$ five-point circuit and exact recovery,
size $3$ a rank-$3$ circuit and completion. Supports of size $1$, and
two-element supports drawn from one leaf group or from a leaf group together
with $q$, lie inside one of the three planes. If a
two-element support takes its vectors from two different leaf groups among
$\{p,u\},\{r,v\},\{s,w\}$, the gradient forms a positive triangle with them,
and the unused third group together with $q$ forms another positive
triangle; the two planes are complementary, so six of the gradients
positively span $\R^4$, contradicting $|B|=7$ as minimum cardinality. Hence
every extra gradient lies in $P_0\cup P_1\cup P_2$. Project to
$P_0^{\perp}$: the planes $P_1,P_2$ become two lines, and
Lemma~\ref{lem:star} with the central triangle finishes.

If $\tau=\sigma=0$, the leaves are detached antipodal pairs. A two-element
support mixing the two pairs forms with the central triangle two triangles
in complementary planes, hence a six-point positively spanning set,
contradiction. Every other support is contained in
$\{p,q,u\}\cup\{r,v\}$ or in $\{p,q,u\}\cup\{s,w\}$. The resulting gradient
need not itself lie in $P_0$ or on a pair line; but under the projection
along $P_0$ its central component vanishes, so its image lies on the
projected line of the pair it uses. Hence the projected features occupy two
lines and Lemma~\ref{lem:star} applies.

If exactly one shift vanishes, say $\sigma=0$ and $\tau>0$, the excluded
mixed supports are as follows; in each case the exhibited triangle pairs
with a complementary triangle present in the configuration to give a
six-point positively spanning set. A support mixing $\{s,w\}$ with
$\{r,v\}$ pairs with the central triangle $\{q,p,u\}$. A support mixing
$\{p,u\}$ with $\{s,w\}$ pairs with the leaf triangle $\{q,r,v\}$. If a
cross with support mixing $\{p,u\}$ and $\{r,v\}$ exists, fix one such
cross triangle $\{g_0,x,z\}$ and take $P=\spn(x,z)$ as anchor plane; then a
support $\{q,s\}$ or $\{q,w\}$ pairs with this cross triangle and is
excluded as well. Modulo $P$, both nondegenerate triangles project to the
line of $\pi(q)$, further crosses of the same mixed type project into the
same line, and the detached pair projects to a second line;
Lemma~\ref{lem:star} applies. If no such cross exists, every remaining support is contained in
$\{p,q,u,r,v\}$ or in $\{p,q,u\}\cup\{s,w\}$ or in $\{q,s,w\}$; in each
case the projection along $P_0$ kills the central component, and the image
lies on the projected $r$-line or the projected $s$-line. Two lines again,
and Lemma~\ref{lem:star} applies.
\end{proof}

\begin{proposition}[size $8$]\label{prop:B8}
If $|B|=8$, a five-point strictly convex selection achieves
$\norm{\hat q}^2\le\frac34$.
\end{proposition}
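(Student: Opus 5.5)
The plan mirrors Proposition~\ref{prop:sixaxis} and the rigidity argument, now in $\R^4$ with budget $5$. With $|B|=8$, Lemma~\ref{lem:pb} gives $B=\{\pm a_1,\dots,\pm a_4\}$ after positive rescaling, for a linear basis $(a_j)$.

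\emph{Step 1: extra gradients with many coordinates.} Take a nonzero gradient $g\notin B$ and write $g=\sum_{j\in S}c_ja_j$ with all $c_j\neq0$. Build the set $T$ of Lemma~\ref{lem:rig}; it positively spans $\R^4$ and has size $2\cdot4-|S|+1=9-|S|$.
\begin{itemize}
\item If $|S|\ge2$, then $|T|\le7<8$. This contradicts the minimum cardinality of $B$.
\end{itemize}
So every nonzero gradient lies on one of the four lines $\R a_j$. By \eqref{eq:nozero4}, every nonzero feature lies on these lines as well.

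\emph{Step 2: the axis structure.} The covariance identity with four rank-one groups forces the same conclusion as \eqref{eq:orth}: the unit directions $u_1,\dots,u_4$ are orthonormal and each line has $Q_j=1$. Let $R_1\ge R_2\ge R_3\ge R_4$ be the line energies, with $\sum_jR_j\le1$.

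\emph{Step 3: the selection.} Take the highest-energy line.
\begin{itemize}
\item On it, select a canceling antipodal pair. By the sign argument of Theorem~\ref{thm:wend}, both signs of $t_i\eta_i$ occur on that line, so the pair exists and fixes that coordinate at $0$.
\item On each of the three remaining lines, select the best single point. The weighted-average argument gives offset squared at most $R_j$ on line $j$.
\end{itemize}
This uses $2+1+1+1=5$ points. The objective separates over orthonormal coordinates and is strictly convex, so
\[
\norm{\hat q}^2\le R_2+R_3+R_4\le 1-R_1\le 1-\tfrac14\sum_jR_j\cdot\tfrac{4}{4}.
\]
More carefully, $R_1\ge\frac14(R_1+\dots+R_4)$, so $R_2+R_3+R_4\le\frac34(R_1+\dots+R_4)\le\frac34$.

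\emph{Expected obstacle.} The only real subtlety is Step~1: confirming that the elements of $T$ are genuine gradients of $G$ up to positive scaling, so that minimality of $B$ applies. This holds because the $r_j$ and $\pm a_j$ are rescaled members of $B$.

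The zero-feature points carry residual energy but lie on no line. They only lower $\sum_jR_j$ below $1$, which helps the bound.
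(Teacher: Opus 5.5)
Your proof is correct and follows the paper's argument. The paper cites Lemma~\ref{lem:rig} directly, since no seven nonzero gradients positively span, where you unpack its construction of $T$; it then uses the same orthonormal-axis reduction, a canceling pair on the heaviest line, and best single points on the other three lines. One small cleanup: the first displayed chain ends in $1-\tfrac14\sum_jR_j$, which is at least $\tfrac34$ and so does not give the bound. Your ``more carefully'' line $R_2+R_3+R_4\le\tfrac34\sum_jR_j\le\tfrac34$ is the right estimate and should replace it.
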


\begin{proof}
Minimum cardinality $8$ means no seven nonzero gradients positively span, so
Lemma~\ref{lem:rig} puts all nonzero gradients on four lines, and by
\eqref{eq:nozero4} all nonzero features as well. The covariance identity
with four rank-one groups forces orthonormal directions and unit energy per
line as in \eqref{eq:orth}. Select a canceling antipodal pair on the line of
largest residual energy and the best single point on each other line; the
offsets are $0$ and at most $R_j$. The three smaller energies sum to at most
$\frac34$.
\end{proof}

\subsection{The rectangle branch}

We now treat the rectangle branch of Definition~\ref{def:cross}. Relabel so
that the two given cross types are attached to the rays of
$a_1,a_2\in A$ and $c_1,c_2\in C$, with $A=\{a_0,a_1,a_2\}$,
$C=\{c_0,c_1,c_2\}$, normalized by
\[
a_0+a_1+a_2=0,\qquad c_0+c_1+c_2=0,
\]
and cross witnesses
\[
h+p\,a_1+q\,c_1=0,\qquad k+r\,a_2+s\,c_2=0,\qquad p,q,r,s>0 .
\]
Call $L_1=\R a_1$, $L_2=\R a_2$, $L_3=\R c_1$, $L_4=\R c_2$ the \emph{side
lines}. We say a nonzero vector $g$ is \emph{strictly $4$-generated} if
$g=\sum_{j=1}^{4}\kappa_jw_j$ with $\kappa_j>0$ and linearly independent
$w_1,\dots,w_4$ drawn from the gradients of other data points.

\begin{lemma}[off-side points]\label{lem:offside}
In the rectangle branch, assume no four gradients form a rank-$3$ positive
circuit and no five gradients form a rank-$4$ positive circuit. Then every
nonzero gradient not on a side line is strictly $4$-generated.
\end{lemma}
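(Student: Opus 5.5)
The plan is to get a strictly positive representation of $g$ in two stages. First, pin down where $-g$ can sit, using Lemma~\ref{lem:mincone} and the two excluded circuit sizes. Second, expand each piece of $-g$ through the triangle relations and the cross witnesses until exactly four linearly independent generators appear.

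\emph{Reduction.} Let $B'=A\cup C\cup\{h,k\}$. Since $A\cup C$ already positively spans $\R^4$, so does $B'$. Apply Lemma~\ref{lem:mincone} to $g$ over $B'$ (over $B'\setminus\{g\}$ if $g$ is itself one of these eight vectors; I will check case by case that the remainder still positively spans, using the cross witnesses to recover the deleted vector). This yields $-g=\sum_{t\in T}\alpha_t t$ with $T$ independent, $\alpha_t>0$, and $T\cup\{g\}$ a positive circuit of rank $|T|$. By hypothesis $|T|=3$ and $|T|=4$ are excluded. So either $-g=\alpha x$ or $-g=\alpha x+\beta y$ with $x,y\in B'$ distinct and independent.

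\emph{Case analysis.} In each case I write $g=-\alpha x-\beta y$ and replace $-x$ and $-y$ by positive combinations of other vectors of $B'$. The substitutions are the triangle identities $-a_i=a_{i'}+a_{i''}$ and $-c_j=c_{j'}+c_{j''}$, and the cross identities $-a_1=(h+qc_1)/p$, $-c_1=(h+pa_1)/q$, $-a_2=(k+sc_2)/r$, $-c_2=(k+ra_2)/s$. The cross identities can be inverted as needed. The aim in every case is a positive combination supported on four independent vectors. The model is the case $g\parallel a_0$: there $a_0=-a_1-a_2=\frac1p(h+qc_1)+\frac1r(k+sc_2)$. The four vectors $h,k,c_1,c_2$ are independent, because their span contains $a_1,a_2,c_1,c_2$. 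The same device handles $g\parallel c_0$ symmetrically.

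The remaining cases are organised by the pair $\{x,y\}$:
\begin{itemize}
\item both in one triangle, so $g$ lies in that triangle's plane;
\item one in $A$ and one in $C$, so $g$ is a cross gradient of some type;
\item $x$ or $y$ among $h,k$.
\end{itemize}
The hypothesis that $g$ is off the side lines is used exactly to exclude $-g\parallel a_1,a_2,c_1,c_2$ and $g\parallel a_1,\dots$. Those are the configurations where no four-term representation avoiding a parallel copy is available.

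\emph{Independence of the four generators.} After substitution the support typically consists of one vector from each of $\{a_1\text{ or }h\}$, $\{a_2\text{ or }k\}$, $\{c_1\text{ or }h\}$ and $\{c_2\text{ or }k\}$, or of two triangle vectors of one block together with $h$ and $k$. Independence then follows by showing that the span contains the four side vectors $a_1,a_2,c_1,c_2$, which form a basis because $U\oplus V=\R^4$. When a naive substitution produces five or more terms, I will choose the substitution so that the coefficients of $g$'s own expansion do not cancel. If cancellation forces a support of size three, that would reproduce a rank-$3$ four-point circuit with $g$, which is excluded; this is the lever for forcing size exactly four.

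\emph{Main obstacle.} The hardest part is the bookkeeping in the mixed cases, where $y\in\{h,k\}$ or $x,y$ come from opposite triangles with types different from those of $h$ and $k$. There one must simultaneously keep all four coefficients strictly positive, avoid using $g$'s own index (for instance when $g$ is a positive multiple of some other data point), and guarantee linear independence. I would first try to handle all such cases uniformly. The idea is to perturb a size-$\le3$ representation by a small multiple of a strictly positive zero-sum relation on a full-rank subset of $B'$, and then run the support-reduction step of Lemma~\ref{lem:mincone} only on coordinates outside the three already used. Any reduction below four independent terms would create an excluded circuit. If this uniform argument fails, I would fall back to the explicit enumeration of the ten or so cone types above.
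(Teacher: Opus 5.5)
Your reduction through Lemma~\ref{lem:mincone} is sound, but the case analysis rests on a false premise. You assert that the only directions with no four-term representation are the side-line ones. You also assert that the hypotheses enter only through the lever ``cancellation to three terms reproduces a rank-$3$ circuit with $g$.''

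Both claims fail. Whole open sectors of off-side directions admit no strict $4$-generation over $A\cup C\cup\{h,k\}$ at all. Take $g=u\,a_1+v\,a_2$ with $u,v>0$; this sector includes the ray of $-a_0$. The dependence space of the eight vectors is spanned by the four triangle relations, and each has a private coordinate ($a_0$, $c_0$, $h$, $k$). So any nonnegative representation of $g$ differs from $u\,a_1+v\,a_2$ by a \emph{nonnegative} combination of them. Hence $a_1,a_2$ always remain in the support, and the only four-element supports are $\{a_1,a_2,h,c_1\}$ and $\{a_1,a_2,k,c_2\}$, both of rank $3$. No substitution scheme and no perturbation by zero-sum relations can reach the goal you set for every case.

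Your lever also has the sign backwards. A positive representation of $g$ is a mixed-sign relation and never certifies a positive circuit through $g$; only positive representations of $-g$ do.

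The missing idea is that these directions are excluded by the hypotheses, via explicit identities for $-g$:
\begin{itemize}
\item For $g=u\,a_1+v\,a_2$ one has $-g=\frac up h+\frac{uq}p c_1+\frac vr k+\frac{vs}r c_2$, so $\{g,h,k,c_1,c_2\}$ is a five-point rank-$4$ positive circuit.
\item For $g=-(y-x)a_1+x\,a_2$ with $0<x<y$, the identity $rg+xk+r(y-x)a_1+xs\,c_2=0$ gives an excluded rank-$3$ circuit.
\item Symmetrically, $pg+xh+p(y-x)a_2+xq\,c_1=0$ handles $g=x\,a_1-(y-x)a_2$.
\end{itemize}

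Your choice of $B'$ adds further cases. Supports such as $-g=\alpha h$ or $-g=\alpha h+\beta a_0$ can arise, and they too must be excluded. You exclude them by re-expanding $-g$ over $A\cup C$ into the four independent terms $a_0,a_2,c_0,c_2$.

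The paper avoids this by taking minimal cone supports over $A\cup C$ only. Under the hypotheses every gradient is then one of two kinds:
\begin{itemize}
\item a cross gradient, generated as $x\,a_{i'}+x\,a_{i''}+y\,c_{j'}+y\,c_{j''}$;
\item a point in one of the two planes.
\end{itemize}
In the planes, the identities above eliminate every off-side sector except $g=-x\,a_1-y\,a_2$ (and its $C$-analogue), which is generated by $h,k,c_1,c_2$.
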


\begin{proof}
First the basis vectors and cross points. Substituting the triangle
relations into the cross relations,
\[
a_0=\tfrac1p\,h+\tfrac1r\,k+\tfrac qp\,c_1+\tfrac sr\,c_2,
\qquad
c_0=\tfrac1q\,h+\tfrac1s\,k+\tfrac pq\,a_1+\tfrac rs\,a_2,
\]
and the generators are independent: in the basis
$(a_1,a_2,c_1,c_2)$ the matrix of $(h,k,c_1,c_2)$ has determinant
$pr\ne0$, and symmetrically for $c_0$. More generally, any cross gradient
$g$ with $g+x\,a_i+y\,c_j=0$, $x,y>0$, satisfies
\[
g=x\,a_{i'}+x\,a_{i''}+y\,c_{j'}+y\,c_{j''},
\]
where $\{i,i',i''\}=\{0,1,2\}$ and $\{j,j',j''\}=\{0,1,2\}$; the four
generators, two per complementary plane and each pair independent, are
independent. This covers every cross type, including any beyond the two
witnesses.

Next the in-plane points; we write the $A$-plane case, the $C$-plane being
symmetric. Let $g$ be a nonzero gradient in the $A$-plane and off the side
lines, and take its minimal cone support over $B$
(Lemma~\ref{lem:mincone}), of size at most $4$. Size $4$ would give a
rank-$4$ five-point positive circuit and size $3$ a rank-$3$ circuit, both
excluded by hypothesis, so the support has at most two elements. It cannot
involve an element of $C$: the planes are complementary, so if
$-g=\alpha a_i+\beta c_j$ with $\beta>0$, then
$\beta c_j=-g-\alpha a_i$ lies in the $A$-plane, forcing $c_j=0$; a support
inside $C$ alone would put $g$ in both planes, hence $g=0$. So the support
is a one- or two-element subset of $\{a_0,a_1,a_2\}$, and one-element
supports on $a_1$ or $a_2$ are on side lines, leaving the following cases.

If $g=-x\,a_1-y\,a_2$ with $x,y>0$:
\[
g=\tfrac xp\,h+\tfrac yr\,k+\tfrac{xq}p\,c_1+\tfrac{ys}r\,c_2,
\]
strictly $4$-generated with the same independent generators.

If $g=-x\,a_0-y\,a_1=(x-y)a_1+x\,a_2$ with $x,y>0$: for $x<y$,
\[
r\,g+x\,k+r(y-x)\,a_1+xs\,c_2=0
\]
is a rank-$3$ positive circuit, excluded by hypothesis; for $x=y$, $g$ lies
on $L_2$; for $x>y$,
\[
g+\tfrac{x-y}{p}\,h+\tfrac xr\,k+\tfrac{(x-y)q}{p}\,c_1+\tfrac{xs}{r}\,c_2=0
\]
is a five-point positive circuit of rank $4$, excluded. The support
$\{a_0\}$ alone, $g=x\,a_1+x\,a_2$, is the boundary case $y=0$ of the last
identity and is likewise excluded. The case $g=-x\,a_0-y\,a_2$ is
symmetric, with $L_1$ as the ray boundary. Hence in the remaining
configuration every off-side nonzero gradient is strictly $4$-generated.
\end{proof}

The excluded cases in Lemma~\ref{lem:offside} are not losses: a rank-$3$
circuit yields $\norm{\hat q}^2\le1$ by Lemma~\ref{lem:completion}, and a
rank-$4$ five-point circuit yields exact recovery by
Lemma~\ref{lem:cert}. It remains to handle the configuration in which every
off-side point is strictly $4$-generated.

We pass to projective coordinates. In the branches below every point with
$\xi_i\ne0$ has $\eta_i\ne0$ by \eqref{eq:nozero4}; write
$v_i=\xi_i/\eta_i$, and $v_i=0$ for pure-residual points. The constraint
$\ip{q}{\xi_i}=\eta_i$ reads $\ip{q}{v_i}=1$, and $g_i=\eta_i^2v_i$, so
$v_i$ spans the same ray as $g_i$.

\begin{lemma}[sign-cone realization]\label{lem:signcone}
Let $v_0=\sum_{j=1}^{d}\kappa_jv_j$ with $\kappa_j>0$ and
$v_1,\dots,v_d\in\R^d$ linearly independent, all being projective points of
the system. Let $q$ satisfy
\[
\ip{q}{v_0}>1,\qquad \ip{q}{v_j}<1\quad(j=1,\dots,d).
\]
Then $q$ is the unique minimizer of a strictly convex, strictly positively
weighted objective on these $d+1$ points.
\end{lemma}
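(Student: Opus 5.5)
The plan is a direct first-order construction: prescribe the weights so that $q$ is a stationary point, then use positive definiteness of the Hessian to get uniqueness. No earlier machinery is needed beyond the identity $g_i=\eta_i^2v_i$ and the projective rewriting of the losses.

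\emph{Rewriting the objective.} For each of the $d+1$ points, $\xi_i=\eta_iv_i$ with $\eta_i\neq0$. Indeed $v_1,\dots,v_d$ are nonzero because they are independent, and $v_0$ is nonzero as a strictly positive combination of independent vectors; so none of them is a pure-residual point. Hence
\[
(\ip{q}{\xi_i}-\eta_i)^2=\eta_i^2(\ip{q}{v_i}-1)^2 .
\]
An objective $\sum_i\alpha_i(\ip{q}{\xi_i}-\eta_i)^2$ can therefore be written as $\sum_i\beta_i(\ip{q}{v_i}-1)^2$ with $\beta_i=\alpha_i\eta_i^2$. Strict positivity of the $\alpha_i$ is equivalent to strict positivity of the $\beta_i$. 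Normalizing the $\alpha_i$ to sum to one changes neither the minimizer nor strict convexity.

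\emph{Choosing the weights.} Put $\rho_i=\ip{q}{v_i}-1$, so that $\rho_0>0$ and $\rho_j<0$ for $j\ge1$. The gradient of the objective at $q$ is $2\sum_{i=0}^{d}\beta_i\rho_iv_i$. Substituting $v_0=\sum_j\kappa_jv_j$ turns it into
\[
2\sum_{j=1}^{d}\bigl(\beta_0\rho_0\kappa_j+\beta_j\rho_j\bigr)v_j .
\]
By linear independence this vanishes if and only if $\beta_j=-\beta_0\rho_0\kappa_j/\rho_j$ for every $j$. Fix $\beta_0=1$. Since $\rho_0>0$, $\kappa_j>0$ and $\rho_j<0$, every $\beta_j$ is strictly positive. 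This sign check is the only place the hypotheses $\ip{q}{v_0}>1>\ip{q}{v_j}$ enter; it is the crux of the lemma, and it explains the name: $q$ must lie in the sign cone where the residuals of $v_0$ and of the generators have opposite signs.

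\emph{Uniqueness.} The Hessian is $2\sum_i\beta_iv_iv_i^{\top}$. It is positive definite because $v_1,\dots,v_d$ span $\R^d$ and carry positive weights. So the objective is strictly convex, and its stationary point $q$ is the unique minimizer.

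\emph{Distinctness of the points.} The selection has effective support exactly $d+1$. The points $v_1,\dots,v_d$ are distinct because they are independent. Point $0$ cannot coincide with any point $j$: that would force $\ip{q}{v_0}=\ip{q}{v_j}$, contradicting $\ip{q}{v_0}>1>\ip{q}{v_j}$.

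I expect no genuine obstacle; the work is bookkeeping of the signs and of the passage between gradients and projective points.
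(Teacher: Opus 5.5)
Your proposal is correct and is essentially the paper's proof: you impose stationarity at $q$ by matching the weighted residual coefficients to the signed dependence $v_0-\sum_j\kappa_jv_j=0$ (your normalization $\beta_0=1$ is the paper's choice $t=\rho_0$), read off strict positivity of the weights from $\rho_0>0>\rho_j$, and get uniqueness from the positive definite Hessian on the independent $v_1,\dots,v_d$. Your remarks on $\eta_i\ne0$ and on distinctness of the selected points are harmless details that the paper leaves implicit.
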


\begin{proof}
Write $e_i(q)=\ip{q}{v_i}-1$. Stationarity of
$\sum_i\alpha_i\eta_i^2\,e_i(q)\,v_i=0$ holds iff the coefficient vector
$(\alpha_i\eta_i^2e_i)_i$ is proportional to the signed dependence
$(1,-\kappa_1,\dots,-\kappa_d)$ of $v_0-\sum\kappa_jv_j=0$. Setting
$\alpha_0=t/(\eta_0^2e_0)$ and $\alpha_j=t\kappa_j/(\eta_j^2(-e_j))$ with
any $t>0$ gives strictly positive weights, since $e_0>0>e_j$. The Hessian
$\sum\alpha_i\eta_i^2v_iv_i^{\top}$ is positive definite because
$v_1,\dots,v_d$ are independent.
\end{proof}

\begin{lemma}[extremal basis]\label{lem:extremal}
Let a whitened system in $\R^d$ satisfy $g_i=0\Rightarrow\xi_i=0$ and admit
$d$ linearly independent projective points $p_1,\dots,p_d$ with the
following property. Writing $q_B$ for the unique solution of
$\ip{q_B}{p_j}=1$ and $z_i=\ip{q_B}{v_i}$ for all points, every point with
$|z_i|>1$ admits a representation $v_i=\sum_{j=1}^{d}\kappa_jw_j$ with
$\kappa_j>0$ and $w_1,\dots,w_d$ linearly independent projective points of
other data points. Then the infimum of $\norm{\hat q}^2$ over strictly
convex selections of at most $d+1$ points is at most $1$.
\end{lemma}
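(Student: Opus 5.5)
The plan is to work with the single interpolation point $q_B$ and to rescale it, calibrating by the largest evaluation $z_i$. Two facts drive this. First, the covariance identity gives an exact formula for the norm of any $q$ in terms of evaluations: since $\xi_i=\eta_iv_i$ (and $\xi_i=0$ for pure-residual points),
\[
\norm{q}^2=\frac1N\sum_i\ip{q}{\xi_i}^2=\frac1N\sum_i\eta_i^2\ip{q}{v_i}^2 .
\]
For $q=tq_B$, using $\frac1N\sum_i\eta_i^2=1$, this yields
\[
\norm{tq_B}^2\le t^2\max_iz_i^2 .
\]
Second, Lemma~\ref{lem:signcone} turns a sign pattern into a legal strictly convex selection of $d+1$ points. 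The pattern needed is one evaluation above $1$ and the $d$ generators below $1$.

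\emph{Step 1: the trivial case.} Put $Z=\max_i|z_i|$. If $Z\le1$, select $p_1,\dots,p_d$ with any positive weights. Their features are independent, so the objective is strictly convex with unique minimizer $q_B$. The displayed bound gives $\norm{q_B}^2\le Z^2\le1$, using at most $d\le d+1$ points.

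\emph{Step 2: the extremal point.} Suppose $Z>1$, and pick an index $i$ attaining it. The hypothesis gives a representation
\[
v_i=\sum_j\kappa_jw_j,\qquad \kappa_j>0,
\]
with $w_1,\dots,w_d$ linearly independent projective points of other data points. Write $\zeta_j=\ip{q_B}{w_j}$, so $|\zeta_j|\le Z$.

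If $z_i=Z>1$, take $q=tq_B$ with $t$ slightly above $1/Z$. Then $\ip{q}{v_i}=tZ>1$. Every generator with $\zeta_j<Z$ has $\ip{q}{w_j}<1$ once $t$ is close enough to $1/Z$. Lemma~\ref{lem:signcone} then realizes $q$ as the unique minimizer of a strictly convex $(d+1)$-point objective, and $\norm{q}^2\le t^2Z^2\to1$. Since only the infimum matters, this suffices.

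If $z_i=-Z<-1$, use $q=-tq_B$ instead, again with $t$ slightly above $1/Z$. The inequality needed is now $\zeta_j>-Z$, i.e.\ the generators must not also sit at the minimal evaluation.

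\emph{Step 3: ties (the main obstacle).} Some generators may share the extremal evaluation, i.e.\ $\zeta_j=\pm Z$. I would first perturb the direction: replace $q$ by $tq_B+\varepsilon\delta$, where $\delta$ solves the independent system $\ip{\delta}{w_j}=-1$ on tied generators and $\ip{\delta}{w_j}=+M$ on untied ones. This gives
\[
\ip{\delta}{v_i}=M\sum_{\mathrm{untied}}\kappa_j-\sum_{\mathrm{tied}}\kappa_j ,
\]
which is positive for $M$ large whenever at least one generator is untied. Small $\varepsilon$ then restores the strict sign pattern, with norm change $O(\varepsilon)$.

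The genuinely delicate subcase is when all generators are tied. Then $Z=\sum_j\kappa_jZ$, so $\sum_j\kappa_j=1$ and $v_i$ is a convex combination of the $w_j$. In this case I would replace $i$ by a tied generator and use its own representation; this is legitimate because it also has $|z|>1$. To prevent cycling, I would choose $i$ extremal not only in $|z|$ but also in a secondary order, for instance a vertex of the convex hull of the points attaining $\pm Z$. A vertex cannot be a proper convex combination of other points on the same level set, so the all-tied case cannot occur for it. Making this secondary extremality precise, and checking it is compatible with the independence of the $w_j$, is the step I expect to need the most care.
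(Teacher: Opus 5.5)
Your argument follows the paper's proof in outline. You calibrate $q_B$ by the largest evaluation, anchor at a maximizer $i$, and set $q_\ast=\sigma q_B/Z$ with $\sigma=\mathrm{sgn}(z_i)$. You then realize nearby points of the strict sign cone through Lemma~\ref{lem:signcone} and pass to the infimum. Your perturbation $\delta$ in the partial-tie case does the job of the paper's segment from $q_\ast$ toward the point $q_c$ with $\ip{q_c}{w_j}=c\in(1/K,1)$, where $K=\sum_j\kappa_j$. Pairing $v_i=\sum_j\kappa_jw_j$ with $q_\ast$ gives $1=\sum_j\kappa_j\ip{q_\ast}{w_j}$ with every $\ip{q_\ast}{w_j}\le1$. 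So ``some generator is untied'' is exactly $K>1$, and the paper disposes of all ties at once through the dichotomy $K=1$ versus $K>1$.

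The missing idea concerns the subcase you call genuinely delicate, all generators tied. That is $K=1$, and it is the easiest case. There $\ip{q_\ast}{w_j}=1$ for every $j$, so $q_\ast$ interpolates the $d$ independent generators and the anchor. Any positive weights on these points give a strictly convex objective with unique minimizer $q_\ast$, and $\norm{q_\ast}^2=\norm{q_B}^2/Z^2\le1$ holds exactly, with no limit. This is how the paper closes it.

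Your secondary-extremality device is therefore unnecessary, and as written it is not quite complete. It does work for $d\ge2$. Independence with $\kappa_j>0$ forbids $w_1=v_i$, since otherwise $(1-\kappa_1)w_1=\sum_{j\ge2}\kappa_jw_j$. So in the all-tied case $v_i$ would be a positive convex combination of $d$ other points of the same extremal level set, which a vertex of that set's convex hull cannot be. At $d=1$, however, the all-tied case forces $w_1=v_i$, a duplicated data point, since the data form a multiset. Then the vertex argument says nothing, and ``replace $i$ by a tied generator'' can cycle between the duplicates. Only the interpolation observation closes that case. Replace the last paragraph of your Step~3 by the $K=1$ interpolation and the proof is complete in every dimension.
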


\begin{proof}
For a pure-residual point $\ip{q_B}{\xi_i}=0=\eta_iz_i$ defines $z_i=0$. The
covariance identity gives
\[
\tau:=\norm{q_B}^2
=\frac1N\sum_i\ip{q_B}{\xi_i}^2
=\frac1N\sum_i\eta_i^2z_i^2
\le L^2,\qquad L=\max_i|z_i| .
\]
If $\tau\le1$, select $p_1,\dots,p_d$ with any positive weights: the
objective interpolates all $d$ constraints at $q_B$, is strictly convex, and
its unique minimizer $q_B$ has $\norm{q_B}^2=\tau\le1$. This uses $d$
points.

If $\tau>1$ then $L\ge\sqrt\tau>1$. Pick $i$ with $|z_i|=L$ and let
$\sigma=\mathrm{sgn}(z_i)$; by hypothesis
$v_i=\sum_j\kappa_jw_j$, $\kappa_j>0$, with independent generators
$w_1,\dots,w_d$ among the other points. Set $q_\ast=\sigma q_B/L$. Then
$\ip{q_\ast}{v_i}=1$ and, for each generator,
$\ip{q_\ast}{w_j}\le|\ip{q_B}{w_j}|/L\le1$, since $L$ bounds every
evaluation. Let $K=\sum_j\kappa_j$; pairing the generation identity with
$q_\ast$ gives $1=\sum_j\kappa_j\ip{q_\ast}{w_j}\le K$.

If $K=1$, all $\ip{q_\ast}{w_j}=1$, so $q_\ast$ interpolates the anchor and
all generators; any positive weights on these $d+1$ points give a strictly
convex objective with unique minimizer $q_\ast$, and
$\norm{q_\ast}^2=\tau/L^2\le1$.

If $K>1$, the strict sign cone
$\{q:\ip{q}{v_i}>1,\ \ip{q}{w_j}<1\ \forall j\}$ is nonempty: the point
$q_c$ with $\ip{q_c}{w_j}=c$ for all $j$ and $c\in(1/K,1)$ has anchor value
$Kc>1$. Every point of the segment $(1-t)q_\ast+t\,q_c$, $t\in(0,1]$, lies
in the strict cone, and by Lemma~\ref{lem:signcone} each is the unique
output of a strictly convex five-point objective. Letting $t\downarrow0$,
the achievable risks converge to $1+\norm{q_\ast}^2\le2$; the definition of
$\Ls(n;\Astar)$ takes an infimum, which completes the claim.
\end{proof}

\begin{proposition}[rectangle closure]\label{prop:rect}
In the rectangle branch, under \eqref{eq:nozero4}, either some selection of
at most $5$ points achieves $\norm{\hat q}^2\le1$ outright, or the extremal
basis hypothesis holds and the infimum over five-point selections of the
excess is at most $1$.
\end{proposition}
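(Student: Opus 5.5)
The plan is to reduce to the hypotheses of Lemma~\ref{lem:offside} and then feed Lemma~\ref{lem:extremal} with a basis built from the four side lines.

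\emph{Step 1: the escape cases.} If some four gradients form a rank-$3$ positive circuit, Lemma~\ref{lem:completion} with $r=3$, $d=4$ gives a five-point selection with $\norm{\hat q}^2\le E(P)\le1$. If some five gradients form a rank-$4$ positive circuit, Lemma~\ref{lem:cert} recovers exactly. Either way the first alternative of the proposition holds. Otherwise Lemma~\ref{lem:offside} applies: every nonzero gradient off the four side lines $L_1,\dots,L_4$ is strictly $4$-generated. Its independent generators are gradients of other points, so the same holds for the projective points $v_i$, which are positive multiples of the $g_i$.

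\emph{Step 2: choose the interpolation basis on the side lines.} By \eqref{eq:nozero4}, every nonzero projective point lies on its gradient ray. Each side line $L_j$ carries at least one projective point, namely the positive multiples of $a_1,a_2,c_1,c_2$. On each $L_j$, choose a point $p_j$ whose projective representative has maximal norm among the points of that line lying on the same ray as the basis vector. The vectors $p_1,\dots,p_4$ are independent because $a_1,a_2$ span the $A$-plane and $c_1,c_2$ span the complementary $C$-plane. Let $q_B$ solve $\ip{q_B}{p_j}=1$ and set $z_i=\ip{q_B}{v_i}$.

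Now verify the hypothesis of Lemma~\ref{lem:extremal}. Points off the side lines are strictly $4$-generated by Step 1, whatever their $z_i$, so only side-line points need care. A point on $L_j$ equals $\lambda p_j$. If $\lambda>0$, maximality of $p_j$ gives $\lambda\le1$, hence $|z_i|=\lambda\le1$. The remaining danger is a point on the opposite ray of $L_j$, say $-\mu a_1$. For that case I would first test whether the antipodal point lets four or five gradients positively span, forcing an escape case. For instance, $-a_1$ together with $a_1,a_2$ and the cross relations may produce a rank-$3$ positive circuit $\{-a_1,a_0,a_2,\cdot\}$. Note that $a_0+a_2=-a_1$, so $\{-\mu a_1,a_0,a_2\}$ is a rank-$2$ dependence with signs $(-,+,+)$, not positive. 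The more promising route is $-\mu a_1=\tfrac{\mu}{p}h+\tfrac{\mu q}{p}c_1$ with $h,c_1$ independent. That is only $2$-generation, so I would try to upgrade it to strict $4$-generation by perturbing with the zero relation $a_0+a_1+a_2=0$ expressed through $h,k,c_1,c_2$. If that fails, I would instead let $p_j$ be chosen on either ray so as to maximize $|z|$ consistently.

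\emph{Step 3: conclude.} Once every point with $|z_i|>1$ is strictly $4$-generated, Lemma~\ref{lem:extremal} gives infimum excess at most $1$ over five-point strictly convex selections. This is the second alternative.

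\emph{Main obstacle.} The hard step is handling side-line points on the ray opposite to the chosen basis vector, i.e.\ certifying that $|z_i|>1$ never occurs there without generation. I would try to show that any such antipodal side point combines with the triangle and cross witnesses into a rank-$3$ four-point circuit or a five-point spanning set, which falls back to Step 1. If that fails, I would choose the signs of the $p_j$ adaptively, iterating toward a basis that maximizes $\norm{q_B}$-type calibration, so that extremality rules out such points.
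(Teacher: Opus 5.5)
Your outline is the paper's own: escape through Lemma~\ref{lem:completion} or Lemma~\ref{lem:cert}, then Lemma~\ref{lem:offside}, a basis on the four side lines, and Lemma~\ref{lem:extremal}. But the one step that carries the content of this proposition is left open, and your primary attack on it fails. You choose $p_j$ only among points on the ray of $a_1,a_2,c_1,c_2$. That leaves side-line points on the opposite ray, whose evaluations can be below $-1$. Such points need neither produce an escape circuit nor be strictly $4$-generated.

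For instance, add a ninth direction $-2e_1$ to the eight directions of Proposition~\ref{prop:rectobs}, and realize the nine with positive masses via Lemma~\ref{lem:realize}. Use that proposition's labels, where $a_1=e_1$, and $a_3,b_j,h_1,h_2$ play the roles of $a_0,c_j,h,k$. The nonnegative dependences are generated by the four triangles and the rank-$1$ pair $\{-2e_1,e_1\}$. So there is no rank-$3$ four-point or rank-$4$ five-point circuit, and no five directions positively span; you are still in the rectangle branch. A strict $4$-generation of $-2e_1$ would give a nonnegative dependence among the original eight whose support is $a_1$ plus four independent vectors. But the only five-element supports containing $a_1$ are $\{a_1,a_2,a_3,b_1,h_1\}$, $\{a_1,a_2,a_3,b_2,h_2\}$ and $\{a_1,b_1,b_2,b_3,h_1\}$, and each has its other four vectors in a coordinate hyperplane. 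So neither your escape argument nor an upgrade of the $2$-generation $-e_1=h_1+b_1$ can succeed. With $p_1$ the point of direction $e_1$, the new point has $z=-2$, and the hypothesis of Lemma~\ref{lem:extremal} fails for your basis.

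The missing observation is that Lemma~\ref{lem:extremal} only requires linearly independent projective points $p_1,\dots,p_4$. It puts no condition on which ray of $L_j$ each one lies on. The paper fixes a reference direction on each side line, writes each point there as $v=\theta\cdot(\text{reference})$, and takes $p_j$ of maximal $|\theta|$ over \emph{both} rays. Every point of $L_j$ is then $\lambda p_j$ with $|\lambda|\le1$, so $z=\lambda\,\ip{q_B}{p_j}=\lambda$ and $|z|\le1$ whatever its sign. Pure-residual points have $z=0$, and independence is untouched because each $p_j$ still spans $L_j$. Hence every point with $|z|>1$ is off-side, strictly $4$-generated by Lemma~\ref{lem:offside}, and Lemma~\ref{lem:extremal} applies. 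This is the fallback you mention in passing; it closes the step at once, and no antipodal escape argument or adaptive iteration over signs is needed.
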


\begin{proof}
If some four gradients form a rank-$3$ positive circuit or some five form a
rank-$4$ circuit, Lemma~\ref{lem:completion} or Lemma~\ref{lem:cert}
finishes. Otherwise Lemma~\ref{lem:offside} applies: every off-side nonzero
point is strictly $4$-generated. On each side line $L_l$ fix a reference
direction and pick the data point $p_l$ on $L_l$ of maximal $|\theta|$,
where $v=\theta\cdot(\text{reference})$; the four side lines carry the basis
rays $a_1,a_2,c_1,c_2$, so each holds at least one point, and the four
chosen points are independent since $a_1,a_2$ span the $A$-plane and
$c_1,c_2$ the complementary $C$-plane. Any point on a side line satisfies
$|z|=|\theta/\theta_{\max}|\le1$, and pure-residual points have $z=0$. So
every point with $|z|>1$ is off-side, hence strictly $4$-generated, and
Lemma~\ref{lem:extremal} with $d=4$ applies.
\end{proof}

\subsection{The main theorem for $(4,5)$}

\begin{theorem}\label{thm:fw45}
$\Fw(4,5)=2$.
\end{theorem}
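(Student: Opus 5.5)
The plan mirrors the proof of Theorem~\ref{thm:fw34}: the lower bound is immediate, and the upper bound is a case analysis over the size of a minimum-cardinality positively spanning subset of gradients.

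\emph{Lower bound.} This is Theorem~\ref{thm:gammalb} at $(d,k)=(4,1)$. Indeed $\Gam_{4,1}=\max_{2\le s\le4}(s-1)C_{4,s}$. The value at $s=2$ is $1\cdot(1/2+1/2)^{-1}=1$. The value at $s=3$ is $2\cdot(1/2+1+1)^{-1}=4/5$. The value at $s=4$ is $3/4$. So the maximum is $1$, giving $\Fw(4,5)\ge2$; the explicit two-block instance with $M=17$ realizes it.

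\emph{Upper bound: reductions.} Fix a dataset $D$ and dispose of the degenerate cases as in Theorem~\ref{thm:endpoint-ub}.
\begin{enumerate}
\item If the features span $\rho\le2$ dimensions, then $5\ge2\rho$ and the ratio is $1$.
\item If $\rho=3$, the problem is isometric to a three-dimensional one, and $5\ge4$ lets us invoke Theorem~\ref{thm:fw34}, giving ratio $\le5/3<2$.
\item If $\Ls=0$, four independent features interpolate.
\end{enumerate}
Otherwise whiten. If some point has $\eta_i=0\ne\xi_i$, Corollary~\ref{cor:anchor4} gives $\frac53+\varepsilon$. So assume \eqref{eq:nozero4}. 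Then, exactly as in Theorem~\ref{thm:fw34}, the nonzero gradients span $\R^4$ and hence positively span it. Let $B$ be a minimum-cardinality positively spanning subset, so $|B|\in\{5,6,7,8\}$.

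\emph{Upper bound: the branches.} Throughout, whenever four gradients form a rank-$3$ positive circuit we finish by Lemma~\ref{lem:completion} with $\norm{\hat q}^2\le E(P)\le1$, and whenever five form a rank-$4$ circuit we finish by Lemma~\ref{lem:cert}. Also, if some five gradients positively span, we restart with $|B|=5$.
\begin{enumerate}
\item $|B|=5$ is Proposition~\ref{prop:B5}.
\item $|B|=7$ is Proposition~\ref{prop:B7}.
\item $|B|=8$ is Proposition~\ref{prop:B8}.
\item $|B|=6$: by Proposition~\ref{prop:B6}, either completion applies, or $B$ is two triangles $A,C$ in complementary planes $U,V$. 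In the latter case, take any extra nonzero gradient and its minimal cone representation over $B$ (Lemma~\ref{lem:mincone}). Support size $4$ gives a five-point rank-$4$ circuit. Size $3$ gives a rank-$3$ circuit. A support inside one triangle leaves the gradient in $U$ or $V$. The remaining mixed two-element supports are exactly cross gradients of Definition~\ref{def:cross}. Split by cross types:
\begin{enumerate}
\item no cross types: Proposition~\ref{prop:22};
\item all cross types share an endpoint: Proposition~\ref{prop:shared};
\item otherwise two types $(i,j),(i',j')$ with $i\ne i'$ and $j\ne j'$ exist, which is the rectangle branch, closed by Proposition~\ref{prop:rect}.
\end{enumerate}
\end{enumerate}
In every branch the infimum of $\norm{\hat q}^2$ over strictly convex selections of at most five points is at most $1$. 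These objectives are tie-rule-free, so \eqref{eq:ratio} and the infimum in the definition of $\Ls(5;\Astar)$ give ratio at most $2$.

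\emph{Main obstacle.} The delicate step is checking that the $|B|=6$ case split is exhaustive and that the side hypotheses of Propositions~\ref{prop:shared} and~\ref{prop:rect} are really available.

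Each non-rectangle cross configuration must fall under ``all cross types share an endpoint.'' For any two cross types either $i=i'$ or $j=j'$ holds, else we are in the rectangle branch. With three or more types, pairwise sharing without a common endpoint would require a configuration like $(1,1),(1,2),(2,1)$. That configuration contains $(1,2)$ and $(2,1)$, which differ in both coordinates, so it is already a rectangle. I would verify this small combinatorial fact explicitly.

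One must also ensure that the no-five-spanning and no-rank-$3$/rank-$4$-circuit hypotheses hold or are discharged beforehand. This is done by the global reductions above, which either finish outright or lower $|B|$ to $5$.
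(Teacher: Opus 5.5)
Your proposal is correct and matches the paper's proof essentially step for step: the same degenerate reductions and anchor corollary, the same dispatch on $|B|\in\{5,6,7,8\}$, and the same $|B|=6$ trichotomy into Propositions~\ref{prop:22}, \ref{prop:shared} and~\ref{prop:rect}, with the side hypotheses of Proposition~\ref{prop:shared} discharged exactly as the paper does (your combinatorial check that non-rectangle cross types share a common endpoint is also right). The only deviation is in the feature rank $\rho=3$ case. There you invoke Theorem~\ref{thm:fw34} via monotonicity in the budget, which is valid under the effective-support convention, whereas the paper uses $5=2\cdot3-1$ and Theorem~\ref{thm:endpoint-ub} to get $\frac43$; either bound is below $2$.
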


\begin{proof}
The lower bound is Theorem~\ref{thm:gammalb}. For the upper bound, let $D$
be arbitrary. If the feature rank is $\rho\le2$, budget $5\ge2\rho$ gives
ratio $1$; if $\rho=3$, the problem is isometric to dimension $3$ with
budget $5=2\cdot3-1$, and Theorem~\ref{thm:endpoint-ub} gives
$\frac43<2$. If $\Ls=0$, four independent features finish. Otherwise
whiten. If some point has $\eta_i=0\ne\xi_i$,
Corollary~\ref{cor:anchor4} gives $\frac53+\varepsilon<2$. So assume
\eqref{eq:nozero4}; as before the nonzero gradients span, hence positively
span, $\R^4$. Take a minimum-cardinality positively spanning subset $B$.

$|B|=5$: Proposition~\ref{prop:B5}, excess $0$. $|B|=6$:
Proposition~\ref{prop:B6} either finishes with excess at most $1$ or leaves
two complementary triangles $B=A\cup C$ with planes $U,V$. In the latter
case, first reduce the gradients outside $B$ by their minimal cone supports
over $B$ (Lemma~\ref{lem:mincone}): a support of size $4$ gives a rank-$4$
five-point positive circuit and exact recovery by Lemma~\ref{lem:cert}; a
support of size $3$ gives a rank-$3$ circuit and
Lemma~\ref{lem:completion} finishes with excess at most $1$. If neither
occurs, every extra gradient has support of size at most $2$; supports of
size $1$, or of size $2$ inside one triangle, lie in $U\cup V$, and the
mixed two-element supports are exactly the cross gradients of
Definition~\ref{def:cross}. Now the trichotomy is exhaustive: with no cross
gradients, all nonzero gradients lie in $U\cup V$ and
Proposition~\ref{prop:22} applies; with cross gradients all sharing an
endpoint, note first that if any four gradients form a rank-$3$ positive
circuit then Lemma~\ref{lem:completion} finishes, and that no five
gradients positively span since $B$ has minimum cardinality $6$, so
Proposition~\ref{prop:shared} applies; with two non-sharing cross types,
Proposition~\ref{prop:rect}. $|B|=7$: Proposition~\ref{prop:B7}. $|B|=8$:
Proposition~\ref{prop:B8}.

Every branch gives excess at most $1$, either by a finite strictly convex
selection or as an infimum, so
$\Ls(5;\Astar)\le2\Ls$.
\end{proof}

\section{Three obstructions}\label{sec:obstructions}

The proof of Theorem~\ref{thm:fw45} runs through sign-cone geometry rather
than through any of three shorter routes that one might try first. This
section shows that each of those routes fails, by explicit counterexamples.
Every numerical claim in this section is a finite exact rational
computation; the derivations appear in the proofs, and scripts re-verifying
each claimed fraction in exact arithmetic are available from the author.

\subsection{Low-dimensional circuit covers fail}

Lemma~\ref{lem:mincone} controls every extra gradient by a low-rank positive
circuit. It is tempting to hope that these circuit flats can be chosen with
small total dimension, so that a partition-style argument as in
Section~\ref{sec:block} finishes. This fails already at the minimal cell.

\begin{proposition}[rectangle obstruction]\label{prop:rectobs}
In $\R^4$ let
\[
\begin{aligned}
&a_1=e_1,\quad a_2=e_2,\quad a_3=-e_1-e_2,\qquad
b_1=e_3,\quad b_2=e_4,\quad b_3=-e_3-e_4,\\
&h_1=-e_1-e_3,\qquad h_2=-e_2-e_4,
\end{aligned}
\]
and $G=\{a_1,a_2,a_3,b_1,b_2,b_3,h_1,h_2\}$. Then $G$ positively spans
$\R^4$; no subset of at most five vectors positively spans; the positive
circuits of $G$ are exactly
$\{a_1,a_2,a_3\}$, $\{b_1,b_2,b_3\}$, $\{a_1,b_1,h_1\}$,
$\{a_2,b_2,h_2\}$; and any cover of $G$ by circuit flats needs four planes,
of total dimension $8$.
\end{proposition}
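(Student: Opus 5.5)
The plan is to work entirely with the $8\times 4$ coordinate data and the dependence space of the $4\times 8$ matrix $M$ with columns $a_1,a_2,a_3,b_1,b_2,b_3,h_1,h_2$. Since $\rk M=4$, $\ker M$ has dimension $4$. A basis is given by the four relations
\[
a_1+a_2+a_3=0,\quad b_1+b_2+b_3=0,\quad a_1+b_1+h_1=0,\quad a_2+b_2+h_2=0 .
\]
Positive spanning then follows from Lemma~\ref{lem:l345}-style reasoning: summing the four relations gives a strictly positive dependence on all eight vectors, and $M$ has rank $4$. By the argument in the case $r=d$ of Theorem~\ref{thm:wend}, a spanning set with a strictly positive zero-sum relation positively spans.

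Next I would classify the positive circuits. Write a general dependence as $\lambda=x(1,1,1,0,0,0,0,0)+y(0,0,0,1,1,1,0,0)+u(1,0,0,1,0,0,1,0)+v(0,1,0,0,1,0,0,1)$. Since $a_3$ appears only in the first relation, $b_3$ only in the second, $h_1$ only in the third and $h_2$ only in the fourth, the coordinates of $\lambda$ at $a_3,b_3,h_1,h_2$ are exactly $x,y,u,v$. Hence $\lambda\ge0$ forces $x,y,u,v\ge0$, and the cone $K\cap\R^8_+$ is the simplicial cone generated by the four listed relations. Its extreme rays are these four relations. By Lemma~\ref{lem:l345}(3) and the converse (a positive circuit's dependence is an extreme ray of the nonnegative dependence cone, since a minimal-support strictly positive relation cannot be a nontrivial sum of others with smaller supports), the positive circuits of $G$ are exactly the four triangles listed.

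For the claim that no subset of at most five vectors positively spans, I would argue as follows. A positively spanning subset $S$ carries a strictly positive dependence supported on all of $S$, which is a nonnegative combination of the four generators with support exactly $S$, and $S$ must also span $\R^4$. Each triangle has rank $2$. A union of two triangles has rank $4$ only when the triangles are in complementary planes: $\{a\}\cup\{b\}$ gives six vectors, and $\{a_1,b_1,h_1\}\cup\{a_2,b_2,h_2\}$ also gives six vectors. Any other pair of triangles shares a vector, giving five vectors, but of rank $3$. For example, $\{a_1,a_2,a_3,b_1,h_1\}$ spans only $e_1,e_2,e_3$, and the same check applies to the remaining overlapping pairs. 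So every spanning union has size at least $6$, and unions of three or more triangles are at least this large anyway.

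Finally, for the cover statement, each circuit flat is a plane, and I would check that every vector of $G$ other than $a_1,a_2,b_1,b_2$ lies in exactly one circuit plane. Concretely, $a_3$ lies only in $\spn(a\text{'s})$, $b_3$ only in $\spn(b\text{'s})$, and $h_1,h_2$ only in their own cross planes; this needs a short check that, for instance, $h_1\notin\spn(e_1,e_2)$ and $h_1\notin\spn(e_2,e_4)$. Therefore all four planes are required, for total dimension $8$. The main obstacle is making the circuit classification rigorous, and the key point there is the observation that each generator owns a private coordinate.
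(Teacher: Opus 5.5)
Your proposal is correct and follows essentially the same route as the paper: you parametrize the nonnegative dependence cone by the free coefficients at the private vectors $a_3,b_3,h_1,h_2$, which shows it is simplicial with the four triangles as extreme rays; you then note that a positively spanning subset must be a rank-$4$ union of triangles, hence have at least $6$ elements; and you observe that each of $a_3,b_3,h_1,h_2$ lies in exactly one circuit plane. You also spell out two points the paper leaves implicit, namely that $G$ itself positively spans and that every positive circuit arises as an extreme ray of this cone.
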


\begin{proof}
A nonnegative dependence $\sum\lambda_ia_i+\sum\mu_ib_i+\nu_1h_1+\nu_2h_2=0$
forces, coordinate by coordinate,
$\lambda_1=\lambda_3+\nu_1$, $\lambda_2=\lambda_3+\nu_2$,
$\mu_1=\mu_3+\nu_1$, $\mu_2=\mu_3+\nu_2$. The dependence cone is therefore
generated by four extreme rays, obtained by setting one of
$\lambda_3,\mu_3,\nu_1,\nu_2$ to one and the others to zero; these are the
four listed triangles, and there are no other positive circuits. A
positively spanning subset must support a strictly positive dependence whose
support spans $\R^4$; single circuits have rank $2$, two circuits sharing a
direction span at most rank $3$, and the two complementary unions
$\{a_1,a_2,a_3,b_1,b_2,b_3\}$ and $\{a_1,b_1,h_1,a_2,b_2,h_2\}$ have six
elements each. Finally $a_3,b_3,h_1,h_2$ each belong to exactly one circuit
plane, so all four planes are needed.
\end{proof}

The associated regression instance is harmless for the risk question: with
features $2e_1,2e_2,-e_1-e_2,2e_3,2e_4,-e_3-e_4,-e_1-e_3,-e_2-e_4$ and all
labels $1$, the rows sum to zero, $w^{\star}=0$, $\Ls=1$, and the
five-point selection of the first five rows with weights
$(\frac3{16},\frac3{16},\frac12,\frac1{16},\frac1{16})$ outputs
$(-\frac1{14},-\frac1{14},\frac12,\frac12)$ with full risk $\frac{10}7<2$.
The obstruction concerns proof technique, not the value of $\Fw(4,5)$.

\subsection{Anchor disjunctions fail}

A second route interpolates a few low-cost points exactly and recurses in
the orthogonal complement. Two correct lemmas of this type follow; both are
audited stepping stones, and both are used nowhere in
Section~\ref{sec:fw45}, for the reason explained after the counterexample.

\begin{proposition}[single anchor]\label{prop:anchor1}
For a whitened system in $\R^4$ and a point $i$ with $\xi_i\ne0$, let
$t_i=\eta_i^2/\norm{\xi_i}^2$. If $t_i\le\frac15$, then for every
$\varepsilon>0$ some five-point strictly convex selection has
$1+\norm{\hat q}^2\le\frac53(1+t_i)+\varepsilon\le2+\varepsilon$.
\end{proposition}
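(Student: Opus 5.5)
The plan is to anchor on point $i$ and recurse in its orthogonal complement, in the pattern of Lemma~\ref{lem:anchor3} and Corollary~\ref{cor:anchor4}. The difference is that the anchor now carries a nonzero label, so the complement is no longer a whitened system. Let $u=\xi_i/\norm{\xi_i}$ and $W=u^{\perp}\cong\R^3$. I would look for outputs of the form $q=c\,u+w$ with $w\in W$ and $c=\eta_i/\norm{\xi_i}$, so that $q$ interpolates point $i$. Then $\norm{q}^2=t_i+\norm{w}^2$.

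Substituting the constraint, point $j$'s residual becomes $\ip{w}{P_W\xi_j}-\eta'_j$ with $\eta'_j=\eta_j-c\ip{u}{\xi_j}$. The first step is to rewhiten the projected family $\{(P_W\xi_j,\eta'_j)\}$ inside $W$:
\begin{itemize}
\item its feature covariance is $I_W$, since $P_WIP_W=I_W$;
\item its normal equation reads $\frac1N\sum_j\eta'_jP_W\xi_j=-c\,P_W u=0$, using $\sum\eta_j\xi_j=0$ and $\frac1N\sum\ip{u}{\xi_j}P_W\xi_j=P_WIu=0$;
\item its energy is $\frac1N\sum(\eta'_j)^2=1+c^2$, because the cross term vanishes and $\frac1N\sum\ip{u}{\xi_j}^2=1$.
\end{itemize}
So after dividing labels by $\sqrt{1+t_i}$ we get a whitened system on $W$. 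Theorem~\ref{thm:L13D} then supplies at most $4$ points whose objective, built from these shifted losses, is strictly convex on $W$ with minimizer $w_0$ satisfying $1+\norm{w_0}^2/(1+t_i)\le\frac53+\varepsilon$. That is, $\norm{w_0}^2\le(\frac23+\varepsilon)(1+t_i)$.

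Next I add point $i$'s own loss with weight $\lambda$ and apply Lemma~\ref{lem:penalty} with the single constraint $\ip{q}{\xi_i}=\eta_i$. On the affine slice $\{q=cu+w\}$, the original losses of the four selected points are exactly the shifted losses, so the restricted minimizer is $cu+w_0$. The nullspace condition follows from strict convexity on $W$, as in Lemma~\ref{lem:anchor3}. As $\lambda\to\infty$ the five-point outputs converge, giving $1+\norm{\hat q}^2\to1+t_i+\norm{w_0}^2\le\frac53(1+t_i)+\varepsilon'$. The final inequality $\frac53(1+t_i)\le2$ is just $t_i\le\frac15$.

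I expect the main obstacle to be handling the inner selections, since Theorem~\ref{thm:L13D} may itself produce an anchored penalty limit. As in Corollary~\ref{cor:anchor4}, I would fix the inner parameters first at finite values achieving the bound within $\delta_m$, and only then take the outer $\lambda_m$ large, letting $m\to\infty$. A second check is needed in the degenerate cases $\eta_i=0$, where Corollary~\ref{cor:anchor4} applies directly, and where some projected point has $\xi_j\ne0$ but zero shifted label. Both cases are absorbed because Theorem~\ref{thm:L13D} is stated for arbitrary whitened systems in $\R^3$.
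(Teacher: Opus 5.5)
Your proposal is correct and is essentially the paper's argument specialized to one anchor: you shift labels by the min-norm interpolant $q_{\mathrm a}=c\,u$ of the anchor, check that the projected family on $W$ has identity covariance, vanishing normal equation and energy $1+t_i$, rescale by $(1+t_i)^{-1/2}$, invoke Theorem~\ref{thm:L13D}, and restore the anchor through Lemma~\ref{lem:penalty}. The paper runs the same computation for one or two anchors at once, and your write-up spells out the cross-term cancellation, the nullspace condition and the order of the inner and outer limits, which the paper leaves to ``direct expansion.''
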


\begin{proposition}[double anchor]\label{prop:anchor2}
For independent features $\xi_i,\xi_j$, let $q_{ij}$ be the minimum-norm
solution of $\ip{q}{\xi_i}=\eta_i$, $\ip{q}{\xi_j}=\eta_j$ and
$t_{ij}=\norm{q_{ij}}^2$. If $t_{ij}\le\frac13$, then for every
$\varepsilon>0$ some five-point strictly convex selection has
$1+\norm{\hat q}^2\le\frac32(1+t_{ij})+\varepsilon\le2+\varepsilon$.
\end{proposition}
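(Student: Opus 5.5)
The double-anchor statement follows the pattern of Lemma~\ref{lem:anchor3} and Corollary~\ref{cor:anchor4}: pin two coordinates by penalties and recurse on the orthogonal complement with the endpoint interface. Let $U=\spn\{\xi_i,\xi_j\}$ and $W=U^{\perp}\cong\R^2$. The affine set $\mathcal A=\{q:\ip{q}{\xi_i}=\eta_i,\ \ip{q}{\xi_j}=\eta_j\}$ equals $q_{ij}+W$, and $q_{ij}\in U$ because it is the minimum-norm solution. Writing $q=q_{ij}+w$ with $w\in W$, every loss becomes
\[
(\ip{w}{\zeta_l}-\eta'_l)^2,\qquad \zeta_l=P_W\xi_l,\qquad \eta'_l=\eta_l-\ip{q_{ij}}{\xi_l}.
\]
So the full-data risk restricted to $\mathcal A$ is the risk of the shifted family $(\zeta_l,\eta'_l)$ on $W$, and it equals $1+\norm{q_{ij}}^2+\norm{w}^2$ by \eqref{eq:ratio}.

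The first step is to check this family against the conditions for applying Theorem~\ref{thm:wend} in dimension $2$. Its covariance is $\frac1N\sum\zeta_l\zeta_l^{\top}=P_WP_W=I_W$. The labels have changed, so the normal equation on $W$ must be recomputed. The quantity $\frac1N\sum_l\eta'_l\zeta_l$ is $P_W$ applied to $\frac1N\sum_l(\eta_l-\ip{q_{ij}}{\xi_l})\xi_l=0-q_{ij}$, and $P_Wq_{ij}=0$, so the normal equation holds. The optimal risk on $W$ is therefore the value at $w=0$, namely $\rho^\star=1+t_{ij}$.

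Rescaling labels by $1/\sqrt{\rho^\star}$ produces a genuine whitened system on $W$. Theorem~\ref{thm:wend} with $d=2$ then gives at most $3$ original points and positive weights whose objective $G$ is strictly convex on $W$ when restricted to $\mathcal A$, with minimizer $w^\ast$ satisfying $\norm{w^\ast}^2\le\frac12\rho^\star$. This is exactly the restriction of the original losses to $\mathcal A$, so the total risk is at most $1+t_{ij}+\frac12(1+t_{ij})=\frac32(1+t_{ij})$.

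The second step adds the two anchor points $i,j$ as penalties via Lemma~\ref{lem:penalty}, with $C$ having rows $\xi_i^{\top},\xi_j^{\top}$, which has full row rank by independence. The nullspace condition holds because a direction killed by both $G$ and $C$ lies in $\ker C=W$, where $G$ is strictly convex. The minimizers $q_\lambda$ then converge to $q_{ij}+w^\ast$, giving $3+2=5$ points and the bound $+\varepsilon$. Finally, $t_{ij}\le\frac13$ yields $\frac32\cdot\frac43=2$.

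The main obstacle is the degenerate case in which the selected points of Theorem~\ref{thm:wend} coincide with $i$ or $j$. Those points then have $\zeta$ or shifted residual equal to $0$, which only reduces the count, so it is harmless. One must also check that $G$ is nonnegative and its restriction has a unique minimizer. Both follow from strict convexity on $W$ together with positive weights, so I expect this step to be bookkeeping rather than a real difficulty.
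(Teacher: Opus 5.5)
Your proposal is correct and follows the paper's proof essentially step for step. Both arguments shift by the min-norm anchor interpolant and project to $W$. Both then check the three whitened identities, with energy $1+t_{ij}$ via $P_Wq_{ij}=0$, rescale, apply Theorem~\ref{thm:wend} with $d=2$, and penalize the two anchors back in through Lemma~\ref{lem:penalty}, giving the $3+2$ count.
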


\begin{proof}[Proof of both]
Write $W$ for the orthogonal complement of the anchored features and
$\zeta_k=P_W\xi_k$, $e_k=\eta_k-\ip{q_{\mathrm a}}{\xi_k}$, where
$q_{\mathrm a}$ is the min-norm interpolant of the anchors. Direct expansion
from \eqref{eq:threeid} gives
$\frac1N\sum_k\zeta_k\zeta_k^{\top}=I_W$,
$\frac1N\sum_ke_k\zeta_k=0$ and
$\frac1N\sum_ke_k^2=1+t$, where $t$ is the anchor cost; for the cross term
one uses that $q_{\mathrm a}$ lies in the span of the anchored features.
Rescaling $e_k$ by $(1+t)^{-1/2}$ yields a whitened system on $W$. Apply
Theorem~\ref{thm:L13D} on $W\cong\R^3$, or Theorem~\ref{thm:wend} with
$d=2$ on $W\cong\R^2$, and penalize the anchors back with
Lemma~\ref{lem:penalty}. The counts are $4+1$ and $3+2$.
\end{proof}

Given these lemmas and Lemma~\ref{lem:completion}, the statement
\begin{equation}\label{eq:R}
\max_{P}R(P)\ge\tfrac12
\qquad\text{or}\qquad
\min_{i,j}t_{ij}\le\tfrac13,
\tag{R}
\end{equation}
quantified over all rectangle systems, with $P$ ranging over the circuit
planes and $i,j$ over independent feature pairs, would close
$\Fw(4,5)=2$ in a few lines. It is false.

Both counterexamples below are specified by directions and masses; the
following standard construction turns such data into a whitened system.

\begin{lemma}[projective realization]\label{lem:realize}
Let $s_i>0$ with $\sum_is_i=1$ and let nonzero vectors $d_i\in\R^d$ satisfy
$\sum_is_id_i=0$ and $M=\sum_is_id_id_i^{\top}\succ0$. Set
\[
v_i=M^{-1/2}d_i,\qquad \eta_i=\sqrt{Ns_i},\qquad \xi_i=\eta_iv_i .
\]
Then $\{(\xi_i,\eta_i)\}$ satisfies the three identities
\eqref{eq:threeid}; moreover $\eta_i^2=Ns_i$, each $v_i$ spans the image of
the ray of $d_i$ under the invertible map $M^{-1/2}$, and
$\ip{v_i}{v_j}=d_i^{\top}M^{-1}d_j$. Linear dependences among the $d_i$
transfer verbatim to the $v_i$; since $g_i=Ns_iv_i$ with $s_i>0$, a
dependence $\sum_ic_id_i=0$ becomes $\sum_i(c_i/s_i)\,g_i=0$, so circuit
supports, coefficient signs and positive spanning properties transfer to
the gradients after this positive rescaling of coefficients.
\end{lemma}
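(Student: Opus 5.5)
The plan is a direct verification of the three identities in \eqref{eq:threeid}, followed by a short linear-algebra check of the transfer claims. Throughout, $N$ is the number of points and the weights satisfy $\sum_is_i=1$. By construction $\eta_i^2=Ns_i$ and $\xi_i=\eta_iv_i$, so
\[
\frac1N\sum_i\xi_i\xi_i^{\top}=\sum_is_iv_iv_i^{\top}=M^{-1/2}\Bigl(\sum_is_id_id_i^{\top}\Bigr)M^{-1/2}=M^{-1/2}MM^{-1/2}=I_d .
\]
This uses only that $M\succ0$, so that $M^{-1/2}$ exists and is symmetric. For the normal equation, $\eta_i\xi_i=\eta_i^2v_i=Ns_iv_i$. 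Hence
\[
\frac1N\sum_i\eta_i\xi_i=M^{-1/2}\sum_is_id_i=0 .
\]
The energy identity follows from $\frac1N\sum_i\eta_i^2=\sum_is_i=1$. The same computation records $g_i=\eta_i\xi_i=Ns_iv_i$ for later use.

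Next come the geometric statements. Since $v_i=M^{-1/2}d_i$ and $M^{-1/2}$ is invertible and linear, each ray $\R_{+}d_i$ is mapped to $\R_{+}v_i$. The Gram identity is $\ip{v_i}{v_j}=d_i^{\top}M^{-1/2}M^{-1/2}d_j=d_i^{\top}M^{-1}d_j$, again by symmetry of $M^{-1/2}$. Applying $M^{-1/2}$ to a dependence $\sum_ic_id_i=0$ gives $\sum_ic_iv_i=0$. Substituting $v_i=g_i/(Ns_i)$ turns this into $\sum_i(c_i/s_i)g_i=0$ after dropping the common factor $1/N$.

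Because $s_i>0$, the map $c\mapsto(c_i/s_i)_i$ is a positive diagonal rescaling. It is therefore a bijection between the dependence spaces of $(d_i)$ and of $(g_i)$, and it preserves supports and coefficient signs. Consequently positive dependences correspond, minimal supports correspond, and positive circuits correspond. Positive spanning transfers as well: $\pos\{g_i:i\in S\}=M^{-1/2}\pos\{d_i:i\in S\}$ because each $g_i$ is a positive multiple of $M^{-1/2}d_i$, and $M^{-1/2}$ is a bijection of $\R^d$.

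No step is a real obstacle. The only point needing care is to track the factor $N$ and the normalization $\sum_is_i=1$, which together make the energy exactly $1$. One should also make sure the weighted covariance $M$, rather than an unweighted one, is what is being whitened.
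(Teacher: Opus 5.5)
Your proposal is correct and follows the paper's proof: the three identities in \eqref{eq:threeid} are verified by the same direct substitution using $\eta_i^2=Ns_i$ and $\sum_is_i=1$. Your additional checks of the Gram identity, the positive diagonal rescaling $c\mapsto(c_i/s_i)_i$, and $\pos\{g_i\}=M^{-1/2}\pos\{d_i\}$ spell out the transfer claims that the paper states without a separate argument.
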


\begin{proof}
Direct substitution:
$\frac1N\sum_i\xi_i\xi_i^{\top}=\sum_is_iM^{-1/2}d_id_i^{\top}M^{-1/2}=I$,
then $\frac1N\sum_i\eta_i\xi_i=\sum_is_iM^{-1/2}d_i=0$, and
$\frac1N\sum_i\eta_i^2=\sum_is_i=1$.
\end{proof}

Through self-realization, each system below is also a legal dataset, and in
its original coordinates the min-norm rule is the whitened-coordinate
Euclidean rule, so single-point selector outputs may be computed directly.

\begin{proposition}[the $(\mathrm R)$ counterexample]\label{prop:Rfail}
Let $\varepsilon=\frac1{100}$ and take the projective directions and masses
\[
\begin{aligned}
&d_0=(-1,-1,0,0),\qquad d_1=e_1,\qquad d_2=e_2,\\
&d_3=(0,0,-1,-1),\qquad d_4=e_3,\qquad d_5=e_4,\\
&d_6=-\varepsilon(e_1+e_3),\qquad d_7=-\varepsilon(e_2+e_4),
\end{aligned}
\]
\[
(s_0,\dots,s_7)=\tfrac1{2700}\,(399,400,401,399,400,401,100,200).
\]
Then $\sum_is_id_i=0$ and $M=\sum_is_id_id_i^{\top}\succ0$, so
Lemma~\ref{lem:realize} realizes a whitened system in $\R^4$. It is a
rectangle system: its positive circuits are exactly
$\{0,1,2\}$, $\{3,4,5\}$, $\{1,4,6\}$, $\{2,5,7\}$, and no subset of at
most five directions positively spans. Its four circuit-plane energies are
$\frac49,\frac49,\frac13,\frac{167}{450}$, all below $\frac12$; and its
anchor costs satisfy
\[
\min_it_i=\tfrac{32003064046811}{144192302726400}\approx0.2219>\tfrac15,
\quad
\min_{i<j}t_{ij}\approx0.4439>\tfrac13,
\quad
\min_{i<j<k}t_{ijk}\approx1.1097>1 .
\]
Hence both branches of \eqref{eq:R} fail, and so do the single- and
triple-anchor variants. The system is nevertheless cheap for the selector:
the single point $0$ gives ratio about $1.2219$, and an explicit five-point
strictly convex objective on the support $\{0,4,5,6,7\}$ gives exact ratio
$\frac{2452861}{2000000}=1.2264\ldots<2$.
\end{proposition}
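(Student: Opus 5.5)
The statement is a finite exact computation, so the plan is to verify each clause directly. I would use Lemma~\ref{lem:realize} for the transfer to a whitened system, and the combinatorial description from the proof of Proposition~\ref{prop:rectobs} for the circuit structure.

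\emph{Step 1: realization.} Check $\sum_is_id_i=0$ coordinatewise. In $e_1$ the contributions are $-399+400-\varepsilon\cdot100\cdot$ rescaled, so I would recompute with the common factor $\frac1{2700}$. The first coordinate is $-399+400-100\varepsilon\cdot$ scaled, and I expect the masses were tuned so that $s_6d_6$ and $s_7d_7$ absorb the imbalances $1$ and $2$ in units of $\frac1{2700}$. Concretely, $100\cdot\varepsilon=1$ cancels the $e_1$ and $e_3$ excess, and $200\cdot\varepsilon=2$ cancels the $e_2$ and $e_4$ excess. Next, $M\succ0$ holds because $d_1,d_2,d_4,d_5$ already span with positive masses. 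Lemma~\ref{lem:realize} then yields the whitened system and transfers dependences.

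\emph{Step 2: circuits and no five-point positive spanning.} Up to positive rescaling of directions, the configuration is exactly the set $G$ of Proposition~\ref{prop:rectobs}: $d_0\sim a_3$, $d_3\sim b_3$, $d_6\sim h_1$, $d_7\sim h_2$. Positive circuits, positive spanning, and support sizes are invariant under positive rescaling of individual vectors. So that proposition gives the four circuits $\{0,1,2\},\{3,4,5\},\{1,4,6\},\{2,5,7\}$ and the failure of positive spanning for every subset of at most five directions. It also confirms the rectangle branch of Definition~\ref{def:cross}, with crosses of types $(1,1)$ and $(2,2)$.

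\emph{Step 3: energies and anchor costs.} Since $\eta_i^2=Ns_i$, a plane energy $R(P)$ is the sum of $s_i$ over the indices whose direction lies in $P$. I would list the memberships: plane $\spn(e_1,e_2)$ contains $0,1,2$; plane $\spn(e_3,e_4)$ contains $3,4,5$; plane $\spn(e_1,e_3)$ contains $1,4,6$; plane $\spn(e_2,e_4)$ contains $2,5,7$. Summing masses gives $\frac{1200}{2700}=\frac49$, $\frac49$, $\frac{900}{2700}=\frac13$ and $\frac{1002}{2700}=\frac{167}{450}$, all below $\frac12$.

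For the anchor costs, note that the constraint $\ip{q}{\xi_i}=\eta_i$ reads $\ip{q}{v_i}=1$. The min-norm solution through a set $S$ therefore has $t_S=\mathbf1^{\top}G_S^{-1}\mathbf1$, where $G_S=(d_i^{\top}M^{-1}d_j)_{i,j\in S}$. This gives $t_i=1/(d_i^{\top}M^{-1}d_i)$, and similarly $t_{ij}$ and $t_{ijk}$ over independent pairs and triples. I would compute $M$ exactly in rationals and invert it. $M$ is not block-diagonal because $d_6$ and $d_7$ couple coordinates $1\leftrightarrow3$ and $2\leftrightarrow4$, but it splits into $2\times2$ blocks on $\{e_1,e_3\}$ and $\{e_2,e_4\}$ plus the rank-one couplings from $d_0$ and $d_3$. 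Then I would evaluate the finitely many Gram quantities and minimize. This step is the main obstacle: it is a sizable exact-arithmetic enumeration (8 singletons, 28 pairs, 56 triples). The exact fraction for $\min_it_i$ must come out as stated, and the pair and triple minima need only rigorous interval bounds above $\frac13$ and $1$. I would first use symmetry, swapping blocks together with the mass asymmetry, to cut the cases, then do the remaining exact rational computations by script.

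\emph{Step 4: cheap selections.} For the single point $0$, self-realization makes the min-norm output $q=v_0/\norm{v_0}^2$, so the ratio is $1+t_0\approx1.2219$ by \eqref{eq:ratio}. For the five-point claim, on support $\{0,4,5,6,7\}$ I would exhibit explicit weights and solve the $4\times4$ normal equations exactly. The features span, because $d_0,d_4,d_5,d_6$ are independent, so the objective is strictly convex. Evaluating $1+\norm{\hat q}^2$ must give $\frac{2452861}{2000000}$. Since the proposition only claims that such an objective exists, it suffices to produce the weights, for instance those found by numerically minimizing the risk over this support and rounding to rationals, and to certify the value exactly.
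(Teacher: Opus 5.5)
Your Steps 1--3 follow the paper's proof: coordinatewise mass identities, the circuit structure, plane energies as sums of masses, and $t_S=\mathbf1^{\top}K_S^{-1}\mathbf1$ evaluated in exact arithmetic. For the circuit structure, you reduce to Proposition~\ref{prop:rectobs} by noting that each $d_i$ is a positive multiple of the corresponding vector there. That is cleaner than the paper's recomputation of the dependence cone with the factors $\frac1{100}$. One bookkeeping fix: only $52$ of the $56$ triples are independent. For the four circuit triples $K_S$ is singular. The paper sets $t_S=+\infty$ there, because pairing $\ip{q}{v_i}=1$ with the strictly positive zero-sum relation makes the interpolation infeasible.

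Step~4 is where your plan would not deliver the statement. The fraction $\frac{2452861}{2000000}$ is not determined by the support $\{0,4,5,6,7\}$, and optimize-and-round will not find it.
\begin{itemize}
\item These five directions carry the single dependence $d_0=d_4+d_5+100\,d_6+100\,d_7$, with $K=202$.
\item By the stationarity computation in Lemma~\ref{lem:signcone}, the outputs of strictly positively weighted objectives on this support are exactly the points of the open sign cone $\{q:\ip{q}{v_0}>1,\ \ip{q}{v_j}<1,\ j=4,5,6,7\}$. The reversed sign pattern and the all-interpolating pattern both contradict $K>1$.
\item On this cone the infimum of $1+\norm{q}^2$ is $1+t_0\approx1.2219$. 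It is approached at $v_0/\norm{v_0}^2$, which satisfies the four side inequalities strictly but lies on the face $\ip{q}{v_0}=1$. So the infimum is not attained.
\end{itemize}
A numerical minimization therefore drifts to that face with degenerating weights. Rounding then returns a value slightly above $1.2219$ that depends on the rounding. The value $\frac{2452861}{2000000}\approx1.2264$ is the ratio at one particular interior point of the cone, namely the paper's choice.

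The missing idea is to identify the signed relation and apply Lemma~\ref{lem:signcone} directly.
\begin{enumerate}
\item Choose an explicit $q$ in the cone. Writing $\ip{q}{v_i}=y^{\top}d_i$ and $\norm{q}^2=y^{\top}My$, a rational $y$ gives a rational ratio.
\item Read off the weights $\alpha_0=t/(\eta_0^2e_0)$ and $\alpha_j=t\kappa_j/(\eta_j^2(-e_j))$.
\end{enumerate}
This certifies the substantive claim: a strictly convex five-point objective with ratio below $2$. Reproducing the exact stated fraction requires the paper's specific point, whose weights the paper leaves to its verification scripts.
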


\begin{proof}
The mass identities $-399+400-1=0$ and $-399+401-2=0$, once per coordinate
pair, give $\sum_is_id_i=0$; positive definiteness of the rational matrix
$M$ is a finite check. For the circuit structure, a nonnegative dependence
$\sum\lambda_id_i=0$ forces, coordinate by coordinate,
\[
\lambda_1=\lambda_0+\tfrac1{100}\lambda_6,\quad
\lambda_2=\lambda_0+\tfrac1{100}\lambda_7,\quad
\lambda_4=\lambda_3+\tfrac1{100}\lambda_6,\quad
\lambda_5=\lambda_3+\tfrac1{100}\lambda_7,
\]
so the dependence cone is generated by the four rays obtained by setting one
of $\lambda_0,\lambda_3,\lambda_6,\lambda_7$ to one and the others to zero;
these are the four listed circuits, and as in
Proposition~\ref{prop:rectobs} no five directions positively span. In the self-realized system $\eta_i^2=Ns_i$, so each plane energy is the
sum of the masses of its member indices:
$R(P_A)=\frac{399+400+401}{2700}=\frac49$, likewise $R(P_C)=\frac49$,
$R(P_{14})=\frac{400+400+100}{2700}=\frac13$ and
$R(P_{25})=\frac{401+401+200}{2700}=\frac{167}{450}$. The
anchor costs are the rational numbers
$t_S=\mathbf1^{\top}K_S^{-1}\mathbf1$ with
$K_S=(d_i^{\top}M^{-1}d_j)_{i,j\in S}$, evaluated in exact arithmetic over
all $8$ singletons, all $28$ pairs, and all $52$ triples with independent
features; the four circuit triples are linearly dependent, and their
interpolation constraints are infeasible, since pairing
$\ip{q}{v_i}=1$ with a strictly positive zero-sum relation would equate a
positive sum with zero, so $t_S$ is defined as $+\infty$ there. The minima
and
the comparisons with $\frac15,\frac13,1$ are exact comparisons of
fractions, reproduced by the verification scripts. The five-point
objective is the sign-cone realization of Lemma~\ref{lem:signcone} applied
to the signed relation $d_0=d_4+d_5+100\,d_6+100\,d_7$; its stationarity,
positive weights, Hessian determinant and full-data risk are again exact
rational identities checked by the same scripts.
\end{proof}

The failure mode is instructive. The counterexample lives in a
near-degenerate boundary layer: the cross directions are smaller than the
block directions by a factor of $100$, yet carry constant residual mass.
Standard log-random parameter sweeps miss this layer; ours did, over
hundreds of thousands of samples, before the construction above was found by
analyzing the boundary. And the cheap five-point selection through the
signed relation $d_0=d_4+d_5+100\,d_6+100\,d_7$ is exactly an instance of
Lemma~\ref{lem:signcone}. This is what motivated the extremal-basis
mechanism of Section~\ref{sec:fw45}.

\subsection{Cheap points need not be generated}

A last tempting shortcut: hope that every rectangle system has a point of
interpolation cost $t_i\le1$ whose direction is strictly $4$-generated by
other rays, so that Lemma~\ref{lem:signcone} applies to it directly. This
also fails, robustly.

\begin{proposition}[threshold family]\label{prop:c2fail}
For $\varepsilon>0$ define directions in $\R^4$
\[
\begin{aligned}
d_0&=-\varepsilon(e_1+e_2), & d_1&=e_1,\ d_2=e_2,\\
d_3&=-\varepsilon(e_3+e_4), & d_4&=e_3,\ d_5=e_4,\\
d_6&=-\varepsilon(e_1+e_3), & d_7&=-\varepsilon(e_2+e_4),
\end{aligned}
\]
with masses $s_0=s_3=s_6=s_7=\frac1{4(1+2\varepsilon)}$ and
$s_1=s_2=s_4=s_5=\frac{\varepsilon}{2(1+2\varepsilon)}$, realized as a
whitened system through Lemma~\ref{lem:realize}. Exactly the four points
$d_0,d_3,d_6,d_7$ are strictly $4$-generated, and the four side points
$d_1,d_2,d_4,d_5$ are not generated by the other rays at all. The exact
Mahalanobis norms are
\[
\norm{v}^2=\frac{2\varepsilon(3\varepsilon+2)}{\varepsilon+1}
\ \text{(generated points)},\qquad
\norm{v}^2=\frac{\varepsilon^2+4\varepsilon+2}{\varepsilon(\varepsilon+1)}
\ \text{(side points)} .
\]
For $0<\varepsilon<\frac{\sqrt{33}-3}{12}\approx0.2287$, all generated
points have $t=1/\norm{v}^2>1$ and all side points have $t<1$: no point is
simultaneously cheap and generated. At $\varepsilon=\frac1{100}$ the values
are exactly $t=\frac{5050}{203}$ and $t=\frac{101}{20401}$.
\end{proposition}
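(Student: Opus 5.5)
The plan is a direct computation that exploits the symmetry of the configuration. The steps, in order: verify the realization hypotheses of Lemma~\ref{lem:realize}, invert $M$ in closed form, read off the Mahalanobis norms and the costs $t=1/\norm{v}^2$, and finally settle the generation claims with explicit positive relations and separating functionals.

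\emph{Realization and $M$.} Write $S=\frac1{4(1+2\varepsilon)}$ for the mass of the small directions and $T=\frac{\varepsilon}{2(1+2\varepsilon)}$ for the mass of the side points.
\begin{itemize}
\item Zero mean: in each coordinate the side point contributes $T$ and the two small directions through that coordinate contribute $-2S\varepsilon$. Since $T-2S\varepsilon=0$, we get $\sum_is_id_i=0$.
\item Structure of $M$: each small direction contributes $S\varepsilon^2$ on a $2\times2$ block. Hence $M=\alpha I+\beta A$ with $\alpha=T+2S\varepsilon^2$ and $\beta=S\varepsilon^2$, where $A$ is the adjacency matrix of the $4$-cycle $1\!-\!2\!-\!4\!-\!3\!-\!1$.
\item Spectrum: $A$ has eigenvalues $2,0,0,-2$, with eigenvectors $\mathbf1$, $(1,1,-1,-1)$, $(1,-1,1,-1)$ and $(1,-1,-1,1)$. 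So $M\succ0$ once $\alpha>2\beta$, i.e.\ $T>0$.
\item Inverse: $M^{-1}$ is the corresponding sum of eigenprojections with reciprocal eigenvalues.
\end{itemize}

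\emph{Norms.} Using $\ip{v_i}{v_j}=d_i^{\top}M^{-1}d_j$ from Lemma~\ref{lem:realize}:
\begin{itemize}
\item For a generated direction such as $d_0=-\varepsilon(e_1+e_2)$, decompose $e_1+e_2$ along the eigenbasis. It has components only on $\mathbf1$ (eigenvalue $\alpha+2\beta$) and on $(1,1,-1,-1)$ (eigenvalue $\alpha$). Summing $\varepsilon^2$ times the squared components divided by the eigenvalues and simplifying should give $\frac{2\varepsilon(3\varepsilon+2)}{\varepsilon+1}$.
\item For a side point $e_1$, the vector has weight $\frac14$ on every eigenprojection, giving $\frac14\bigl(\frac1{\alpha+2\beta}+\frac2\alpha+\frac1{\alpha-2\beta}\bigr)$. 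This should simplify to $\frac{\varepsilon^2+4\varepsilon+2}{\varepsilon(\varepsilon+1)}$.
\item The symmetries of the cycle carry each case to the others in its class.
\end{itemize}
The interpolation cost of a single point is $t=1/\norm{v}^2$, since the min-norm solution of $\ip{q}{v}=1$ has norm $1/\norm v$.
\begin{itemize}
\item Generated points: $t>1$ reduces to $2\varepsilon(3\varepsilon+2)<\varepsilon+1$, i.e.\ $6\varepsilon^2+3\varepsilon-1<0$. Its positive root is $\frac{\sqrt{33}-3}{12}$.
\item Side points: $t<1$ reduces to $\varepsilon^2+4\varepsilon+2>\varepsilon^2+\varepsilon$, which always holds.
\item Substituting $\varepsilon=\frac1{100}$ gives the two stated fractions.
\end{itemize}

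\emph{Generation.} This is where I expect the only real care is needed, namely the negative claim.
\begin{itemize}
\item Positive side: exhibit the relation $d_0=d_6+d_7+\varepsilon d_4+\varepsilon d_5$, which is checked coordinatewise. Its four generators are independent, since in coordinates they form a triangular-type matrix with nonzero determinant. The analogous relations follow for $d_3,d_6,d_7$ by the cycle symmetry.
\item Negative side: for each side point give a separating functional. For $d_1$ take $f=e_1^{*}$. Then $f(d_1)=1>0$, while $f(d_0)=f(d_6)=-\varepsilon$ and $f$ vanishes on $d_2,d_3,d_4,d_5,d_7$. So $f\le0$ on $\pos$ of the other rays, and $d_1$ is not in that cone at all, a fortiori not strictly $4$-generated. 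The same argument with $e_j^{*}$ handles the other side points.
\end{itemize}
Since the transfer of relations and cones to the $v_i$ and $g_i$ is guaranteed by Lemma~\ref{lem:realize}, this completes the classification. The main obstacle is merely the algebraic simplification of the two norms, which I would double-check in exact rational arithmetic at $\varepsilon=\frac1{100}$.
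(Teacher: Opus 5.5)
Your proposal is correct and follows the paper's proof closely: the same zero-mean check, the same generation identities with determinant $\varepsilon^2$ (you get three of them from the symmetry of the $4$-cycle), and a non-generation argument via $e_j^{*}$ that is the paper's coordinate constraint $\lambda_1=\varepsilon(\lambda_0+\lambda_6)$ in dual form. The one real addition is your spectral inversion of $M=\alpha I+\beta A$, with eigenvalues $\frac{\varepsilon}{2}$, $\alpha$, $\alpha$ and $T$, which actually derives the two Mahalanobis norms that the paper only asserts as exact-arithmetic evaluations; both simplify exactly as you predict.
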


\begin{proof}
The masses are positive, sum to $1$, and satisfy $\sum_is_id_i=0$
coordinatewise. A nonnegative dependence $\sum\lambda_id_i=0$ forces
$\lambda_1=\varepsilon(\lambda_0+\lambda_6)$,
$\lambda_2=\varepsilon(\lambda_0+\lambda_7)$,
$\lambda_4=\varepsilon(\lambda_3+\lambda_6)$,
$\lambda_5=\varepsilon(\lambda_3+\lambda_7)$,
with $\lambda_0,\lambda_3,\lambda_6,\lambda_7$ free, so the dependence cone
is generated by the four triangle circuits
$\{0,1,2\},\{3,4,5\},\{6,1,4\},\{7,2,5\}$. The four strict generations are
the exact identities
\[
\begin{aligned}
d_0&=d_6+d_7+\varepsilon d_4+\varepsilon d_5, &
d_3&=d_6+d_7+\varepsilon d_1+\varepsilon d_2,\\
d_6&=d_0+d_3+\varepsilon d_2+\varepsilon d_5, &
d_7&=d_0+d_3+\varepsilon d_1+\varepsilon d_4,
\end{aligned}
\]
with independent generators; for the first,
$\det[d_6,d_7,d_4,d_5]=\varepsilon^2\ne0$, and similarly for the others. A
side point is not generated: if $d_1=\sum_{j\ne1}\kappa_jd_j$ with
$\kappa_j\ge0$, the dependence with $\lambda_1=1$ and
$\lambda_j=-\kappa_j$ contradicts
$\lambda_1=\varepsilon(\lambda_0+\lambda_6)\le0$. The Mahalanobis norms
$\norm{v_i}^2=d_i^{\top}M^{-1}d_i$ evaluate, in exact arithmetic, to the two
displayed closed forms, and the threshold is the positive root of
$6\varepsilon^2+3\varepsilon-1=0$, namely
$\varepsilon=\frac{\sqrt{33}-3}{12}$: below it the generated points have
$\norm{v}^2<1$ and the side points $\norm{v}^2>1$. The exact values at
$\varepsilon=\frac1{100}$ follow by substitution; all identities are also
verified by the verification scripts.
\end{proof}

The extremal-basis lemma sidesteps this obstruction because it does not fix
the anchor in advance: it calibrates a basis on the side lines first, and
anchors at the maximal evaluation, which is generated precisely when it
matters. On the family above, the mechanism is not needed at all: the
triangle $\{0,1,2\}$ with weights $(\frac12,\frac14,\frac14)$ has gradient
sum zero, and the min-norm output over its rank-$2$ objective is $q=0$
exactly, so the family satisfies
$\Ls(5;\Astar)=\Ls$ and is no threat to any risk bound.

\section{Constructive selection algorithms}\label{sec:algo}

All upper bounds in this paper are constructive in the following sense.
Working in exact real arithmetic over the input, and given
$\varepsilon>0$, each procedure below outputs a selection whose risk ratio
is at most the stated bound plus $\varepsilon$, using a number of arithmetic
operations polynomial in $N$ for the fixed dimensions concerned. The
$\varepsilon$ enters only through the branches that use penalty limits or
sign-cone closures, where a finite penalty parameter or a finite
interior point must be chosen as a function of $\varepsilon$; all other
branches output selections attaining their bounds exactly. We record the
procedures; none has been optimized for running time.

\paragraph{Budget $2d-1$.}
Given $D$: compute the feature span and $\Ls$; in the rank-deficient or
realizable cases return the corresponding exact certificates. Otherwise
whiten and form the gradients. If the nonzero gradients span a proper
subspace, return the Steinitz certificate plus rank-completing zero-gradient
points. Otherwise repeatedly delete gradients while the remainder still
positively spans, testing positive spanning by one linear program per
deletion; the terminal set is a positive basis of size at most $2d$. If its
size is at most $2d-1$, solve for the strictly positive dependence and
return it. If its size is $2d$, scan the remaining gradients: any gradient
off the two-per-line structure yields, by the construction inside
Lemma~\ref{lem:rig}, a positively spanning set of at most $2d-1$ gradients,
and we return its certificate. Next check for a point with
$\eta_i=0\ne\xi_i$: if one exists, return the certificate of the
zero-gradient branch of Theorem~\ref{thm:wend}, a canceling pair on each of
$d-1$ lines plus this point. Only then do all features lie on the $d$
lines; compute the per-line residual energies, select the best single point
on the lightest line and canceling pairs elsewhere.

\paragraph{Budget $4$, dimension $3$.}
After the same preprocessing: if an anchor point ($\eta_i=0\ne\xi_i$)
exists, run the $d=2$ procedure on the orthogonal complement and penalize
the anchor with a large finite weight. Otherwise extract a
minimum-cardinality positively spanning set. Size $4$: return the circuit.
Size $5$: identify the two extremal circuits of the dependence cone; in the
coupled case, classify the extra gradients by their minimal cone supports,
detect a four-point rank-$3$ circuit if present, otherwise compute the at
most three circuit-plane energies, take the plane maximizing $R(P)$, and
pair its triangle with the point minimizing
$\eta_j^2/\ip{u}{\xi_j}^2$ off the plane; in the split case, first test every nonzero gradient for membership in
$V\cup L$: a gradient outside triggers the reduction of
Theorem~\ref{thm:fw34} to an exact four-point certificate or to the coupled
case; only when all gradients lie in $V\cup L$ compare
$R_L$ against $2R_V$ and return the better of the two selections of
Proposition~\ref{prop:split}. Size $6$: handle cross-line gradients as in
Theorem~\ref{thm:fw34}, otherwise select a canceling pair on the
heaviest line and best single points elsewhere.

\paragraph{Budget $5$, dimension $4$.}
The branch structure of Theorem~\ref{thm:fw45} is algorithmic in the same
way. The new ingredients are the following. A minimum-cardinality
positively spanning subset is found by enumerating gradient subsets of size
at most $2d$ in increasing size and testing positive spanning by one linear
program each; for fixed dimension this is polynomially many checks in $N$.
A support-minimal cone representation is obtained from any feasible
nonnegative representation by repeatedly deleting an element whose removal
keeps feasibility, or, for fixed dimension, by enumerating candidate
supports of size at most $4$ directly; an arbitrary feasible solution of
the linear program need not have minimal support, and the branch selection
depends on the support size. In the two-triangle case every
extra gradient is first classified by its support size, returning the
five-point exact certificate at size $4$ and the completion selection at
size $3$ before any cross analysis. The size-$6$ and size-$7$ bases are
classified through the extreme rays of their dependence cones. In the
rectangle branch, the extremal basis is computed by fixing a reference
direction on each side line, writing each point on the line as
$v=\theta\cdot(\text{reference})$, taking the point of maximal absolute
scalar $|\theta|$ on each of the four side lines, and solving one
$4\times4$ linear system for $q_B$. If $\norm{q_B}\le1$, return the
four-point interpolation. Otherwise anchor at the maximal evaluation
$|z_i|$ and read off its strict generation $v_i=\sum_j\kappa_jw_j$; if
$K=\sum_j\kappa_j=1$, return the five-point interpolation at
$q_\ast$; if $K>1$, pick the interior point $q_c$ with side values
$c\in(1/K,1)$, choose $t>0$ small enough that the risk of
$(1-t)q_\ast+t\,q_c$ is within the prescribed $\varepsilon$, and return the
sign-cone weights of Lemma~\ref{lem:signcone} at that point.

\paragraph{Block systems.}
For orthogonal circuit-block systems with known blocks, the optimal budget
allocation of Theorem~\ref{thm:lp} is computed by sorting the products
$r_jR_j$ and completing the $k$ largest blocks.

\section{Discussion and open problems}\label{sec:discussion}

\paragraph{The conjecture.}
All exact values proved here, together with the block-model minimax value of
Section~\ref{sec:block}, support the following.

\begin{conjecture}\label{conj:gamma}
$\Fw(d,d+k)=1+\Gam_{d,k}$ for all $1\le k\le d-1$.
\end{conjecture}

The conjecture holds at $k=d-1$ (Corollary~\ref{cor:endpoint}), at
$(d,k)=(3,1)$ (Theorem~\ref{thm:fw34}) and at $(d,k)=(4,1)$
(Theorem~\ref{thm:fw45}); and its value is the exact minimax answer on the
class of orthogonal circuit-block systems (Corollary~\ref{cor:lp-gamma}).
As auxiliary evidence, the randomized stress tests in dimensions two and
three, and exhaustive support enumeration with numerical and randomized
weight search on the structured instances in dimensions four and five,
with seeded scripts available from the author, produced no instance
exceeding $1+\Gam_{d,k}$; further exploratory searches in higher
dimensions, not packaged reproducibly, found none either. Per-weight solves
are exact, but the weight search is not a global optimization, and we cite
all of this as diagnostics, not as verification. The smallest open cell is $(d,n)=(4,6)$, with conjectured
value $\frac32$.

\paragraph{What remains for a general upper bound.}
The proofs of Theorems~\ref{thm:fw34} and~\ref{thm:fw45} share a shape.
Away from degenerate branches, the nonzero gradients contain a minimal
positive basis; the basis decomposes into circuits; and the losses are
covered by circuit flats plus a controlled set of coupling gradients. When
the couplings are absent, the system is an orthogonal block system, and
Theorem~\ref{thm:lp} is exact. When couplings are present, they open cheaper
selections; in every case analyzed here, coupling helps the selector, and
the worst case degenerates back to the orthogonal block extremals. The
extremal-basis lemma (Lemma~\ref{lem:extremal}) is dimension-free. As a
heuristic, its hypothesis is obstructed on simplex-block extremals with a
block of dimension at least two: calibrating a basis inside the blocks
leaves a missed vertex with evaluation of magnitude equal to its block
dimension, and no cross ears exist to generate it. On all-one-dimensional
blocks the hypothesis can hold vacuously, and the lemma then gives the
harmless bound of excess $1$. Making this interplay precise is part of the
program below. Turning this dichotomy into a theorem, a
statement that every gradient system is dominated by a circuit-block system
that is no better for the selector, is in our view the central remaining
step; it would prove Conjecture~\ref{conj:gamma} in full.

\paragraph{Further questions.}
Three natural directions. First, the unweighted and vector-valued variants
of the problem (Questions~3 and~4 of \citealp{HMSYopen}) interact with the
present geometry through the same gradient structure, and it would be
interesting to see which parts transfer. Second, our selections are
constructive but not optimized; the complexity of computing an optimal
weighted selection for a given dataset is open. Third, the classifications
of small positive bases developed here, in particular the size-$7$
common-anchor form in $\R^4$, may be of independent use in derivative-free
optimization, where positive spanning sets are a basic primitive.

\bibliography{refs}

\end{document}